\newtheorem{theorem}{Theorem}[section]
\newtheorem{proposition}[theorem]{Proposition}
\newtheorem{corollary}[theorem]{Corollary}
\newtheorem{lemma}[theorem]{Lemma}
\newtheorem{example}{Example}[section]
\newcounter{rmnum}
\newenvironment{romannum}{\begin{list}{{\upshape (\roman{rmnum})}}{\usecounter{rmnum}
			\setlength{\leftmargin}{12pt}
			\setlength{\rightmargin}{10pt}
			\setlength{\itemindent}{-1pt}
	}}{\end{list}}
\def\E{\mathbb{E}}
\def\Pr{\mathbb{P}}
\def\1{\mathbf{1}}
\def\N{\mathbb{N}}
\def\epsy{\varepsilon}
\def\bfmR{\boldsymbol{R}}
\def\Expect{{\mathbb E}}
\newcommand{\field}[1]{\mathbb{#1}}
\def\ind{\field{I}}
\def\clA{{\mathcal A}}
\def\clF{{\mathcal F}}
\def\clP{{\mathcal P}}
\def\barGamma{{\overline{ {\Gamma} }}}
\def\bfmalpha{\boldsymbol{\alpha}}
\def\bfmdelta{\boldsymbol{\delta}}
\def\eqdef{\mathbin{:=}}
\DeclareMathOperator*{\argmax}{arg\,max}
\DeclareMathOperator*{\argmin}{arg\,min}
\newcommand{\kibitz}[2]{\ifnum\Comments=1{\textcolor{#1}{\textsf{\footnotesize #2}}}\fi}
\definecolor{darkred}{rgb}{0.7,0,0}
\definecolor{darkgreen}{rgb}{0.0,0.5,0.0}
\definecolor{darkblue}{rgb}{0.0,0.0,0.5}
\definecolor{teal}{rgb}{0.0,0.5,0.5}
\begin{document}

\title{A Bit Better? \\ Quantifying Information for Bandit Learning}
\author{Adithya M. Devraj
\\ {adevraj@stanford.edu} 
\and Benjamin Van Roy
\\ {bvr@stanford.edu}
\and Kuang Xu
\\ {kuangxu@stanford.edu}
}
\date{
\large 
Stanford University, Stanford, CA 94305
\\[1em]
\today
}

\maketitle

\begin{abstract}
\normalsize
The information ratio offers an approach to assessing the efficacy with which an agent balances between exploration and exploitation. Originally, this was defined to be the ratio between squared expected regret and the mutual information between the environment and action-observation pair, which represents a measure of information gain. Recent work has inspired consideration of alternative information measures, particularly for use in analysis of bandit learning algorithms to arrive at tighter regret bounds.  We investigate whether quantification of information via such alternatives can improve the realized performance of information-directed sampling, which aims to minimize the information ratio.
\\[1em]
{\bf Acknowledgement:} Financial support from Army Research Office (ARO) grant W911NF2010055 is gratefully acknowledged.
\end{abstract}

\section{Introduction}
\label{s:intro}

We consider the multi-arm bandit problem with independent arms.  At each time, an agent executes an action and observes a reward.  The objective is to maximize expected cumulative reward over a long time horizon.  This problem crystallizes the \emph{exploration-exploitation} dilemma.

The agent's choice of action must balance between expected immediate reward and information that may increase subsequent rewards.  This calls for quantification of the information gain, or equivalently, the reduction in uncertainty, that results from observing the outcome of an action.  Taking cue from information theory, it is natural to consider \emph{Shannon entropy}.  This forms the basis of the \emph{information ratio}, as originally introduced in \cite{rusvan14}.  This statistic is the ratio between squared expected instantaneous regret and the mutual information between the multi-armed bandit and the action-reward pair, or equivalently, the reduction in entropy resulting from observing the action and reward.  It has been used in the analysis and design of several bandit algorithms \cite{rusvan14,rusvan16,rusvan18l,rusvan18s,latsze19,latgyo20,donvan18,luvan19}.

Information-directed sampling (IDS), as originally introduced by \cite{rusvan14}, selects at each time a randomized action that minimizes the information ratio.  The algorithm was shown to achieve, for each $K$ and $T$, worst case Bayesian regret in excess of optimal by a factor of at most $\sqrt{\log(K)}$, where $T$ is the horizon and $K$ is the number of arms. 
Indeed, Shannon entropy measures bits acquired by the agent, and while this notion is fundamental to the field of communication, it is not immediately clear whether this is the right notion for balancing exploration and exploitation in bandit learning.

In recent work \cite{latsze19}, it was shown that the IDS algorithm with information ratio defined using another information measure known as \emph{Tsallis entropy} leads to a better regret bound, that is optimal up to a constant factor. 

This paper contributes to understanding how to best quantify information for bandit learning. Specifically, we investigate whether the performance of IDS with Tsallis entropy exceeds that with Shannon entropy, as suggested by the aforementioned regret bounds.  Alternatively, the apparent advantage could just be a figment of current analytic techniques.  Our findings are as follows:
\begin{romannum}
\item 

We show that it is impossible to improve the aforementioned regret bound for IDS with Shannon entropy, using the current template for analysis based on the information ratio.

\item 

We propose a modification to this template, which we use to obtain order optimal bounds for Thompson sampling using Shannon entropy definition of information ratio, which was not possible using the previous template.

\item 

We present a computational study which suggests that, despite the gap in regret bounds, the realized performance of IDS with Shannon entropy approximately matches that of IDS with Tsallis entropy.

\end{romannum}

\subsection*{Literature Review}

One of the first finite-time regret bounds for Thompson sampling was derived in \cite{agrgoy13}. Building on the techniques of \cite{audbub09,rusvan14l}, it was first shown in \cite{bubliu14} that Thompson sampling achieves an optimal regret bound, up to a constant factor.

The concepts of information ratio and IDS was first introduced in \cite{rusvan14,rusvan18l}. A frequentist version of the algorithm was later developed in \cite{kirkra18}, and was extended to a linear partial monitoring setting in \cite{kirlatkra20}. An asymptotically optimal version of IDS was recently introduced in \cite{kirlatversze20}.

It was first shown in \cite{rusvan16} that an information ratio based analysis can be used to obtain regret bounds for Thompson sampling. Identical analysis, but with a different information measure was shown to produce a tighter regret bound for Thompson sampling in \cite{latsze19}. Information ratio based analysis has also been useful in adversarial settings \cite{bubdekkorper15,zimlat19,lat20,latsze20e}.
\nocite{latsze20b,bubaud12}

\section{Problem Formulation}
\label{s:setup}

Consider the probability space $(\Omega, \clF, \Pr)$. All random variables under consideration are defined with respect to this probability space.

For $K \geq 2$, let $\mathcal{P}$ be the set of probability measures on $[0,1]^K$, and let $P_*$ be a (deterministic) probability distribution over $\mathcal{P}$.  Denote $p^*$ to be a random variable that takes values in $\clP$ such that $\Pr(p_* \in \cdot ) = P_*(\cdot)$.

Let $\clA = \{1,\ldots,K\}$ denote the action set, and let the optimal action $A_*$ be a random element of $\mathcal{A}$ that satisfies 
\[
A_* \in \argmax_{a \in \clA} \int_{[0,1]^K} r_a p_*(dr) 
\]
with $r_a$ denoting the $a^{\text{th}}$ 
component of $r \in [0,1]^K$.
Let $\bfmR \eqdef \{R_t:t=1,\ldots,T\}$ be a random sequence that is i.i.d. conditioned on $p_*$, with each element distributed according to $p_*$.  In other words, for all $s \neq t$, $R_s$ is independent of $R_t$ conditioned on $p_*$, and
\[
\Pr(R_s \in \cdot | p_*) = \Pr(R_t \in \cdot | p_*) = p_*(\cdot)
\]  
We also assume that the reward sequence $\bfmR$ is independent of $A_*$, conditioned on $p_*$, and define 
\[
R_* = \max_{a \in \clA} \int_{[0,1]^K} r_a p_*(dr)
\]

\subsection{Policy and Bayesian Regret}

At each time-step $t \geq 0$, an agent selects an action $A_t$ based on the history of observations $H_t$, and receives a reward $R_{t+1,A_t}$, where
\[
H_t \! = \! \{A_s, R_{s+1, A_s}:s = 0,1,2,\ldots,t-1\}\,, \,\, 0 \leq t \leq T-1
\]
and $R_{s, a}$ denotes the $a^{\text{th}}$ component of $R_{s} \in [0,1]^K$.
Formally, a policy $\pi$ is a deterministic function, 
where, for each $t \geq 1$, $\pi(H_t)$ specifies a probability distribution over the action set $\clA$. 
With abuse of notation we will denote this distribution as $\pi$, where $\pi(a) = \Pr(A_t = a | H_t)$ denotes the probability with which the agent chooses action $a$, given the observed history $H_t$. Note that $H_0 = \emptyset$.

For the Thompson sampling agent, the action sequence $\{A_t:t=0,1,\ldots,T-1\}$ satisfies \cite{tho33,rusvankazosbwen17},
\begin{equation}
\pi^{\mathrm{TS}}(a) = \Pr(A_t = a | H_t) = \Pr(A_* = a | H_t)
\label{e:TS_prob_matching}
\end{equation}

The instance regret associated with a policy $\pi$ is defined to be the regret of the agent conditioned on $p_*$:
\begin{equation*}
\mathrm{Regret}(T, p_*,\pi) = \E\left[\sum_{t=0}^{T-1} (R_* - R_{t+1, A_t}) \Big| p_*\right]
\label{e:regret_instance}
\end{equation*}
The objective of interest in this work is the Bayesian regret, that is an expectation over the randomness of $p_*$:
\begin{equation}
\mathrm{Regret}(T, \pi) = \E\left[\sum_{t=0}^{T-1} (R_* - R_{t+1, A_t})\right]
\label{e:regret_bayes}
\end{equation}
where we have overloaded notation with an understanding that the definition of $\mathrm{Regret}$ depends on its arguments.

\subsection{Notations and Definitions}

We will denote by short hand  $\Pr_t(\cdot) = \Pr(\cdot | H_t ) = \Pr(\cdot | A_0, R_{1, A_0}, \ldots, A_{t-1}, R_{t, A_{t-1}})$, and $\Expect_t[\cdot] = \Expect[\cdot | H_t]$. 

The relative entropy $D_{\mathrm{KL}} (u || v)$ between probability measures $u$ and $v$ on the same measurable space is
\begin{equation}
D_{\mathrm{KL}} (u || v) =
\begin{cases}
\int \log \Big( \frac{du}{dv}\Big) \, du \qquad 
& \text{if} \quad u \ll v
\\
\infty \qquad & \text{otherwise}
\end{cases}
\label{e:KL}
\end{equation}

For a convex function $F: \Re^K \to \Re \cup \{\infty\}$, we denote by $\mathrm{dom}(F) = \{u: F(u) < \infty\}$ the domain of $F$. For a convex or differentiable $F$, the Bregman divergence $D_F:  \mathrm{dom}(F) \times \mathrm{dom}(F) \to [0,\infty]$ is defined as
\begin{equation}
D_F(u, v) \eqdef F(u) - F(v) - \nabla_{u-v} F(v) \,, \,\, u,v \in \Re^K
\label{e:BregmanDiv}
\end{equation}
where $\nabla_{u-v} F(v)$ denotes the directional derivative of $F$ in the direction $u-v$, at $v$. Throughout, we will refer to $F$ as the \emph{potential function}. 

We will denote $\Delta^{K-1}$ to be the $(K\!-\!1)$-dimensional probability simplex: $\Delta^{K-1} = \{ u \in [0,1]^K : \| u \|_1 = 1\}$, and
\begin{equation}
{{ \mathrm{diam}_F(\Delta^{K-1}) \eqdef \sup_{u,v \in \Delta^{K-1}} F(u) - F(v)}}
\label{e:diam_F}
\end{equation}

The relative entropy $D_{\mathrm{KL}} (u || v)$ between categorical distributions $u,v \in \Delta^{K-1}$ is the
Bregman divergence $D_F(u,v)$ associated with the unnormalized negentropy potential
\begin{equation}
F(u) = \sum_{i\in \clA: u_i > 0} (u_i \log(u_i) - u_i)
\label{e:negentropy}
\end{equation}
Letting $F$ be the $1/2$-Tsallis entropy, we have
\begin{align}
F(u) & = - 2 \sum_{i \in \clA} \sqrt{u_i}
\label{e:halftsallis}
\\
D_F(u,v) & = \sum_{i \in \clA: v_i > 0} \frac{(\sqrt{u_i} - \sqrt{v_i})^2}{\sqrt{v_i}}
\nonumber
\end{align}

\subsection{Information Ratio and Information Directed Sampling}

Given $P_*$, and a potential function $F$, denote $\Delta_t(a)$ to be the expected instantaneous regret of taking action $a \in \clA$ at time $t$: 
$$\Delta_t(a) \eqdef  \Expect_t \big[ R_{*} - R_{t+1,a}   \big] $$
and $g_t^F(a)$ to be the expected reduction in entropy,
$$g_t^F(a) \eqdef \Expect_t  \big[ D_F \big( \Pr_{t+1}(A_* \in \cdot ), \Pr_t(A_* \in \cdot) \big ) \mid A_t = a \big]$$
For a policy $\pi$, we overload the notation for $\Delta_t(\cdot)$ and $g_t^F(\cdot)$ and denote
\[
\begin{aligned}
\Delta_t(\pi) & = \sum_{a \in \clA} \pi(a) \Delta_t(a)
\\
g_t^F(\pi) & = \sum_{a \in \clA} \pi(a) g_t^F(a)
\end{aligned}
\]

For any potential $F$ that satisfies $\mathrm{diam}_F(\Delta^{K-1}) < \infty$, the information ratio $\Gamma_t^F (\pi)$ associated with policy $\pi$ at time-step $t$ is
\begin{equation}
\Gamma_t^F (\pi) 
\eqdef \frac{\big[ \Delta_t(\pi)  \big]^2}{g_t^F(\pi)}
\label{e:IR}
\end{equation}
The IDS algorithm greedily minimizes the information ratio at each time-step \cite{rusvan14,rusvan18l}:
\begin{equation}
\pi^{\mathrm{IDS}} \in \argmin_{\pi} \,\, \Gamma_t^F (\pi) 
\label{e:IDS}
\end{equation}
The original IDS algorithm in \cite{rusvan14,rusvan18l} considered the special case of $F$ being the negentropy potential. We will refer to the resulting algorithm as Shannon-IDS or
$\mathrm{NDS}$. Extension to $F$ being $1/2$-Tsallis entropy was considered in \cite{latsze19}. We refer to the resulting algorithm as Tsallis-IDS or $\mathrm{TDS}$.

\section{Information Ratio and Bayesian Regret}
\label{s:IR_BR}

Here, we review a general information ratio based analysis technique that can be used to obtain upper bounds for the Bayesian regret of any policy. 

Given $T \in \N$, and  $F: \Re^{K} \to \Re \cup \{\infty\}$, denote $\barGamma^F_T(\pi)$ to be the average expected information ratio corresponding to policy $\pi$:
\begin{align}
& \barGamma^F_T(\pi) \eqdef \frac{1}{T} \,\, \Expect \left [ \sum_{t=0}^{T-1}  \Gamma_t^F (\pi) \right]
\label{e:IR_avg}
\end{align}

Theorem~\ref{t:Rprog} provides an upper bound for the regret of any policy $\pi$ in terms of $\barGamma^F_T(\pi)$.
\begin{theorem}
\label{t:Rprog}
For any policy $\pi$, $T \in \N$, and $K\geq2$,
\begin{equation}
{\mathrm{Regret}}(T, \pi) \leq \sqrt{ \barGamma^F_T(\pi) \cdot {\mathrm{diam}_F(\Delta^{K-1})} \cdot T  }
\label{e:Rprog}
\end{equation}
where $F$ is convex, and satisfies $\mathrm{diam}_F(\Delta^{K-1}) < \infty$.
\end{theorem}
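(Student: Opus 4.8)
The plan is to run the standard information-ratio machinery: express the regret as a sum of per-step gaps, factor each gap through the information ratio, apply Cauchy--Schwarz twice, and then separately bound the total information gain by $\mathrm{diam}_F(\Delta^{K-1})$ via a telescoping argument. First I would rewrite the objective in terms of the gaps $\Delta_t(\pi)$. Since $A_t\sim\pi(\cdot)$ given $H_t$, the tower property gives ${\mathrm{Regret}}(T,\pi)=\Expect\big[\sum_{t=0}^{T-1}\Delta_t(\pi)\big]$, and $\Delta_t(\pi)\geq 0$ because $R_*=\max_a\int r_a\,p_*(dr)\geq\int r_a\,p_*(dr)$ for every $a$, so $\Expect_t[R_*]\geq\Expect_t[R_{t+1,a}]$. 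By the definition \eqref{e:IR} of the information ratio we have $[\Delta_t(\pi)]^2=\Gamma_t^F(\pi)\,g_t^F(\pi)$, hence $\Delta_t(\pi)=\sqrt{\Gamma_t^F(\pi)}\,\sqrt{g_t^F(\pi)}$. Applying Cauchy--Schwarz first to the (random) sum over $t$ and then to the outer expectation yields
\begin{equation*}
{\mathrm{Regret}}(T,\pi)\leq\Expect\Big[\sqrt{\textstyle\sum_t\Gamma_t^F(\pi)}\,\sqrt{\textstyle\sum_t g_t^F(\pi)}\Big]\leq\sqrt{\Expect\big[\textstyle\sum_t\Gamma_t^F(\pi)\big]}\cdot\sqrt{\Expect\big[\textstyle\sum_t g_t^F(\pi)\big]}.
\end{equation*}
The first factor equals $\sqrt{T\,\barGamma^F_T(\pi)}$ directly from \eqref{e:IR_avg}, so the whole theorem reduces to the single estimate $\Expect\big[\sum_t g_t^F(\pi)\big]\leq\mathrm{diam}_F(\Delta^{K-1})$.

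Establishing this information-gain bound is the crux. Writing $q_t\eqdef\Pr_t(A_*\in\cdot)\in\Delta^{K-1}$ for the posterior over the optimal action, I would observe that $g_t^F(\pi)=\Expect_t\big[D_F(q_{t+1},q_t)\big]$, where the inner expectation averages over both the $\pi$-randomized action and the resulting reward. Expanding the Bregman divergence via \eqref{e:BregmanDiv},
\begin{equation*}
\Expect_t\big[D_F(q_{t+1},q_t)\big]=\Expect_t\big[F(q_{t+1})\big]-F(q_t)-\Expect_t\big[\nabla_{q_{t+1}-q_t}F(q_t)\big].
\end{equation*}
The key structural fact is that $\{q_t\}$ is a martingale with respect to $\{H_t\}$: because $H_t\subseteq H_{t+1}$, the tower property gives $\Expect_t[q_{t+1}]=\Expect_t[\Pr_{t+1}(A_*\in\cdot)]=\Pr_t(A_*\in\cdot)=q_t$, regardless of the policy. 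Since the directional derivative $\nabla_{q_{t+1}-q_t}F(q_t)$ is linear in its direction argument (at points where $F$ is differentiable, which covers the interior of the simplex where the posteriors and the relevant potentials live) and $q_t$ is $H_t$-measurable, this forces $\Expect_t[\nabla_{q_{t+1}-q_t}F(q_t)]=\nabla_{\Expect_t[q_{t+1}]-q_t}F(q_t)=0$. Hence $g_t^F(\pi)=\Expect_t[F(q_{t+1})]-F(q_t)$, and summing over $t$ telescopes:
\begin{equation*}
\Expect\Big[\sum_{t=0}^{T-1}g_t^F(\pi)\Big]=\Expect[F(q_T)]-\Expect[F(q_0)]\leq\sup_{u,v\in\Delta^{K-1}}\big(F(u)-F(v)\big)=\mathrm{diam}_F(\Delta^{K-1}),
\end{equation*}
where the inequality holds because $q_T,q_0\in\Delta^{K-1}$ almost surely. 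Substituting this into the Cauchy--Schwarz chain yields the claim.

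The main obstacle is the vanishing of the directional-derivative term, which is where convexity alone is delicate: for a general convex $F$ the directional derivative is only sublinear in its direction, so the identity $\Expect_t[\nabla_{q_{t+1}-q_t}F(q_t)]=0$ genuinely requires differentiability of $F$ at $q_t$ (equivalently, a single-valued subgradient). I would address this by restricting to potentials differentiable on the relevant domain --- as holds for both the negentropy \eqref{e:negentropy} and the $1/2$-Tsallis potential \eqref{e:halftsallis} on the interior of $\Delta^{K-1}$ --- or, more carefully, by choosing a measurable selection $\gamma_t\in\partial F(q_t)$, using $\langle\gamma_t,\Expect_t[q_{t+1}]-q_t\rangle=0$ together with the supporting-hyperplane inequality $F(q_{t+1})\geq F(q_t)+\langle\gamma_t,q_{t+1}-q_t\rangle$ to retain the telescoping as an upper bound. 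Everything else --- the two Cauchy--Schwarz steps and the diameter estimate --- is routine once the martingale identity is in place.
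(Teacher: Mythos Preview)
Your proposal is correct and follows essentially the same route as the paper: factor the per-step regret through the information ratio, apply Cauchy--Schwarz to separate $\barGamma^F_T(\pi)$ from the cumulative information gain, and bound the latter by $\mathrm{diam}_F(\Delta^{K-1})$ via the martingale property of the posterior and a telescoping sum. The only noteworthy difference is in the handling of the directional-derivative term: where you invoke differentiability or a subgradient selection, the paper's Lemma~\ref{t:div_bd} instead passes to the limit defining $\nabla_{u-v}F(v)$, applies Fatou, and then uses Jensen's inequality on $F\big(h\,q_{t+1}+(1-h)q_t\big)$ to obtain the same inequality $\Expect_t[D_F(q_{t+1},q_t)]\leq\Expect_t[F(q_{t+1})]-F(q_t)$ directly from convexity alone.
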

The proof is very similar to the proof of Theorem~3 in \cite{latsze19} and is provided in Appendix~\ref{s:RprogExt_proof}. Similar versions of the result previously appeared in  \cite{rusvan16,rusvan18l}. 

For any potential function $F$, and policy $\pi$, let $\gamma_F (\pi)$ be a uniform upper bound on the information ratio:
$$\Gamma_t^F (\pi)  \leq \gamma_F (\pi) \,\, a.s., \quad t = 0,1,\ldots$$
The following result is an immediate corollary to Theorem~\ref{t:Rprog}.
\begin{corollary}
\label{t:Rprog_uniform_bd}
For any policy $\pi$, $T \in \N$, and $K\geq2$,
\begin{equation}
{\mathrm{Regret}}(T, \pi) \leq \sqrt{ \gamma_F (\pi) \cdot {\mathrm{diam}_F(\Delta^{K-1})} \cdot T }
\label{e:Rprog_uniform_bd}
\end{equation}
where $F$ is convex, and satisfies $\mathrm{diam}_F(\Delta^{K-1}) < \infty$.
\end{corollary}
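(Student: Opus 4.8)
The plan is to derive the statement directly from Theorem~\ref{t:Rprog}, which already bounds the Bayesian regret in terms of the time-averaged expected information ratio $\barGamma^F_T(\pi)$. Since the hypotheses on $F$ (convexity and $\mathrm{diam}_F(\Delta^{K-1}) < \infty$) are identical in both statements, the only substantive work is to replace $\barGamma^F_T(\pi)$ by the uniform bound $\gamma_F(\pi)$ and then invoke monotonicity of the square root. This is why the result is labeled an immediate corollary.

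First I would substitute the almost-sure bound $\Gamma_t^F(\pi) \leq \gamma_F(\pi)$ into the definition \eqref{e:IR_avg} of $\barGamma^F_T(\pi)$. Because an inequality between random variables that holds almost surely is preserved under expectation, and because $\gamma_F(\pi)$ is a deterministic scalar depending only on $F$ and $\pi$ (not on the realized history $H_t$), it factors out of both the expectation and the sum. Concretely,
\[
\barGamma^F_T(\pi) = \frac{1}{T}\,\E\!\left[\sum_{t=0}^{T-1}\Gamma_t^F(\pi)\right] \leq \frac{1}{T}\,\E\!\left[\sum_{t=0}^{T-1}\gamma_F(\pi)\right] = \frac{1}{T}\cdot T\,\gamma_F(\pi) = \gamma_F(\pi).
\]

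Finally I would combine this with the conclusion of Theorem~\ref{t:Rprog}. Since $\mathrm{diam}_F(\Delta^{K-1}) \geq 0$ (taking $u=v$ in \eqref{e:diam_F}) and $T \geq 1$, both $\barGamma^F_T(\pi)\cdot \mathrm{diam}_F(\Delta^{K-1})\cdot T$ and $\gamma_F(\pi)\cdot \mathrm{diam}_F(\Delta^{K-1})\cdot T$ are nonnegative, so monotonicity of $x \mapsto \sqrt{x}$ together with $\barGamma^F_T(\pi) \leq \gamma_F(\pi)$ gives
\[
\mathrm{Regret}(T,\pi) \leq \sqrt{\barGamma^F_T(\pi)\cdot \mathrm{diam}_F(\Delta^{K-1})\cdot T} \leq \sqrt{\gamma_F(\pi)\cdot \mathrm{diam}_F(\Delta^{K-1})\cdot T},
\]
which is exactly \eqref{e:Rprog_uniform_bd}. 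There is no genuine obstacle here; the only point requiring a moment of care is the transition from the almost-sure bound to the bound on the expectation, which is valid precisely because $\gamma_F(\pi)$ is interpreted as a (deterministic) uniform upper bound and can therefore be pulled outside the expectation before the averaging is undone.
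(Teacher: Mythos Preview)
Your proposal is correct and matches the paper's own approach: the paper simply labels this an immediate corollary to Theorem~\ref{t:Rprog}, and your argument---substituting the almost-sure bound $\Gamma_t^F(\pi)\le\gamma_F(\pi)$ into the definition of $\barGamma^F_T(\pi)$ and then applying monotonicity of the square root---is exactly the intended derivation.
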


Table~\ref{tab:pot_gamma} summarizes the best known upper bounds for the information ratio and $ \mathrm{diam}_F(\Delta^{K-1})$ for the two potential functions of interest.
\begin{table}[ht]
\begin{center}
\small{
\begin{tabular}{|c|c|c|c|c|}
\hline
&  &  &  & \\[-0.4em]
$F$ &  $\gamma_F (\pi^{\mathrm{TS}})$   &   $\gamma_F (\pi^{\mathrm{NDS}})$ &    $\gamma_F (\pi^{\mathrm{TDS}})$   & $ \mathrm{diam}_F(\Delta^{K-1})$  \\ [0.3em]
\hline
&  &  &  & \\[-0.4em]
\hspace{-0.1in}  $1/2$-Tsallis \hspace{-0.1in}  & $\sqrt{K} $ & - & $\sqrt{K} $ &  $2\sqrt{K}$ \\[0.3em]  
Negentropy & ${K}/{2}$ & ${K}/{2}$ & - & $\log(K)$  \\ [0.5em]
\hline
\end{tabular}
}
\caption{Different potential functions $F$ and the corresponding best known upper bounds on the information ratio and $\mathrm{diam}_F(\Delta^{K-1})$.}
\end{center}
\label{tab:pot_gamma}
\vspace{-0.2in}
\end{table}

The following Proposition~\ref{t:Rprog_application} is a direct consequence of Corollary~\ref{t:Rprog_uniform_bd} and the bounds in Table~\ref{tab:pot_gamma}.

\begin{proposition}
\label{t:Rprog_application}
For all $T \in \N$ and $K \geq 2$,
\begin{subequations}
\begin{align}
{\mathrm{Regret}}(T, \pi^{\mathrm{TS}}) & \leq \sqrt{2 K T} 
\label{e:TS_reg_bd}
\\[1em]
{\mathrm{Regret}}(T, \pi^{\mathrm{TDS}}) & \leq \sqrt{2 K T}
\label{e:HIDS_reg_bd}
\\[1em]
{\mathrm{Regret}}(T, \pi^{\mathrm{NDS}}) & \leq \sqrt{\frac{1}{2} K T \log(K)}
\label{e:MIDS_reg_bd}
\end{align}
\end{subequations}
\end{proposition}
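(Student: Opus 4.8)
The plan is to invoke Corollary~\ref{t:Rprog_uniform_bd} three separate times, once for each policy, selecting in each case the potential function $F$ that yields the advertised bound. Since the corollary holds for any convex $F$ with $\mathrm{diam}_F(\Delta^{K-1}) < \infty$ and simply asserts $\mathrm{Regret}(T,\pi) \le \sqrt{\gamma_F(\pi)\cdot \mathrm{diam}_F(\Delta^{K-1})\cdot T}$, the whole argument reduces to reading off the appropriate pair $(\gamma_F(\pi), \mathrm{diam}_F(\Delta^{K-1}))$ from Table~\ref{tab:pot_gamma}, substituting into~\eqref{e:Rprog_uniform_bd}, and simplifying the product under the square root.

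Concretely, for~\eqref{e:TS_reg_bd} and~\eqref{e:HIDS_reg_bd} I would take $F$ to be the $1/2$-Tsallis entropy~\eqref{e:halftsallis}, for which Table~\ref{tab:pot_gamma} records $\gamma_F(\pi^{\mathrm{TS}}) = \gamma_F(\pi^{\mathrm{TDS}}) = \sqrt{K}$ and $\mathrm{diam}_F(\Delta^{K-1}) = 2\sqrt{K}$; plugging these in gives $\mathrm{Regret}(T,\pi) \le \sqrt{\sqrt{K}\cdot 2\sqrt{K}\cdot T} = \sqrt{2KT}$ for each of $\pi \in \{\pi^{\mathrm{TS}},\pi^{\mathrm{TDS}}\}$. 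For~\eqref{e:MIDS_reg_bd} I would instead take $F$ to be the negentropy potential~\eqref{e:negentropy}, with $\gamma_F(\pi^{\mathrm{NDS}}) = K/2$ and $\mathrm{diam}_F(\Delta^{K-1}) = \log(K)$; substitution then yields $\mathrm{Regret}(T,\pi^{\mathrm{NDS}}) \le \sqrt{(K/2)\cdot \log(K)\cdot T} = \sqrt{\tfrac{1}{2}KT\log(K)}$. All three claims follow verbatim.

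I anticipate no genuine obstacle in the proposition itself, since it is purely an assembly step. The substantive content lives entirely upstream, in the table entries. The diameters are verified directly from the definitions: for the $1/2$-Tsallis potential the supremum of $F$ over $\Delta^{K-1}$ is attained at a vertex and the infimum at the uniform distribution (by concavity of $\sqrt{\cdot}$), giving $2\sqrt{K}-2 \le 2\sqrt{K}$, and analogously one obtains $\log(K)$ for the negentropy potential. The uniform information-ratio bounds $\gamma_F(\pi)\le \sqrt{K}$ and $\gamma_F(\pi)\le K/2$, on the other hand, are the real estimates and rest on the per-step analyses of Thompson sampling and IDS in the cited work; if anything required care here, it would be tracking those bounds rather than the final $\sqrt{\cdot}$ arithmetic.
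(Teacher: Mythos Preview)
Your proposal is correct and mirrors the paper's own argument essentially verbatim: the paper likewise selects the $1/2$-Tsallis potential for \eqref{e:TS_reg_bd} and \eqref{e:HIDS_reg_bd} and the negentropy potential for \eqref{e:MIDS_reg_bd}, invokes the uniform information-ratio bounds from Table~\ref{tab:pot_gamma}, and substitutes into the regret inequality (the paper cites Theorem~\ref{t:Rprog} rather than Corollary~\ref{t:Rprog_uniform_bd}, but with a uniform a.s.\ bound on $\Gamma_t^F$ these are the same statement).
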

The proof of \eqref{e:TS_reg_bd} and \eqref{e:HIDS_reg_bd} of Proposition~\ref{t:Rprog_application} can be found in \cite{latsze19} (see Corollary~4 and Lemma~7), and the proof of \eqref{e:MIDS_reg_bd} in \cite{rusvan18l} (see Corollary~1 and Proposition~2). A proof overview is provided in Appendix~\ref{s:proof_overview}.

It is clear from Proposition~\ref{t:Rprog_application} that the bound for Tsallis-IDS exhibits a more graceful dependence on $K$, compared to Shannon-IDS. To obtain 
a $\sqrt{KT}$ bound for Shannon-IDS, one will need a ${K/\log(K)}$ upper bound on $\gamma_F (\pi^{\mathrm{NDS}})$, so that $\gamma_F (\pi^{\mathrm{NDS}}) \cdot \mathrm{diam}_F(\Delta^{K-1})$ will have an order $K$ upper bound. 
Theorem~\ref{t:R_prog_hp_lb_alpha_nz} in the following section shows that this is not possible.

\section{A Didactic Example}
\label{s:counter}

The goal here is to show that it is impossible to improve the regret bound for Shannon-IDS using the current analysis. We will also observe that these bounds do not reflect the true behavior of the algorithm in practice, and identify a possible explanation for this.

From here on, unless otherwise mentioned, $F$ refers to the negentropy potential. We will also restrict our theoretical results to the following family of bandit problems that has been a standard for proving lower bounds for bandit algorithms: see for example the proof of Theorem~5.1 in \cite{auecesfresch02} and also the proof of Theorem~1 in \cite{mantsi04}.

\begin{example}
\label{e:hardproblems}
Let $ p \in (0,1)$ and $\epsy \in(0,1-p)$. For $1 \leq i \leq K$, denote $p_{*}^i \in \clP$ such that, for each $1 \leq k \leq K$, the marginals are
\begin{equation}
p_{*}^i(k) = 
\begin{cases}
\mathrm{Bernoulli}(p + \epsy) \, \qquad & \text{if} \,\, i = k
\\
\mathrm{Bernoulli}(p) \, \qquad & \text{if} \,\, i \neq k
\end{cases}
\label{e:p_star_counter}
\end{equation}
$P_*$ is a probability distribution on $\clP$ such that
\begin{equation}
P_*(p_{*}^i) = {1}/{K} \,, \qquad 1 \leq i \leq K
\label{e:P_star_counter}
\end{equation}
\label{ex:counter}
\end{example}
The following result establishes a lower bound on the Shannon information ratio using Example~\ref{e:hardproblems}.
\begin{theorem}
Consider the bandit problem in Example~\ref{ex:counter}. For all $K \geq 2$, there exists $t \geq 0$, $p \in (0,1)$, and $\epsilon^* \in (0,1-p)$, such that, for all $\epsilon \in (0,\epsilon^*]$, and any policy $\pi$,
\begin{equation}
\Gamma_t^F (\pi)
\geq 
{\frac{K}{20}} \qquad a.s..
\label{e:Gamma_t_lb_main}
\end{equation}
\label{t:R_prog_hp_lb_alpha_nz}
\end{theorem}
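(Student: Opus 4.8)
The plan is to take $t=0$, where the history is empty and the posterior over $A_*$ coincides with the uniform prior, so $\Pr_0(A_* = i) = 1/K$ for each $i \in \clA$; every quantity below is then deterministic and the almost-sure claim is automatic. The central observation is that, because the prior $P_*$ and the reward structure in Example~\ref{ex:counter} are invariant under permutations of the arms, both $\Delta_0(a)$ and $g_0^F(a)$ are independent of $a$. Consequently $\Delta_0(\pi)$ and $g_0^F(\pi)$ take the same value for \emph{every} policy $\pi$, and the information ratio $\Gamma_0^F(\pi)$ collapses to a single number that I must lower bound.

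First I would compute the regret term. Since $R_* = p + \epsilon$ almost surely and $\Expect_0[R_{1,a}] = \tfrac{1}{K}(p+\epsilon) + \tfrac{K-1}{K}p = p + \epsilon/K$, I get $\Delta_0(a) = \epsilon(K-1)/K$ for every $a$, hence $[\Delta_0(\pi)]^2 = \epsilon^2 (K-1)^2/K^2$ exactly. Next I would evaluate the information gain. For the negentropy potential the Bregman divergence is the relative entropy, so $g_0^F(a) = \Expect_0[D_{\mathrm{KL}}(\Pr_1(A_* \in \cdot)\,\|\,\Pr_0(A_* \in \cdot)) \mid A_0 = a]$ is exactly the mutual information $I(A_*; R_{1,a})$ between the optimal arm and the observed reward. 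Using the channel-side expression $I(A_*; R_{1,a}) = \sum_{i} \Pr(A_* = i)\, D_{\mathrm{KL}}(\Pr(R_{1,a}\in\cdot \mid A_* = i)\,\|\,\Pr(R_{1,a}\in\cdot))$, and noting that conditioned on $A_* = i$ the reward $R_{1,a}$ is $\mathrm{Bernoulli}(p+\epsilon)$ when $a=i$ and $\mathrm{Bernoulli}(p)$ otherwise while its marginal is $\mathrm{Bernoulli}(\bar p)$ with $\bar p = p+\epsilon/K$, each summand is a Bernoulli relative entropy.

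The key step is to bound these summands tightly: applying $D_{\mathrm{KL}}(u\|v)\le \chi^2(u\|v)$ together with $\chi^2(\mathrm{Bernoulli}(a)\|\mathrm{Bernoulli}(b)) = (a-b)^2/(b(1-b))$ yields $g_0^F(a) \le \epsilon^2(K-1)/(K^2\,\bar p(1-\bar p))$. Combining with the regret computation, the $\epsilon^2$ factors cancel and I obtain $\Gamma_0^F(\pi) = [\Delta_0(\pi)]^2/g_0^F(\pi) \ge (K-1)\,\bar p(1-\bar p)$ for every $\pi$. Choosing $p = 1/2$ gives $\bar p(1-\bar p) = 1/4 - \epsilon^2/K^2$, which for $\epsilon < 1/2$ and $K \ge 2$ stays above $3/16$; hence $\Gamma_0^F(\pi) \ge \tfrac{3}{16}(K-1) \ge K/20$ for all $K \ge 2$, proving the claim with, say, $\epsilon^* = 1/4$.

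The only delicate point is the tightness of the information-gain bound. A crude estimate of $g_0^F$ would fail to cancel the $\epsilon^2$ appearing in the numerator, and the ratio would degrade with $\epsilon$; it is precisely the $\chi^2$ (second-order) control of the Bernoulli relative entropies—matching the $\Theta(\epsilon^2)$ scaling of $[\Delta_0(\pi)]^2$—that makes $\Gamma_0^F(\pi)$ of order $K$ uniformly in $\epsilon \in (0,\epsilon^*]$. Everything else (the symmetry reduction and the mutual-information identity) is structural and mechanical.
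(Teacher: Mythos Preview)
Your proposal is correct and follows the same overall architecture as the paper: take $t=0$ so the posterior is uniform, use permutation symmetry to see that $\Delta_0(a)$ and $g_0^F(a)$ do not depend on $a$, compute $\Delta_0(\pi)=\epsilon(K-1)/K$ exactly, upper-bound $g_0^F(\pi)$ via the channel-side mutual-information identity (the paper's Lemma~\ref{t:IG_Regret_Prop2_RVR}), and divide.

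The one substantive difference is the device you use to upper-bound the Bernoulli relative entropies. The paper expands each $\log$ term to second and third order via Taylor series and then imposes a list of smallness conditions on $\epsilon^*$ (seven inequalities) to control the remainder, arriving at $g_0^F(\pi)\le \tfrac{5}{2}\,\epsilon^2(K-1)/(K^2 p(1-p))$ and hence $\Gamma_0^F(\pi)\ge \tfrac{2}{5}p(1-p)(K-1)$. Your use of the universal inequality $D_{\mathrm{KL}}\le \chi^2$ is cleaner: it gives the sharper bound $g_0^F(\pi)\le \epsilon^2(K-1)/(K^2\bar p(1-\bar p))$ in one line, yields the slightly better constant $(K-1)\bar p(1-\bar p)$, and makes the choice of $\epsilon^*$ almost trivial (any $\epsilon^*<1/2$ works at $p=1/2$). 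The paper's Taylor approach, on the other hand, makes the dependence on $p$ and the higher-order error structure explicit, which feeds into the more refined Proposition in the appendix. For the stated theorem, your route is both shorter and tighter.
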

The proof of Theorem~\ref{t:R_prog_hp_lb_alpha_nz} is contained in Appendix \ref{s:lower_bound_proof}. This result shows that the upper bounds in the second row of Table~\ref{tab:pot_gamma} are tight, up to a constant scaling. 

Theorem \ref{t:R_prog_hp_lb_alpha_nz} further  implies that it is impossible to get an order $\sqrt{KT}$ bound for Shannon-IDS using Corollary~\ref{t:Rprog_uniform_bd}: since $\gamma_F(\pi^{\mathrm{NDS}}) \geq {K}/{20}$, and $\mathrm{diam}_F(\Delta^{K-1}) = \log(K)$,
\[
\sqrt{ \gamma_F (\pi^{\mathrm{NDS}}) \cdot {\mathrm{diam}_F(\Delta^{K-1})} \cdot T } \geq \sqrt{\frac{1}{20}KT\log(K)}
\]
On the other hand, the same line of analysis yields a $\sqrt{2KT}$ bound for Tsallis-IDS.

The gap between the upper bounds of Shannon-IDS and Tsallis-IDS raises the immediate question of whether it reflects real difference in practical performance of the algorithms. To obtain some intuition, we conduct simulations by applying  Thompson sampling, Shannon-IDS and Tsallis-IDS to the bandit problem in Example \ref{e:hardproblems}. 
\begin{figure}[ht]
\begin{center}
{\includegraphics[trim={0cm 5.6cm 0.5cm 5.6cm}, width=0.8\textwidth]{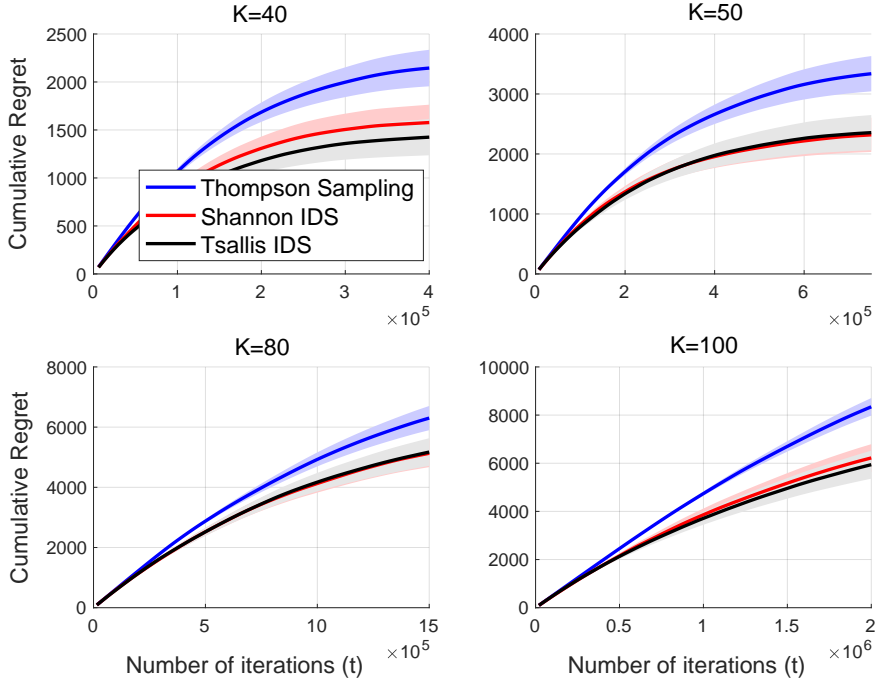}
}
\caption{Expected cumulative Regret of Thompson sampling, Shannon-IDS and Tsallis-IDS applied to Example~\ref{ex:counter}. The shaded regions indicate $2 \sigma$ confidence intervals.}
\label{fig:counter_cum_regret}
\end{center}
\vspace{-0.1in}
\end{figure}

In Figure~\ref{fig:counter_cum_regret} we plot the expected cumulative regret of the three algorithms for $K \in \{40, 50, 80, 100\}$. The details of the experiments are postponed to Section~\ref{s:comp_results}, but the key observation we make is that there is little difference between the performances of Shannon-IDS and Tsallis-IDS. We certainly don't observe the kind of performance gap suggested by Proposition~\ref{t:Rprog_application}: For $K=100$, and $T = 2 \times 10^6$, \eqref{e:HIDS_reg_bd} and \eqref{e:MIDS_reg_bd} suggest a performance gap of
\[
\sqrt{KT} \left (\sqrt{\frac{1}{2} \log(K)}  - \sqrt{2} \right )= 1.5 \times 10^3
\]
which is clearly not the case.

\begin{figure}[ht]
\centering
\begin{center}
{\includegraphics[trim={1cm 5.6cm 0 5.6cm}, width=0.8\textwidth]{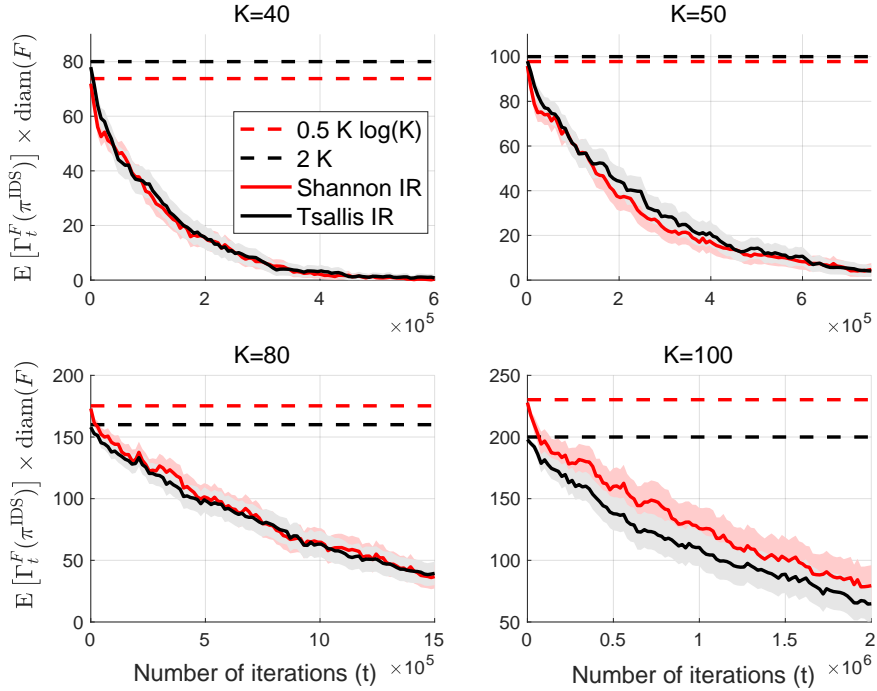}
}
\caption{Empirical average and $2 \sigma$ confidence intervals of $\Gamma_t^F(\pi^{\mathrm{IDS}})$ corresponding to Example~\ref{ex:counter}.}
\label{fig:counter_rv_info_ratio_IDS}
\end{center}
\vspace{-0.2in}
\end{figure}

To understand the mystery behind the discrepancy between the theory and practical performance, in Figure~\ref{fig:counter_rv_info_ratio_IDS} we plot the empirical average of $\Gamma_t^F(\pi^{\mathrm{NDS}})$ and $\Gamma_t^F(\pi^{\mathrm{TDS}})$ scaled by $\mathrm{diam}_F(\Delta^{K-1})$. The scaling makes sure that the units corresponding to the two plots match. The dashed plots indicate the worst case bounds on the information ratio for each of the two algorithms.

The plots in Figure~\ref{fig:counter_rv_info_ratio_IDS} identify a plausible explanation for the gap in theory and practice: The theoretical results use worst case information ratio to bound the regret, but $\Gamma_t^F$ is clearly a time-varying quantity. More importantly, even though at $t=0$, $\Expect[\Gamma_t^F(\pi^{\mathrm{IDS}})]$ closely matches the worst-case bounds, it is monotonically decreasing for $t > 0$. This suggests that it is crucial to take into account the temporal nature of the information ratio in analysis. And based on Figure~\ref{fig:counter_rv_info_ratio_IDS}, an application of Theorem~\ref{t:Rprog} will predict similar performance bounds for Shannon-IDS and Tsallis-IDS, consistent with our observation in Figure~\ref{fig:counter_cum_regret}.

\section{Accounting for Temporal Variation of Information Ratio}
\label{s:time_var_IR}

The lower bound we established in Section~\ref{s:counter} implies that it is not possible to obtain an order $\sqrt{KT}$ regret bound for Shannon-IDS using a particular template for analysis, which depends on the information ratio through its maximum over time.  Our experimental results suggest it may be possible to establish such a bound via an analysis that accounts for temporal variation of the information ratio.

In this section, we propose a new template for analysis.  We 
will use this to obtain an order $\sqrt{KT}$ bound for Thompson sampling via studying the Shannon information ratio, and in particular, its time variation.  Similarly with Shannon-IDS, the bound for Thompson sampling can not be established without taking this time variation into account.

Theorem~\ref{t:RprogExt} below can be regarded as a generalization of Corollary~\ref{t:Rprog_uniform_bd}. The key difference between the two results is that Theorem~\ref{t:RprogExt} accounts for the time-varying nature of information ratio, whereas Corollary~\ref{t:Rprog_uniform_bd} does not.

\begin{theorem}
\label{t:RprogExt}
For all $T \in \N$, $K\geq2$, $\bfmdelta = \{\delta_t: \delta_t \in [0,1] \,, 0 \leq t \leq T - 1\}$, $\gamma_{FX} \in \Re^+ \cup \{\infty\}$, and policies $\pi$, such that
\[
\Pr\left (\Gamma_t^F (\pi)  \leq \gamma_{FX} \right) \geq 1 - \delta_t\,, \quad t = 0, \ldots , T-1
\]
we have
\begin{align}
{\mathrm{Regret}}(T, \pi) \! \leq \! \sqrt{ \gamma_{FX} \! \cdot \! {\mathrm{diam}_F(\Delta^{K-1})} \cdot T } 
+ \epsy \! \cdot \! \sum_{t=0}^{T-1} \delta_t
\nonumber
\end{align}
\end{theorem}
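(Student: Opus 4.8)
The plan is to follow the same Cauchy--Schwarz argument underlying Theorem~\ref{t:Rprog} and Corollary~\ref{t:Rprog_uniform_bd}, but to compensate for the absence of an almost-sure bound on $\Gamma_t^F(\pi)$ by splitting each time step according to whether the information ratio is controlled. As in the proof of Theorem~\ref{t:Rprog}, the tower property gives ${\mathrm{Regret}}(T,\pi) = \Expect\big[\sum_{t=0}^{T-1}\Delta_t(\pi)\big]$, so it suffices to bound $\sum_t \Expect[\Delta_t(\pi)]$. For each $t$ I would introduce the event $E_t = \{\Gamma_t^F(\pi) \le \gamma_{FX}\}$, which by hypothesis has $\Pr(E_t) \ge 1-\delta_t$, and write $\Expect[\Delta_t(\pi)] = \Expect[\Delta_t(\pi)\mathbf{1}_{E_t}] + \Expect[\Delta_t(\pi)\mathbf{1}_{E_t^c}]$.

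On the \emph{good} event $E_t$, the definition \eqref{e:IR} gives the pointwise identity $\Delta_t(\pi) = \sqrt{\Gamma_t^F(\pi)\, g_t^F(\pi)}$, so $\Delta_t(\pi)\mathbf{1}_{E_t} \le \sqrt{\gamma_{FX}\, g_t^F(\pi)}$ (the summand being nonnegative, dropping $\mathbf{1}_{E_t}$ only increases it). Summing over $t$, applying Cauchy--Schwarz in the form $\sum_t \sqrt{g_t^F(\pi)} \le \sqrt{T\sum_t g_t^F(\pi)}$, and then Jensen's inequality for the concave square root, I would obtain $\Expect\big[\sum_t \Delta_t(\pi)\mathbf{1}_{E_t}\big] \le \sqrt{\gamma_{FX}\,T\,\Expect[\sum_t g_t^F(\pi)]}$. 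The final ingredient is the telescoping bound $\Expect[\sum_{t=0}^{T-1} g_t^F(\pi)] \le {\mathrm{diam}_F(\Delta^{K-1})}$, which holds because $\Pr_t(A_* \in \cdot)$ is a martingale, so the linear term of each Bregman divergence vanishes in conditional expectation and the sum telescopes to $\Expect[F(\Pr_T(A_*\in\cdot)) - F(\Pr_0(A_*\in\cdot))] \le {\mathrm{diam}_F(\Delta^{K-1})}$; this is exactly the step already carried out in the proof of Theorem~\ref{t:Rprog}, which I would reuse. Together these give the leading term $\sqrt{\gamma_{FX}\cdot{\mathrm{diam}_F(\Delta^{K-1})}\cdot T}$.

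The new ingredient is the treatment of the \emph{bad} event $E_t^c$, where the information ratio may be arbitrarily large (indeed $g_t^F(\pi)$ can be tiny), so the Cauchy--Schwarz bound is useless; here I would instead fall back on the crude per-step regret bound. For the family in Example~\ref{ex:counter}, every instance has optimal mean $p+\epsy$, and the expected instantaneous regret of action $a$ is $\Delta_t(a) = \epsy\,(1 - \Pr_t(A_*=a)) \le \epsy$, whence $\Delta_t(\pi) = \sum_{a \in \clA} \pi(a)\Delta_t(a) \le \epsy$ almost surely. Combined with $\Pr(E_t^c) \le \delta_t$, this yields $\Expect[\Delta_t(\pi)\mathbf{1}_{E_t^c}] \le \epsy\,\delta_t$, and summing over $t$ contributes the additive term $\epsy\sum_{t=0}^{T-1}\delta_t$. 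Adding the two contributions gives the claimed inequality, the case $\gamma_{FX}=\infty$ being vacuous.

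I expect the only real subtlety — and the step most in need of care — to be the interchange of the event-based decomposition with the Cauchy--Schwarz/Jensen chain: because $\sum_t \Gamma_t^F(\pi)$ is random and unbounded on $\bigcup_t E_t^c$, the bound must be applied pathwise to the indicator-weighted summands before taking expectations, which is precisely what makes the $\epsy$-gap property of Example~\ref{ex:counter} essential rather than incidental. Everything else is a direct adaptation of the machinery already developed for Theorem~\ref{t:Rprog}.
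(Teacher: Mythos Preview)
Your proposal is correct and follows essentially the same approach as the paper: the paper likewise splits each per-step expectation using the event $\{\Gamma_t^F(\pi)\le\gamma_{FX}\}$, bounds the good part by $\sqrt{\gamma_{FX}}\cdot\E[\sqrt{g_t^F(\pi)}]$ and the bad part by $\epsy\,\delta_t$ (using $\Delta_t(\pi)\le\epsy$ for Example~\ref{ex:counter}), then applies Cauchy--Schwarz/Jensen and the same telescoping lemma (Lemma~\ref{t:div_bd}) you invoke. The only cosmetic difference is that the paper presents the event split implicitly in one inequality rather than introducing $E_t$ by name.
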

The proof of Theorem~\ref{t:RprogExt} is contained in Appendix~\ref{s:RprogExt_proof}. Note that the result coincides with Corollary~\ref{t:Rprog_uniform_bd} in the special case $\delta_t = 0$, $t = 0,\ldots,T-1$.

Proposition~\ref{t:sqrtKTbd_hp} is an application of Theorem~\ref{t:RprogExt}, and establishes an order $\sqrt{KT}$ regret bound for Thompson sampling via an analysis of the Shannon information ratio.
\begin{proposition}
For all $K \geq 2$, and $t \geq 0$, 
\begin{equation}
\Pr\left (\Gamma_t^F (\pi^{\mathrm{TS}})  \leq 8 \right) \geq 1 - \frac{1}{\epsy} \cdot \sqrt{\frac{8K}{t}}
\label{e:HprobBd_TS}
\end{equation}
Consequently, for all $T \in \N$, $K \geq 2$,
\begin{equation}
{\mathrm{Regret}}(T, \pi^{\mathrm{TS}}) \leq 6 \sqrt{2 KT}
\label{e:R_prog_hp_ExT_ub}
\end{equation}
\label{t:sqrtKTbd_hp}
\end{proposition}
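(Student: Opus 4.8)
\textbf{Proof proposal for Proposition~\ref{t:sqrtKTbd_hp}.}

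The plan is to prove the high-probability information-ratio bound~\eqref{e:HprobBd_TS} first, and then feed it directly into Theorem~\ref{t:RprogExt} with the choice $\gamma_{FX} = 8$ and $\delta_t = \frac{1}{\epsy}\sqrt{8K/t}$ to obtain~\eqref{e:R_prog_hp_ExT_ub}. The key structural fact is that for Thompson sampling the information ratio admits a \emph{deterministic} worst-case bound $\Gamma_t^F(\pi^{\mathrm{TS}}) \le K/2$ (the entry in Table~\ref{tab:pot_gamma}), but this bound is only tight when the posterior $\Pr_t(A_* \in \cdot)$ is close to uniform. My strategy is therefore to show that, except on an event of small probability, the posterior has concentrated enough that the information ratio has already dropped below the constant $8$. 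The quantity controlling this is the posterior entropy (or equivalently how far $\Pr_t(A_*\in\cdot)$ is from uniform): early on the information ratio can be as large as order $K$, but the per-step entropy reduction $g_t^F$ quickly forces concentration.

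First I would isolate the event on which the posterior is ``nearly deterministic,'' i.e. where one action carries most of the posterior mass, say the complement event $\mathcal{B}_t = \{\Gamma_t^F(\pi^{\mathrm{TS}}) > 8\}$. On $\mathcal{B}_t$ the posterior $\Pr_t(A_*\in\cdot)$ must be sufficiently spread out, which in the structured family of Example~\ref{ex:counter} means the learner has not yet identified the best arm. Second, I would bound $\Pr(\mathcal{B}_t)$ using the expected cumulative entropy reduction: since $\sum_{s=0}^{t-1} \Expect[g_s^F(\pi^{\mathrm{TS}})]$ is bounded by $\mathrm{diam}_F(\Delta^{K-1}) = \log(K)$ (the total Shannon information available about $A_*$ never exceeds the prior entropy, which is $\log K$ under the uniform prior~\eqref{e:P_star_counter}), and since on $\mathcal{B}_t$ each step must still be extracting a nontrivial amount of information, a counting/averaging argument gives that $\mathcal{B}_t$ cannot persist for many steps. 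Converting this into the stated rate $\frac{1}{\epsy}\sqrt{8K/t}$ is where the factor $\epsy$ enters — the gap $\epsy$ between the optimal and suboptimal arms governs how large $\Delta_t$, and hence the regret accrued on $\mathcal{B}_t$, can be, so I expect the bound to come from combining a regret-per-step estimate of order $\epsy$ on the bad event with a Markov-type inequality on the accumulated information.

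Once~\eqref{e:HprobBd_TS} is in hand, deriving~\eqref{e:R_prog_hp_ExT_ub} is a direct substitution into Theorem~\ref{t:RprogExt}. Taking $\gamma_{FX}=8$ gives the main term $\sqrt{8 \cdot \log(K) \cdot T}$, but I would instead pair the high-probability constant $8$ with the \emph{Tsallis} diameter $2\sqrt{K}$ rather than the negentropy diameter $\log(K)$ — more precisely, since the information-ratio statement holds for Thompson sampling under any admissible potential and the sharpest main term comes from $\mathrm{diam}_F(\Delta^{K-1}) = O(\sqrt K)$ giving $\sqrt{8 \cdot 2\sqrt{K}\cdot T}$, I need to reconcile units so that the leading term reads $\sqrt{2KT}$ up to the constant $6$. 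The residual term $\epsy \sum_{t=0}^{T-1}\delta_t = \sum_{t=0}^{T-1}\sqrt{8K/t}$ is a sum of order $\sqrt{8K}\cdot\sqrt{T}$, which is again $O(\sqrt{KT})$, and folding this into the leading constant yields the factor $6$. The main obstacle I anticipate is establishing the concentration estimate~\eqref{e:HprobBd_TS} with exactly the $1/\sqrt{t}$ decay and the correct dependence on $K$ and $\epsy$: this requires carefully relating the event $\{\Gamma_t^F > 8\}$ to a posterior-mass (or entropy) threshold and then controlling the probability of that threshold event via the bounded-information martingale argument, rather than any purely algebraic manipulation.
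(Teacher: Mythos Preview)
Your high-level plan --- prove the high-probability bound \eqref{e:HprobBd_TS} and then substitute into Theorem~\ref{t:RprogExt} --- matches the paper, and so does your intuition that the Shannon information ratio is only of order $K$ when the posterior is spread out. But the mechanism you propose for the concentration estimate is the wrong one, and this is a genuine gap.

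The paper does \emph{not} use an ``entropy budget plus counting'' argument. That route cannot yield a $1/\sqrt{t}$ probability decay: the total Shannon information is only $\log(K)$, and on the bad event $\{\Gamma_t^F>8\}$ the information gain $g_t^F$ is \emph{small}, not large, since $\Gamma_t^F=[\Delta_t]^2/g_t^F>8$ with $\Delta_t\le\epsy$ forces $g_t^F<\epsy^2/8$. There is thus no per-step lower bound on information to count against. Instead the paper proceeds in two steps. First, a deterministic implication specific to Example~\ref{ex:counter}: writing $q_t=\max_a\Pr_t(A_*=a)$, one computes $\Delta_t(\pi^{\mathrm{TS}})\le\epsy(1-q_t^2)$ and, via Pinsker's inequality, $g_t^F(\pi^{\mathrm{TS}})\ge 2\epsy^2 q_t^2(1-q_t)^2$, so that $\Gamma_t^F\le(1+q_t)^2/(2q_t^2)\le 8$ whenever $q_t>1/2$. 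Second --- and this is the idea you are missing --- the paper \emph{bootstraps from the already-established Tsallis-based regret bound} $\mathrm{Regret}(t,\pi^{\mathrm{TS}})\le\sqrt{2Kt}$: Proposition~8 of \cite{rusvan18l} gives $\E[\min_a\Delta_t(a)]\le\mathrm{Regret}(t,\pi^{\mathrm{TS}})/t\le\sqrt{2K/t}$, and since in Example~\ref{ex:counter} one has $\min_a\Delta_t(a)=\epsy(1-q_t)$, Markov's inequality yields $\Pr(q_t\le 1/2)\le\frac{1}{\epsy}\sqrt{8K/t}$. That is where the factor $\epsy^{-1}$ and the $\sqrt{K/t}$ rate come from.

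Your confusion in the final step about which diameter to use is a symptom of the same missing piece. Everything stays with the negentropy potential: the main term is $\sqrt{8\log(K)\,T}\le\sqrt{8KT}$, while the residual $\epsy\sum_{t=0}^{T-1}\delta_t=\sum_{t=0}^{T-1}\sqrt{8K/t}\le 2\sqrt{8KT}$ is the dominant contribution, for a total of $3\sqrt{8KT}=6\sqrt{2KT}$. No switch to the Tsallis diameter is needed or valid here; the $\sqrt{K}$ dependence enters entirely through $\sum_t\delta_t$, which in turn came from the Tsallis regret bound used upstream.
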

The proof of \eqref{e:HprobBd_TS} is presented in Section~\ref{s:sqrtKTbd_hp_proof} of the appendix. The regret bound \eqref{e:R_prog_hp_ExT_ub} then follows from Theorem~\ref{t:RprogExt}, by letting $\delta_t = \frac{1}{\epsy} \cdot \sqrt{\frac{8K}{t}}$, $\pi = \pi^{\mathrm{TS}}$, and $\gamma_{FX} = 8$. 

\section{Computational Results}
\label{s:comp_results}

In this section we present numerical results that reiterate the key points of our main theoretical results. We consider two sets of experiments: (i) Example~\ref{ex:counter} that was used to establish a lower bound in Theorem~\ref{t:R_prog_hp_lb_alpha_nz}, and (ii) the Beta-Bernoulli setting. 

In each of the settings, we compare performances of three algorithms in terms of their expected cumulative regret:
(i) Thompson sampling that assigns action probabilities according to \eqref{e:TS_prob_matching}, 
(ii) Shannon-IDS: \eqref{e:IDS} with $F$ defined in \eqref{e:negentropy}, and 
(iii) Tsallis-IDS: \eqref{e:IDS} with $F$ defined in \eqref{e:halftsallis}. 

In addition to comparing algorithm performance, we also estimate and compare Shannon information ratio and Tsallis information ratio in each experiment. This gives us an under-the-hood view of each algorithm.

\subsection{Experimental Results for Example~\ref{ex:counter}}
\label{s:sim_counter}

Recall Example~\ref{ex:counter} defined in \eqref{e:p_star_counter} and \eqref{e:P_star_counter}. 

We show results for $K\in\{40, 50, 80, 100\}$, and in each case, we let $p = 1/2$, and $\epsilon = p/K$. The total number of time-steps $T$ for each experiment was chosen to satisfy
$$
T \geq \frac{2}{5} \frac{K^4}{K-1} \log(K)
$$
These choices of $p$, $\epsilon$, and $T$ satisfy the conditions required to establish a stronger lower bound than the one in Theorem~\ref{t:R_prog_hp_lb_alpha_nz} (details are contained Appendix~\ref{s:lower_bound_proof}). Note that for $K=80$ and $100$, the best known regret bound for Shannon-IDS, which is $\sqrt{\frac{1}{2} K T\log(K)}$, is larger than the best known regret bound for Tsallis-IDS, which is $\sqrt{2 K T}$.

At each $t = 0,\ldots, T-1$, an agent selections an action $A_t \in \clA$ and observes $R_{t+1, A_t} \in \{0,1\}$. Given $H_t = \{A_s, R_{s+1, A_s}:s = 0,1,2,\ldots,t-1\}$, we can compute the posterior distribution on the optimal action:  
\[
\Pr_t(A_* = a) = \frac{(1+\epsilon/p)^{s_a(t)} (1-\epsilon/(1-p))^{f_a(t)} }{\sum_{a' \in \clA} (1+\epsilon/p)^{s_{a'}(t)} (1-\epsilon/(1-p))^{f_{a'}(t)}} 
\]
where,
\[
\begin{aligned}
s_a(t) & = \sum_{s=1}^{t-1} R_{s+1,A_s} \cdot \ind\{A_s = a\} 
\\
f_a(t) & = \sum_{s=1}^{t-1} (1-R_{s+1,A_s}) \cdot \ind\{A_s = a\}
\end{aligned}
\]
are the total number of $1$'s and $0$'s observed from arm $a$ at time $t-1$.
Using the above  closed form expressions, it is straightforward to obtain the three algorithms from their definitions -- complete implementation details are in Appendix~\ref{s:exp_details}.

In Figure~\ref{fig:counter_cum_regret} we plot the expected cumulative regret of the three algorithms. The empirical average of the cumulative regret was obtained by simulating $N=200$ independent trajectories for $K\in\{40,50,80\}$, and $N=160$ for $K=100$. The shaded regions indicates $2\sigma$ confidence intervals.

It is clear from these plots that the two IDS algorithms have better performance compared to Thompson sampling. The performance difference between Shannon-IDS and Tsallis-IDS is within a margin of statistical error.

\vspace{-0.3in}
\begin{figure}[ht]
\begin{center}
{\includegraphics[trim={1.4cm 5.4cm 0.5cm 5.4cm}, width=0.8\textwidth]{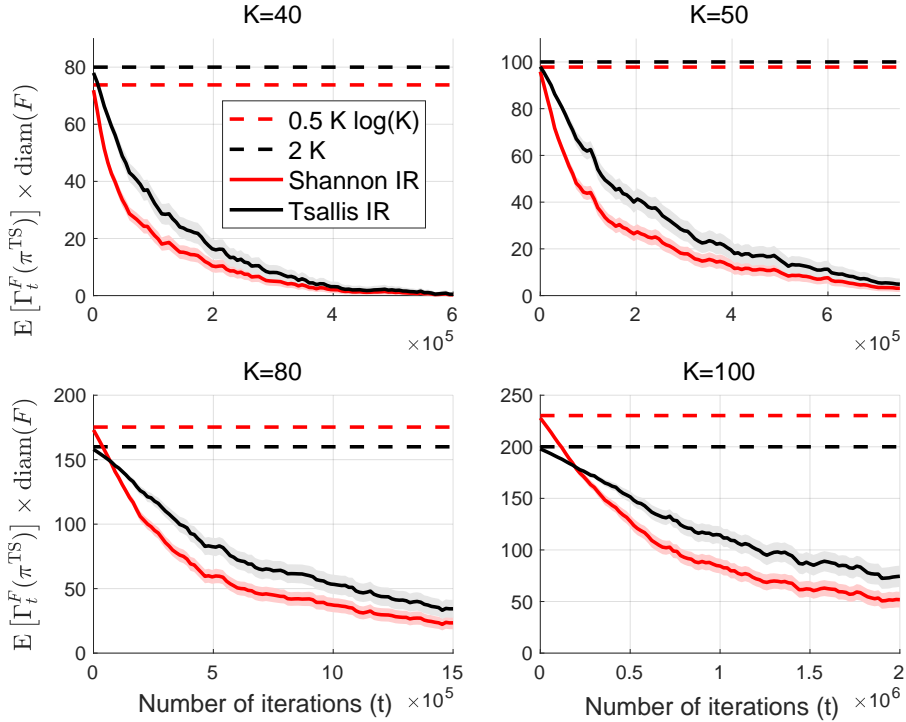}
}
\vspace{-0.1in}
\caption{Empirical average and $2 \sigma$ confidence intervals of $\Gamma_t^F(\pi^{\mathrm{TS}})$ corresponding to the counter Example~\ref{ex:counter}.}
\label{fig:counter_rv_info_ratio_TS}
\vspace{-0.2in}
\end{center}
\end{figure}

In Figure~\ref{fig:counter_rv_info_ratio_IDS} we plot the estimate of $\Expect[ \Gamma_t^F (\pi^{\mathrm{IDS}}) ] \times \mathrm{diam}_F(\Delta^{K-1})$ as a function of $t$, for the two IDS algorithms. The scaling of the information ratio by  $\mathrm{diam}_F(\Delta^{K-1})$ ensures that we are comparing plots with the same units. It is clear that using empirical estimates of $\Expect[ \Gamma_t^F (\pi^{\mathrm{IDS}}) ] \times \mathrm{diam}_F(\Delta^{K-1})$ in Theorem~\ref{t:Rprog} will result in near-identical performance bounds for Shannon-IDS and Tsallis-IDS. This is consistent with our observations in Figure~\ref{fig:counter_cum_regret}. The dashed lines indicate the worst case bounds of $\Gamma_t^F (\pi^{\mathrm{IDS}}) \times \mathrm{diam}_F(\Delta^{K-1})$ which was used to obtain performance bounds in Corollary~\ref{t:Rprog_uniform_bd}. The plots suggest that this will surely lead to looser bounds. More importantly, comparing algorithms based on these looser bounds may lead to a premature conclusion that Tsallis-IDS is better than Shannon-IDS.

In Figure~\ref{fig:counter_rv_info_ratio_TS} we plot the estimate of $\Expect[ \Gamma_t^F (\pi^{\mathrm{TS}}) ] \times \mathrm{diam}_F(\Delta^{K-1})$ for the two cases of $F$ being negentropy potential, and $1/2$-Tsallis entropy. We make similar conclusions as before: Though the worst case upper bound on these quantities can be significantly different for the two different information ratios, when plotted as a function of time, both are observed to be converging to zero. In-fact, we observe that the Shannon information ratio is converging to zero faster than the Tsallis information ratio, implying that an application of Theorem~\ref{t:Rprog} will result in a better regret bound for Thompson sampling, with Shannon information ratio analysis.

\begin{figure}[ht]
\begin{center}
{\includegraphics[trim={1cm 6.7cm 0.5cm 5.3cm}, width=0.8\textwidth]{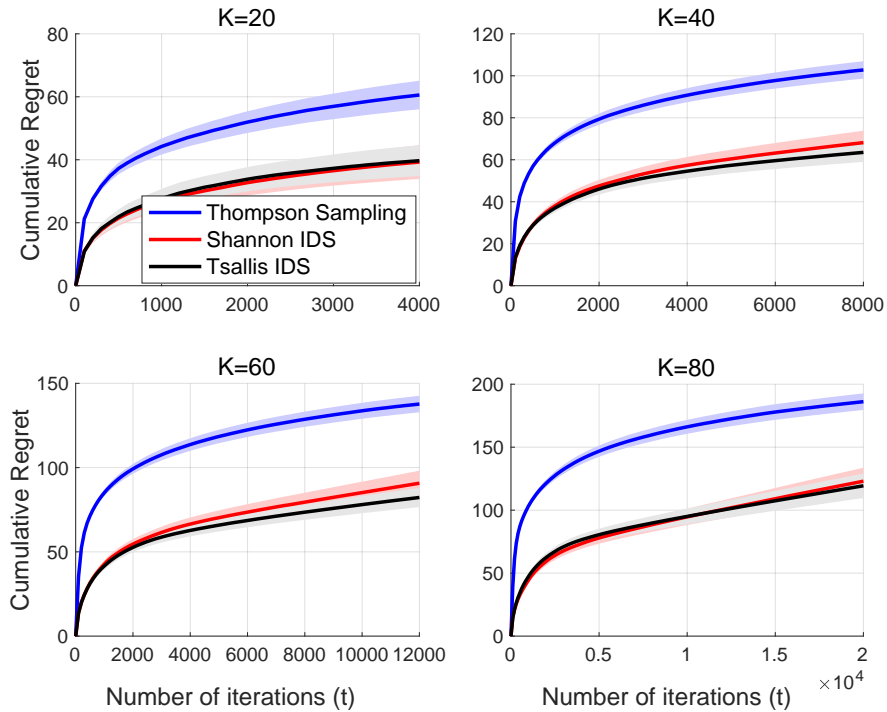}
}
\caption{Expected cumulative Regret of Thompson sampling, Shannon IDS and Tsallis IDS applied to Beta-Bernoulli bandits with $K\in \{20, 40, 60, 80\}$. Shaded regions indicate $2 \sigma$ confidence intervals.}
\label{fig:beta_ber_cum_regret}
\vspace{-0.2in}
\end{center}
\end{figure}

\subsection{Beta Bernoulli}
\label{s:sim_beta_ber}

In our second set of experiments, we consider a $K$-arm bandit problem with independent arms and Bernoulli rewards. Specifically, we consider the Beta-Bernoulli setting, wherein the mean reward for each of the $K$ arms are independently sampled from $\mathrm{Beta}(1,1)$, which is the uniform distribution on $[0,1]$. We show results for $K \in \{20, 40, 60, 80\}$.  

While the implementation of Thompson sampling for this setting is well-known, the implementation of Shannon-IDS and Tsallis-IDS is not straightforward. In-fact, it is practically not possible to exactly compute the information ratio at each time-step, as it involves evaluating integrals that don't have nice closed-form expressions. We can, however, compute approximations of the information ratio, that can be used to obtain approximate versions of the two IDS algorithms. Complete details of implementation are contained in Appendix~\ref{s:exp_details} (also see Section~6.1 of \cite{rusvan18l}; in particular Example~8 and Algorithm~2).

In Figure~\ref{fig:beta_ber_cum_regret} we compare the performances of Thompson sampling, Shannon-IDS and Tsallis-IDS by plotting the expected cumulative regret. The empirical average of the cumulative regret was obtained by running $N=500$ independent runs for each $K$. Once again, we notice that there's little difference in performances of the two IDS algorithms. Both of them are clearly superior to Thompson sampling.

In Figures~\ref{fig:beta_bernoulli_rv_info_ratio_IDS} and~\ref{fig:beta_bernoulli_rv_info_ratio_TS} we plot the estimates of $\Expect[\Gamma_t^F(\pi^{\mathrm{IDS})}]$ and $\Expect[\Gamma_t^F(\pi^{\mathrm{TS}})]$ with $2\sigma$ confidence intervals. We observe that the information ratios decrease much more quickly in this setting, compared to the counter example in Section~\ref{s:sim_counter}. Hence the $\log$-scale for the $Y$-axes. More importantly, the scaled Shannon information ratio is consistently smaller than the scaled Tsallis information ratio, except at $t=0$.

These observations reassert our key point: Accounting for temporal variation of information ratio in regret analysis is crucial for obtaining tight bounds. These new bounds \emph{may} have different implications compared to the existing bounds.

\begin{figure}[htbp]
\begin{center}
{\includegraphics[trim={1cm 9cm 1cm 9.2cm}, width=0.8\textwidth]{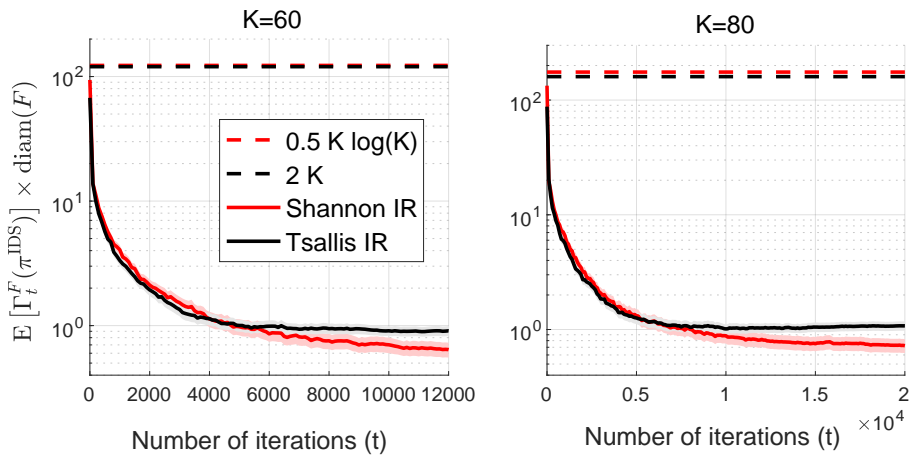}
}
\caption{Empirical average and $2 \sigma$ confidence intervals of $\Gamma_t^F(\pi^{\mathrm{IDS}})$ as a function of $t$ for the Beta-Bernoulli experiment.}
\label{fig:beta_bernoulli_rv_info_ratio_IDS}
\end{center}
\end{figure}

\begin{figure}[htbp]
\begin{center}
{\includegraphics[trim={1cm 9cm 1cm 10cm}, width=0.8\textwidth]{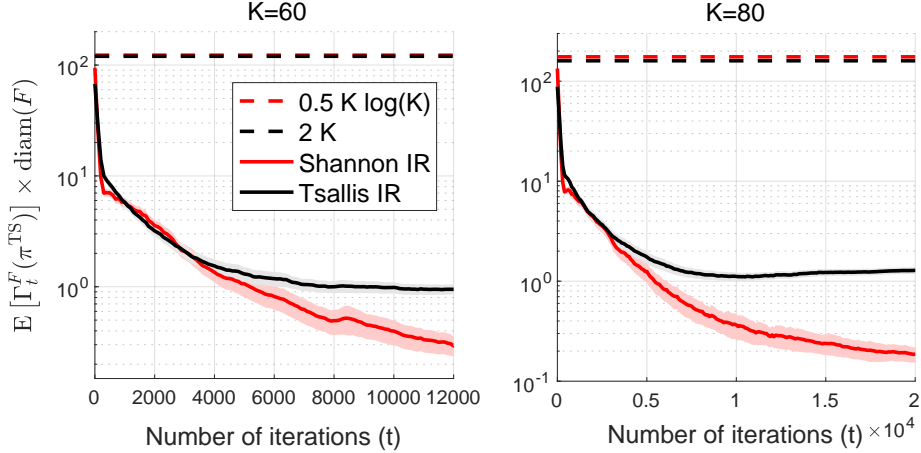}
}
\caption{Empirical average and $2 \sigma$ confidence intervals of $\Gamma_t^F(\pi^{\mathrm{TS}})$ as a function of $t$ for the Beta-Bernoulli experiment.}
\label{fig:beta_bernoulli_rv_info_ratio_TS}
\vspace{-0.1in}
\end{center}
\end{figure}

\section{Example: Sparse Linear Bandits with Non-Uniform Prior}
\label{s:feature_select_ex}

The experimental results we have shown so far do not indicate a clear favourite between Shannon-IDS and Tsallis-IDS. In this section we explore whether it is possible to identify a problem instance where one algorithm is clearly superior to the other. 

Ideally, we would have liked to design a multi-arm bandit problem that exactly falls under our problem formulation of Section~\ref{s:setup}, and then compare the two IDS algorithms on this problem. However, we were unable to find such an example. Instead, we propose here a sparse linear bandit problem, where the agent makes an observation in addition to the reward. We will show that Tsallis-IDS performs strictly worse than Shannon-IDS in this class of problems.

The example we propose is closely related to Example 3 of \cite{rusvan18l}. For simplicity, assume $K = 2^{m} + 1$ for some $m \in \N$. Given $K \geq 3$, let the action set $\clA = \{0,1\}^K$. For $1 \leq i \leq K$, denote $e_i$ to be the $i^{\text{th}}$ basis vector: $e_i \in \Re^K$, $e_i(k) = \ind_{i = k}$.
The prior is assumed to be non-uniform: 
\[
P_*(e_i) = 
\begin{cases}
\frac{1}{2} \, \qquad & \text{if} \,\, \quad i = 1
\\
\frac{1}{2(K-1)} \, \qquad & \text{if} \,\, \quad i \neq 1
\end{cases}
\]

Upon choosing an action $a \in \clA$, the agent receives a reward $R_{t+1,a}$ at time-step $t$, where
\[
R_{t+1,a} = 
\begin{cases}
1 \, \qquad & \text{if} \quad a = p_*
\\
-1 \, \qquad & \text{if} \quad a = e_i \,, 1 \leq i \leq K, a \neq p_*
\\
0 \, \qquad & \text{otherwise}
\end{cases}
\]
In addition to receiving a reward at each time-step, the agent also observes $y = a^\intercal p_*$.

Since the optimal action is $p_*$, the goal of any optimal agent should be to identify $p_*$ as quickly as possible. Therefore, it is obvious that the optimal action in the first iteration is either $a = e_1$ or $a  = [0,1,\ldots,1]$. Either of these actions will reveal the true parameter $p_*$ with probability $1/2$, and with the other $1/2$ probability, $p_*\in \{e_2, \ldots, e_K\}$. In addition, both these actions result in the same expected instantaneous regret of $0$. 

Suppose $p_*$ is not revealed in the first iteration, the optimal sequence of actions from the second iteration on-wards is to perform a binary search. That is, in the second iteration, first half of the last $K-1$ components of $a$ are chosen to be ones, and the second half of the last $K-1$ components are chosen to be zeros, and the process repeats until $p_*$ is identified. In the worst case, the total number of iterations required to find the optimal action via this procedure is $m + 1$ iterations.

Simple computations show that the Shannon-IDS algorithm precisely follows these steps. This is due to the fact that these sequence of actions result in maximum expected reduction in Shannon entropy at each iteration. On the other hand, simple computation shows that for $K \geq 5$, picking a sub-optimal action $a=[1,1,0,\ldots,0]$ in the first iteration results in greater expected reduction in $1/2$-Tsallis entropy compared to action $a=[1,0,\ldots,0]$ or $a=[0,1,\ldots,1]$. Since all these actions incur $0$ expected reward, the Tsallis-IDS algorithm \emph{does not} choose one of the two optimal actions at $t = 0$. This implies that the Tsallis-IDS algorithm provably takes a greater number of iterations to identify $p_*$ in expectation.

An interesting observation we make is that the Thompson sampling agent will require order $K$ iterations in expectation to identify $p_*$. This is because it assigns non-zero probabilities only to actions $a \in \{e_1, e_2, \ldots, e_K\}$, and rules out a single action in each iteration.

\section{Conclusion}

While the sparse linear bandit problem in Section~\ref{s:feature_select_ex} was very much a stylized example, it demonstrates the existence of a class of problems where quantifying information via Shannon entropy is clearly better. 

We searched for a problem where we could demonstrate the advantage of using Tsallis-IDS over Shannon-IDS in a similar manner, but we did not succeed. In fact, the original motivation for considering Example~\ref{ex:counter} was to design a hard problem where Shannon-IDS will fail. However, despite the gap in the performance bounds, we showed that in practice, the realized performance gap between the two algorithms is negligible. Whether there exists a bandit problem where Tsallis-IDS is provably better than Shannon-IDS remains an open question.

Additional theoretical and computational results identified a plausible explanation for the gap in the bounds: Existing techniques that upper bound the regret use worst case bounds on the information ratio, however, this quantity is highly time-varying, and taking into account this property is crucial to obtaining tighter bounds.  

While the results presented in this paper are preliminary, it opens up a lot of avenues for future research. Few of them are listed below.

\begin{romannum}
\item A challenging work for the future is to generalize the order $\sqrt{KT}$ bound in Proposition~\ref{t:sqrtKTbd_hp} for any $P_*$, for both Thompson sampling and Shannon-IDS. This result, if true, will close the gap in performance bounds for the two IDS algorithms.

\item Our paper restricted to the case wherein the information gain (and consequently the information ratio) was defined with respect to the posterior on the optimal action. Extension of the results to general continuous action spaces, such as general linear bandits requires defining a notion of \emph{satisficing} action, that can be thought of as an approximation to the optimal action that is easier to learn. In this set-up, the information gain is defined with respect to the satisficing action \cite{rusvan18s,donvan18}. While the rate-distortion theory provides natural tools for analysis Shannon-IDS in this framework, it is interesting to find out if an analog exists for Tsallis-IDS.

\item An interesting future work is figuring out how to automate the choice of information gain function depending on the application. While we have still not identified a problem where Tsallis-IDS is provably better, the existence of such a problem can not be ruled out, and a generic algorithm that adapts the information gain function according to a specific application may be extremely useful in practice. 
\end{romannum}

\newpage

\bibliography{On_Info_Measures_ICML}
\bibliographystyle{abbrv}

\newpage

\appendix

\appendix
\onecolumn
\begin{center}
{\bf \huge Appendix}
\end{center}

\section*{\Large Overview of the Appendix}

First, we introduce notation that is used throughout the Appendix.

In Section~\ref{s:proof_overview} we give a proof overview of Proposition~\ref{t:Rprog_application}. We also discuss the implications of the result.

In Section~\ref{s:lower_bound_proof}, we provide the proof of Theorem~\ref{t:R_prog_hp_lb_alpha_nz}.

In Section~\ref{s:lower_bound_proof_rand_pol} we strengthen the lower bound of Theorem~\ref{t:R_prog_hp_lb_alpha_nz}. Specifically, we show that the lower bound carries through even with an extended definition of information ratio (see \eqref{e:IR_alpha}) that includes a ``slack'' parameter (as in \cite{rusvan18s,latsze19}). We also show that replacing a uniform (in time) almost sure upper bound on the information ratio, such as the one used in Corollary~\ref{t:Rprog_uniform_bd} (and in previous literature), with a uniform (in time) \emph{high probability} upper bound is insufficient to obtain an order $\sqrt{KT}$ bound for \emph{any} policy, using Shannon information ratio analysis. These results highlight the need for analysis techniques that take into account the temporal nature of information ratio.

In Section~\ref{s:RprogExt_proof} we provide proofs of Theorems~\ref{t:Rprog} and~\ref{t:RprogExt}. We also provide a new template for analysis of the Bayesian regret of any policy in the form of Theorem~\ref{t:RprogExtGen}, which is a generalization of Theorem~\ref{t:RprogExt}. Contrary to previous analysis techniques, the proposed method accounts for he time-varying nature of the information ratio.

In Section~\ref{s:sqrtKTbd_hp_proof} we provide the proof of Proposition~\ref{t:sqrtKTbd_hp}. The proof will use the results of Section~\ref{s:RprogExt_proof} to prove an order $\sqrt{KT}$ bound for Thompson sampling using Shannon information ratio analysis.

Section~\ref{s:exp_details} contains details of our numerical results, as well as some additional experimental results.

\section*{\Large Notations}

For a random variable $X$, we will denote by $P_t(X)$, the probability distribution function of $X$, conditioned on $H_t$: $$P_t(X) \equiv \Pr_t(X \in \cdot)$$ 
Similarly, we denote 
$$
\begin{aligned}
P_t(X|Y) & \equiv \Pr_t(X \in \cdot | Y)
\\
P_t(X|Y = y) & \equiv \Pr_t(X \in \cdot | Y = y)
\end{aligned}
$$

Unless otherwise mentioned, throughout the supplementary material, $F$ is the negentropy potential:
\[
F(u) = \sum_{i\in \clA: u_i > 0} (u_i \log(u_i) - u_i) \,, \qquad u \in \Delta^{K-1}
\]

\newpage 

\section{Proof Overview of  Proposition~\ref{t:Rprog_application}}
\label{s:proof_overview}

\subsection{Proof overview}

It was shown in \cite{latsze19} that the information ratio for Thompson sampling satisfies,
\begin{equation}
\Gamma^F_t(\pi^{\mathrm{TS}}) \leq \sqrt{K} \quad a.s., \quad t = 0,1,\ldots
\label{e:TS_IR_T_upper}
\end{equation}
where $F$ is the $1/2$-Tsallis entropy. Using the fact that ${\mathrm{diam}_F(\Delta^{K-1})} \leq 2 \sqrt{K}$, and applying Theorem~\ref{t:Rprog}, we obtain \eqref{e:TS_reg_bd}. The bound in \eqref{e:HIDS_reg_bd} follows similar arguments: With the same potential $F$, using \eqref{e:IDS} and \eqref{e:TS_IR_T_upper}, we can show that, 
\[
\Gamma^F_t(\pi^{\mathrm{TDS}}) \leq \sqrt{K} \quad a.s., \quad t = 0,1,\ldots
\]
Applying Theorem~\ref{t:Rprog} once again yields \eqref{e:HIDS_reg_bd}.

Deriving the bound in \eqref{e:MIDS_reg_bd} naturally requires consideration of the negentropy potential. It was shown in \cite{rusvan16} that the information ratio for Thompson sampling satisfies (see Proposition~3):
\begin{equation}
\Gamma^F_t(\pi^{\mathrm{TS}}) \leq K/2 \quad a.s., \quad t = 0,1,\ldots
\label{e:TS_IR_H_upper}
\end{equation}
where $F$ is the negentropy potential. With the same potential function, using \eqref{e:IDS} and \eqref{e:TS_IR_H_upper}, it can be shown that (see Proposition~2 of \cite{rusvan18l})
\[
\Gamma^F_t(\pi^{\mathrm{NDS}}) \leq K/2 \quad a.s., \quad t = 0,1,\ldots
\]
Since we have ${\mathrm{diam}_F(\Delta^{K-1})} \! \leq \! \log(K)$ for the negentropy potential, applying Theorem~\ref{t:Rprog} results in \eqref{e:MIDS_reg_bd}.
\qed

\subsection{Comments on the proof}

The information ratio analysis of Thompson sampling was first introduced in \cite{rusvan16}, using the negentropy potential. In comparison to \eqref{e:TS_reg_bd}, the resulting upper bound had an additional $\sqrt{\frac{1}{4}\log(K)}$ factor, as in the right hand side of \eqref{e:MIDS_reg_bd}. It is interesting to note that a change in the definition of information ratio via a change in the information gain function leads to a tighter bound.

\newpage

\section{Proof of a Lower Bound for the Information Ratio}
\label{s:lower_bound_proof}

Here we will show that for the counter example described in Section~\ref{s:counter}, the lower bound in Theorem~\ref{t:R_prog_hp_lb_alpha_nz} holds. 

In Section~\ref{s:lower_bound_proof_rand_pol} we show that our results will hold for a much more generalized definition of the information ratio \eqref{e:IR}, of which the definition considered in \cite{latsze19} is a special case. 

As a first step, we precisely describe the counter example that we use to establish the lower bound.

Fix the number of arms $K$. Let $ p \in (0,1)$, and $\epsy \in (0, \epsy^*]$, where $0 < \epsy^* < 1-p$ satisfies the following inequalities\footnote{Many of the inequalities in \eqref{e:epsy_conditions} can be combined, but we write each of them out to ease verification of the proof.}:
\begin{subequations}
\begin{align}
\epsy^* & < \min\{p, 1-p\}
\label{e:epsy_condition_1}
\\
\left(\frac{\epsy^*}{K} \right)^2& \leq \frac{1}{2} \cdot p (1-p)
\label{e:epsy_condition_2}
\\
\left(\frac{\epsy^*}{K}\right)^2 & \leq \frac{1}{4} \cdot p(1-p)
\label{e:epsy_condition_3}
\\
\frac{\epsy^*}{K} & \leq \frac{1}{4} \cdot \frac{p (1-p)}{|2p - 1|}
\label{e:epsy_condition_4}
\\
\epsy^* & \leq \frac{(K-1)}{K^2} \cdot \frac{p}{(1-p)}
\label{e:epsy_condition_6}
\\
\epsy^* & \leq \frac{(1-p)}{K}
\label{e:epsy_condition_7}
\\
\epsy^* & \leq \frac{(K-1)}{K} \cdot \frac{2}{3} \cdot \frac{(1-p)}{p}
\label{e:epsy_condition_8}
\end{align}
\label{e:epsy_conditions}
\end{subequations} 
For $1 \leq i \leq K$, denote $p_{*,i} \in \clP$ (recall, $\clP$ is the set of all probability measures on $[0,1]^K$) such that, for each $1 \leq k \leq K$, the marginals are
\begin{equation}
p_{*,i}(k) = 
\begin{cases}
\mathrm{Bernoulli}(p + \epsy) \, \qquad & \text{if} \,\, i = k
\\
\mathrm{Bernoulli}(p) \, \qquad & \text{if} \,\, i \neq k
\end{cases}
\label{e:p_star_counter_rep}
\end{equation}
Throughout this section, we let $P_*$ be the probability distribution on $\clP$ such that
\begin{equation}
P_*(p_{*,i}) = \frac{1}{K} \,, \qquad 1 \leq i \leq K
\label{e:P_star_counter_rep}
\end{equation}

In this section, we'll prove the following Proposition. The proof of Theorem~\ref{t:R_prog_hp_lb_alpha_nz} follows directly from this result.

\begin{proposition}
For all policies $\pi$, $K \geq 2$, $p \in (0,1)$, and $\epsy \in (0, \epsy^*]$, the following holds a.s. at $t = 0$: 
\begin{align}
g_t^F(\pi)
& \leq \frac{(K-1)\epsy^2}{K^2}  \cdot \frac{5}{2 p(1-p)}
\label{e:IG_counter}
\\[0.5em]
\Delta_t(\pi) 
& = \frac{\epsy (K-1)}{K}
\label{e:reg_counter}
\end{align}
Consequently,
\begin{equation}
\Gamma_t^F{(\pi)} \geq {{\frac{2}{5}} \cdot p(1-p) \cdot (K-1)}
\label{e:IR_counter}
\end{equation}
\label{t:IR_counter}
\end{proposition}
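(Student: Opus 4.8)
The plan is to evaluate the two ingredients of the information ratio $\Gamma_0^F(\pi) = [\Delta_0(\pi)]^2/g_0^F(\pi)$ separately at $t=0$, where the prior on the optimal action is uniform, $\Pr_0(A_* = j) = 1/K$, and then combine them. The regret term is immediate and exact. Since $H_0 = \emptyset$, every environment $p_{*,i}$ has the same optimal mean, so $R_* = p+\epsy$ and $\E[R_*] = p+\epsy$. Marginalizing over the uniform prior, the expected reward of any fixed arm $a$ is $\E[R_{1,a}] = \tfrac1K(p+\epsy) + \tfrac{K-1}{K}p = p + \epsy/K$, because arm $a$ is optimal only in the single environment $p_{*,a}$. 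Hence $\Delta_0(a) = (p+\epsy)-(p+\epsy/K) = \epsy(K-1)/K$ for every $a$, and therefore $\Delta_0(\pi) = \epsy(K-1)/K$ for \emph{every} policy $\pi$, which is exactly \eqref{e:reg_counter}.

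The crux is the upper bound \eqref{e:IG_counter} on the information gain. The key step is to recognize that, because $F$ is the negentropy potential, $D_F = D_{\mathrm{KL}}$, and the expected posterior-to-prior divergence is a mutual information: $g_0^F(a) = \E_0[D_{\mathrm{KL}}(\Pr_1(A_*\in\cdot)\,\|\,\Pr_0(A_*\in\cdot)) \mid A_0 = a] = I(A_*;R_{1,a})$, computed under the prior (at $t=0$, $A_0$ is independent of $A_*$). Writing the mutual information in its channel form, and using that $R_{1,a}$ is $\mathrm{Ber}(p+\epsy)$ under $A_*=a$, $\mathrm{Ber}(p)$ under $A_*=j\neq a$, and $\mathrm{Ber}(p+\epsy/K)$ marginally, gives
\[
g_0^F(a) = \tfrac1K\, D_{\mathrm{KL}}(\mathrm{Ber}(p+\epsy)\,\|\,\mathrm{Ber}(p+\tfrac{\epsy}{K})) + \tfrac{K-1}{K}\, D_{\mathrm{KL}}(\mathrm{Ber}(p)\,\|\,\mathrm{Ber}(p+\tfrac{\epsy}{K})).
\]
I would then bound each Bernoulli KL by the corresponding $\chi^2$-divergence, $D_{\mathrm{KL}}(\mathrm{Ber}(q)\,\|\,\mathrm{Ber}(q')) \leq (q-q')^2/[q'(1-q')]$. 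The two squared gaps are $(\epsy(K-1)/K)^2$ and $(\epsy/K)^2$; substituting and collecting terms, the factor $(K-1)^2 + (K-1) = K(K-1)$ simplifies to yield
\[
g_0^F(a) \leq \frac{(K-1)\epsy^2}{K^2}\cdot\frac{1}{(p+\epsy/K)(1-p-\epsy/K)}.
\]

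It then remains to show the denominator is bounded below by $\tfrac25 p(1-p)$, which is exactly where the smallness conditions on $\epsy^*$ enter. Expanding $(p+\tfrac{\epsy}{K})(1-p-\tfrac{\epsy}{K}) = p(1-p) + \tfrac{\epsy}{K}(1-2p) - \tfrac{\epsy^2}{K^2}$ and applying \eqref{e:epsy_condition_4} to control $\tfrac{\epsy}{K}|1-2p| \le \tfrac14 p(1-p)$ together with \eqref{e:epsy_condition_3} to control $\tfrac{\epsy^2}{K^2} \le \tfrac14 p(1-p)$ gives the lower bound $\tfrac12 p(1-p) \ge \tfrac25 p(1-p)$, establishing \eqref{e:IG_counter} (with room to spare in the constant $5/2$). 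Since this bound on $g_0^F(a)$ is uniform in $a$, it passes to $g_0^F(\pi) = \sum_a \pi(a)\,g_0^F(a)$ for any $\pi$; dividing $[\Delta_0(\pi)]^2 = \epsy^2(K-1)^2/K^2$ by it yields \eqref{e:IR_counter}. The main obstacle is the information-gain estimate: the mutual-information identity plus the $\chi^2$ bound reduces it to elementary algebra, but one must verify that the prescribed conditions on $\epsy^*$ genuinely keep the Bernoulli denominator away from degenerating, since otherwise the bound blows up as $p$ approaches $0$ or $1$. An alternative to the mutual-information route is to compute the two posteriors $\Pr_1(A_*\in\cdot)$ (one for each reward outcome) explicitly via Bayes' rule and expand the KL divergence directly; this is more tedious but entirely self-contained.
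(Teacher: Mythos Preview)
Your proof is correct and takes a genuinely cleaner route than the paper for the information-gain bound. Both you and the paper start from the same identity (the paper states it as Lemma~\ref{t:IG_Regret_Prop2_RVR}, which is Proposition~2 of Russo--Van~Roy), writing
\[
g_0^F(a)=\tfrac{1}{K}\,D_{\mathrm{KL}}\bigl(\mathrm{Ber}(p+\epsy)\,\|\,\mathrm{Ber}(p+\tfrac{\epsy}{K})\bigr)+\tfrac{K-1}{K}\,D_{\mathrm{KL}}\bigl(\mathrm{Ber}(p)\,\|\,\mathrm{Ber}(p+\tfrac{\epsy}{K})\bigr).
\]
From here the paths diverge. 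The paper Taylor-expands each logarithm in the Bernoulli KL to third order, collects the quadratic main term and a cubic remainder $\tilde g(\epsy,K,p)$, and then invokes \emph{all} of conditions \eqref{e:epsy_condition_2}--\eqref{e:epsy_condition_8} to control both pieces, arriving at the constant $5/2$. You instead apply the one-line inequality $D_{\mathrm{KL}}(P\|Q)\le\chi^2(P\|Q)$, which for Bernoullis reads $D_{\mathrm{KL}}(\mathrm{Ber}(q)\|\mathrm{Ber}(q'))\le (q-q')^2/[q'(1-q')]$; the factor $(K-1)^2+(K-1)=K(K-1)$ collapses and you obtain $g_0^F(a)\le \frac{(K-1)\epsy^2}{K^2}\cdot\frac{1}{(p+\epsy/K)(1-p-\epsy/K)}$ directly. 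Only conditions \eqref{e:epsy_condition_3} and \eqref{e:epsy_condition_4} are then needed to get $(p+\epsy/K)(1-p-\epsy/K)\ge \tfrac12 p(1-p)$, and you end up with the sharper constant $2$ in place of $5/2$ (hence ``room to spare''). The paper's Taylor approach is more laborious here but entirely self-contained; your $\chi^2$ route is shorter, uses fewer of the smallness hypotheses on $\epsy^*$, and yields a tighter bound. The regret computation and the final division to obtain \eqref{e:IR_counter} are identical in both.
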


\begin{proof}[\bf Proof of Theorem~\ref{t:R_prog_hp_lb_alpha_nz}]
The proof follows directly from Proposition~\ref{t:IR_counter} by letting $p = 1/2$ in \eqref{e:IR_counter}, and using the fact that $K \geq 2$. 
\end{proof}

\subsection*{Proof of Proposition~\ref{t:IR_counter}}

The proof relies on Lemmas~\ref{t:Post_R_largeT} and~\ref{t:IG_Regret_Prop2_RVR}.
\begin{lemma}
For each $t \geq 0$,
\begin{subequations}
\begin{align}
\Pr_t(R_{t+1,a} = 0 | A_* = a_* ) 
& = 
\begin{cases}
1 - (p + \epsy) \,\, & \text{If} \,\, a_* = a
\\
1 - p \,\, & \text{If} \,\, a_* \neq a
\end{cases} 
\label{e:Post_R_counter_0}
\\[0.7em]
\Pr_t(R_{t+1,a} = 1 | A_* = a_* ) 
& = 
\begin{cases}
p + \epsy \,\, & \text{If} \,\, a_* = a
\\
p \,\, & \text{If} \,\, a_* \neq a
\end{cases}
\label{e:Post_R_counter_1}
\\[1em]
\Pr_t(R_{t+1,a} = 0) 
& = 
1 - p - \epsy \cdot \Pr_t(A_* = a)
\label{e:Post_R_marginals_0}
\\[0.7em]
\Pr_t(R_{t+1,a} = 1)
& = 
p + \epsy \cdot \Pr_t(A_* = a)
\label{e:Post_R_marginals_1}
\end{align}
\end{subequations}
\label{t:Post_R_largeT}
\end{lemma}
\begin{proof}
Expressions \eqref{e:Post_R_counter_0} and \eqref{e:Post_R_counter_1} are straightforward from the definition of Example~\ref{ex:counter}. The marginals are also straightforward to compute:
\begin{equation}
\begin{aligned}
\Pr_t(R_{t+1,a} = 0) &= \sum_{a_* \in \clA} \Pr_t(A_* = a_*) \cdot  \Pr_t(R_{t+1,a} = 0 | A_* = a_* ) 
\\
& = (1-p) - \Pr_t(A_* = a) \cdot \epsy 
\\
\Pr_t(R_{t+1,a} = 1) &= \sum_{a_* \in \clA} \Pr_t(A_* = a_*) \cdot  \Pr_t(R_{t+1,a} = 1 | A_* = a_*) 
\\
& = p + \Pr_t(A_* = a) \cdot \epsy
\end{aligned}
\label{e:Post_R_marginals_gen}
\end{equation}
\end{proof}

The following result is an extension of Proposition~2 of \cite{rusvan16} which considered the special case of Thompson sampling agent.
\begin{lemma}
For any policy $\pi$,
\begin{align}
g_t^F(\pi)
& = \sum_{a_*, a \in \clA} \pi(a) \cdot \Pr_t(A_* = a_*) \Big[ D_{{\mathrm{KL}}} \Big(P_t \big( R_{t+1, a} | A_* =  a_* \big) || P_t \big( R_{t+1,a} \big) \Big) \Big]
\label{e:IG_Prop2_RVR}
\\
\Delta_t(\pi)
& = \sum_{a \in \clA} \Pr_t(A_* = a) \cdot \E_t \big[R_{t+1, a} | A_* \! = \! a \big] -  \sum_{a \in \clA}  \pi(a) \cdot \E_t \big[ R_{t+1,a} \big]
\label{e:Regret_Prop2_RVR}
\end{align}
\label{t:IG_Regret_Prop2_RVR}
\end{lemma}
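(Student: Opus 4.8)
The two identities are the mutual-information and regret decompositions underlying the information-ratio analysis, specialized so that the observation at time $t$ is the scalar reward $R_{t+1,a}$ of the played arm. Since $F$ is the negentropy potential, $D_F = D_{\mathrm{KL}}$, and hence $g_t^F(a) = \E_t\big[D_{\mathrm{KL}}(P_{t+1}(A_*) \| P_t(A_*)) \mid A_t = a\big]$. The plan is to treat $\Delta_t(\pi)$ and $g_t^F(\pi)$ separately; in each case I would first establish the per-action identity and then average against $\pi(a)$, using $\sum_{a} \pi(a) = 1$.

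For the regret identity \eqref{e:Regret_Prop2_RVR}, the one nontrivial input is the equality $\E_t[R_*] = \sum_{a_* \in \clA} \Pr_t(A_* = a_*)\, \E_t[R_{t+1,a_*} \mid A_* = a_*]$. I would derive this from the structure of the model: $R_*$ is the mean reward of arm $A_*$ under $p_*$, and because $\bfmR$ is i.i.d.\ given $p_*$ and independent of $A_*$ given $p_*$, we have $\E[R_{t+1,a_*} \mid p_*, H_t, A_* = a_*] = \int r_{a_*}\, p_*(dr)$, which equals $R_*$ on the event $\{A_* = a_*\}$. Conditioning on $H_t$ and $A_* = a_*$, taking expectations, and summing over $a_*$ yields the display; subtracting $\sum_{a} \pi(a)\E_t[R_{t+1,a}]$ and using $\Delta_t(\pi) = \E_t[R_*] - \sum_{a} \pi(a)\E_t[R_{t+1,a}]$ gives \eqref{e:Regret_Prop2_RVR}.

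For the information-gain identity \eqref{e:IG_Prop2_RVR}, I would recognize $g_t^F(a)$ as the conditional mutual information $I_t(A_*; R_{t+1,a})$ between the optimal action and the observed reward, given $H_t$. On $\{A_t = a\}$ the only new datum in $H_{t+1}$ is $R_{t+1,a}$, so $P_{t+1}(A_*) = P_t(A_* \mid R_{t+1,a})$, and the definition of conditional mutual information gives $I_t(A_*; R_{t+1,a}) = \E_t\big[D_{\mathrm{KL}}(P_t(A_* \mid R_{t+1,a}) \| P_t(A_*))\big] = g_t^F(a)$. The key step is then the symmetry (reversibility of the two variables) of mutual information, $I_t(A_*; R_{t+1,a}) = \sum_{a_* \in \clA}\Pr_t(A_* = a_*)\, D_{\mathrm{KL}}\big(P_t(R_{t+1,a} \mid A_* = a_*) \| P_t(R_{t+1,a})\big)$, which recasts the divergence in terms of reward laws. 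Averaging against $\pi(a)$ produces \eqref{e:IG_Prop2_RVR}.

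The main obstacle is the measure-theoretic bookkeeping around conditioning on the action. One must check that, given $H_t$, the action $A_t$ is selected by the policy independently of $(A_*, \bfmR)$ (the policy is a function of $H_t$ together with at most exogenous randomization), so that further conditioning on $A_t = a$ does not alter the joint law of $(A_*, R_{t+1,a})$ given $H_t$; this is what legitimizes writing $P_t(R_{t+1,a} \mid A_* = a_*)$ without reference to $A_t$ and identifying $g_t^F(a)$ with $I_t(A_*; R_{t+1,a})$. The remaining content—the symmetry of mutual information and the expansion of $R_*$—is standard and mirrors Proposition~2 of \cite{rusvan16}; the present statement merely extends it from $\pi^{\mathrm{TS}}$ to an arbitrary policy $\pi$, which is immediate once the per-action identities are in hand.
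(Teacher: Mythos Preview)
Your proposal is correct and follows essentially the same approach as the paper: the paper's proof simply states that the argument is identical to that of Proposition~2 in \cite{rusvan16}, with $\Pr_t(A_t=a)=\Pr_t(A_*=a)$ replaced by $\Pr_t(A_t=a)=\pi(a)$, and your sketch is precisely that argument (mutual-information symmetry for $g_t^F$, tower property for $\Delta_t$) together with the per-action averaging. If anything, you have supplied more detail than the paper itself, including the justification for why conditioning on $A_t=a$ does not alter the joint law of $(A_*,R_{t+1,a})$ given $H_t$.
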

The proof of Lemma~\ref{t:IG_Regret_Prop2_RVR} follows exactly along the lines of proof of Proposition~2 in \cite{rusvan16}, except that $\Pr_t(A_t = a) = \Pr_t(A_* = a)$ in their proof (which holds for Thompson sampling) is replaced by $\Pr_t(A_t = a) = \pi(a)$. 

We are now ready to prove Proposition~\ref{t:IR_counter}.

\begin{proof}[\bf Proof of Proposition~\ref{t:IR_counter}]

Recall that at $t = 0$, $\Pr_t(A_* = a) = 1/K$ for each $a \in \clA$. It follows from \eqref{e:Post_R_marginals_0} and \eqref{e:Post_R_marginals_1} of Lemma~\ref{t:Post_R_largeT} that
for each $a \in \clA$,
\begin{equation}
\begin{aligned}
\Pr_t(R_{t+1,a} = 0) & = (1-p) - \frac{\epsy}{K} 
\\
\Pr_t(R_{t+1,a} = 1)  & = p + \frac{\epsy}{K}
\end{aligned}
\label{e:Post_R_marginals}
\end{equation}
Using \eqref{e:Post_R_counter_0}, \eqref{e:Post_R_counter_1} of Lemma~\ref{t:Post_R_largeT} and \eqref{e:Post_R_marginals} in the expression for relative entropy \eqref{e:KL}, we obtain:
\begin{align}
D_{{\mathrm{KL}}} \Big(P_t \big( R_{t+1, a} | A_* \! = \! a_* \big) || P_t \big( R_{t+1,a} \big) \Big)
& \! = \!
\begin{cases}
(1 \! - \! p \! - \! \epsy) \log \left(\frac{1 - p - \epsy}{1 - p - \epsy / K} \right) + (p \! + \! \epsy) \log \left( \frac{p + \epsy}{p + \epsy / K} \right) & \,\,\, \text{If} \,\, a = a_*
\\[0.5em]
(1 \! - p) \log \left(\frac{1 - p}{1 - p - \epsy / K} \right) + p \log \left(\frac{p}{p + \epsy / K}\right) & \,\,\, \text{If} \,\, a \neq a_*
\end{cases}
\label{e:KL_counter}
\end{align}
Next, using the fact that $\Pr_t(A_* = a_*) = 1/K$ for each $a_* \in \clA$, we have
\begin{equation}
\begin{aligned}
& \sum_{a_* \in \clA} \Pr_t(A_* = a_*) \Big[ D_{{\mathrm{KL}}} \Big(P_t \big( R_{t+1, a} | A_* =  a_* \big) || P_t \big( R_{t+1,a} \big) \Big) \Big] 
\\
& \hspace{1in}= \Big( \frac{K-1}{K} \Big) \left( (1 \! - p) \log \left(\frac{1 - p}{1 - p - \epsy / K} \right) + p \log \left(\frac{p}{p + \epsy / K}\right) \right)
\\
& \hspace{1.2in}+ \frac{1}{K} \left( (1 \! - \! p \! - \! \epsy) \log \left(\frac{1 - p - \epsy}{1 - p - \epsy / K} \right) + (p \! + \! \epsy) \log \left( \frac{p + \epsy}{p + \epsy / K} \right) \right)
\end{aligned}
\label{e:KL_counter_exp}
\end{equation}
Note that the right hand side of \eqref{e:KL_counter_exp} does not depend on $a$ anymore. Substituting \eqref{e:KL_counter_exp} into \eqref{e:IG_Prop2_RVR} of Lemma~\ref{t:IG_Regret_Prop2_RVR}, we have, for any policy $\pi$ (since $\sum_{a\in\clA}\pi(a) = 1$), 
\[
\begin{aligned}
g_t^F(\pi)
= & \Big( \frac{K-1}{K} \Big) \left( (1 \! - p) \log \left(\frac{1 - p}{1 - p - \epsy / K} \right) + p \log \left(\frac{p}{p + \epsy / K}\right) \right)
\\
& + \frac{1}{K} \left( (1 \! - \! p \! - \! \epsy) \log \left(\frac{1 - p - \epsy}{1 - p - \epsy / K} \right) + (p \! + \! \epsy) \log \left( \frac{p + \epsy}{p + \epsy / K} \right) \right)
\end{aligned}
\]
A simplification of the right hand side yields:
\begin{equation}
\begin{aligned}
g_t^F(\pi)
& = -(1 \! - \!p) \log \left( 1 \! - \! \frac{\epsy}{K (1-p)} \right) \! - \! p \log \left(1 \! + \! \frac{\epsy}{Kp} \right)
\\
& + \frac{1}{K} \left( (1 \! - \! p) \log \left( 1 \! - \! \frac{\epsy}{1 \! - \!p} \right) \! + \! p \log \left(1 \! + \! \frac{\epsy}{p} \right) \right)
\\
& + \frac{1}{K} \Big( \epsy \log \left( p + \epsy \right) - \epsy \log \left( 1 - p - \epsy \right) + \epsy \log \left(1 - p - \epsy/K \right) - \epsy \log \left( p + \epsy / K \right) \Big)
\end{aligned}
\label{e:ItF_counter_itm}
\end{equation}

We use the following inequalities that are obtained using Taylor series to upper bound each of the terms in \eqref{e:ItF_counter_itm}: For $0 < x < 1$,
\begin{subequations}
\begin{align}
\log(1+x) & \leq x - \frac{x^2}{2} + \frac{x^3}{3}
\label{e:log_approx_1}
\\
- \log(1+x) & \leq - x + \frac{x^2}{2}
\label{e:log_approx_2}
\\
\log(1 - x) & \leq - x - \frac{x^2}{2}
\label{e:log_approx_3}
\\
- \log(1 - x) & \leq x + \frac{x^2}{2} + \frac{x^3}{3 (1-x)}
\label{e:log_approx_4}
\end{align}
\end{subequations} 
We can upper bound the first two terms in \eqref{e:ItF_counter_itm} using \eqref{e:log_approx_2}  and \eqref{e:log_approx_4} (here we use condition \eqref{e:epsy_condition_1} for $\epsy$)
\begin{align}
-(1 \! - \!p) \log \left( 1 \! - \! \frac{\epsy}{K (1-p)} \right) \! - \! p \log \left(1 \! + \! \frac{\epsy}{Kp} \right) 
& \leq
(1 \! - \!p) \left( \frac{\epsy}{K (1-p)}
\! + \! \frac{\epsy^2}{2 K^2 (1-p)^2} 
\! + \! \frac{\epsy^3}{3 K^3 (1-p)^3} \cdot \Big(1 - \frac{\epsy}{K(1-p)} \Big)^{-1}
\right)
\nonumber
\\
& \qquad \qquad \! - \! p \left(\frac{\epsy}{Kp} - \frac{\epsy^2}{2 K^2 p^2} \right) 
\nonumber
\\
& =
\frac{\epsy^2}{2 K^2 (1-p)} 
\! + \! \frac{\epsy^3}{3 K^2 (1-p)} \cdot \frac{1}{K(1-p) - \epsy}
\! + \! \frac{\epsy^2}{2 K^2 p}
\nonumber
\\
& \leq \frac{\epsy^2}{2K^2 p (1-p)} \! + \! \frac{\epsy^3}{3 K^2 (K-1) (1-p)^2}
\label{e:ItF_counter_itm_f2}
\end{align}
where in the last inequality we have used $\epsy \leq (1-p)$.
Along similar lines, we can bound the second two terms in \eqref{e:ItF_counter_itm} using \eqref{e:log_approx_1} and \eqref{e:log_approx_3}:
\begin{align}
\frac{1}{K} \left( (1 \! - \! p) \log \left( 1 \! - \! \frac{\epsy}{1 \! - \!p} \right) \! + \! p \log \left(1 \! + \! \frac{\epsy}{p} \right) \right) 
\leq \frac{- \epsy^2 }{2K p (1-p)} + \frac{\epsy^3}{3p^2}
\label{e:ItF_counter_itm_s2}
\end{align}

Now, consider the last four terms of \eqref{e:ItF_counter_itm}:
\begin{align}
\frac{1}{K} \Big( \epsy \log \left( p \! + \! \epsy \right) \! & - \! \epsy \log \left( 1 \! - \! p \! - \! \epsy \right) \! + \! \epsy \log \left(1 \! - \! p \! - \! \epsy/K \right) \! - \! \epsy \log \left( p \! + \! \epsy / K \right) \Big)
\nonumber
\\
&
= \frac{1}{K} \Big(\epsy \log \left( 1 \! + \! \frac{(K-1)\epsy}{K (p \! + \! \epsy / K)} \right) 
- \epsy \log \left( 1 \! - \! \frac{(K-1)\epsy}{K (1 \! - \! p \! - \! \epsy / K)} \right)
\nonumber
\\
&
\overset{(a)}{\leq} \frac{1}{K} \left(  \frac{(K-1)\epsy^2}{K (p \! + \! \epsy / K)} 
\! + \! \frac{(K-1)\epsy^2}{K (1 \! - \! p \! - \! \epsy / K)}
\! + \! \frac{(K-1)^2 \epsy^3}{2 K^2 (1 \! - \! p \! - \! \epsy / K)^2} \cdot \left(1 - \frac{(K-1) \epsy}{K(1-p-\epsy/K)} \right)^{-1}
\right)
\nonumber
\\
&
= \frac{(K-1)\epsy^2}{K} \left(  \frac{1}{K p \! + \! \epsy}
+ \! \frac{1}{K \! - \! K p \! - \! \epsy}  \right) 
\! + \! \frac{(K-1)^2 \epsy^3}{2 K^2 (1 \! - \! p \! - \! \epsy / K)} \cdot \frac{1}{K(1-p-\epsy)}
\nonumber
\\
&
\overset{(b)}{\leq} \frac{(K-1)\epsy^2}{K^2} \cdot \frac{1}{(p \! + \! \epsy / K )(1 \! - \! p \! - \! \epsy / K)} \! + \! \frac{(K-1)^2 \epsy^3}{2 K^3 (1 \! - \! p \! - \! \epsy)^2}
\label{e:ItF_counter_itm_l4}
\end{align}
where $(a)$ uses \eqref{e:log_approx_1} and \eqref{e:log_approx_4}, and $(b)$ uses $(1-p-\epsy/K)^{-1} \leq (1-p-\epsy)^{-1}$.

Substituting \eqref{e:ItF_counter_itm_f2}, \eqref{e:ItF_counter_itm_s2}, and \eqref{e:ItF_counter_itm_l4} into \eqref{e:ItF_counter_itm}, we have:
\begin{align}
g_t^F(\pi)
& \leq \frac{(K-1)\epsy^2}{K^2}  \left ( \frac{1}{(p + \epsy / K )(1 - p - \epsy / K)} - \frac{1}{2 p(1-p)}  \right) + \tilde{g}(\epsy, K, p)
\label{e:IG_counter_INT}
\\[0.5em]
\tilde{g}(\epsy, K, p) & = \frac{\epsy^3}{3 K^2 (K-1) (1-p)^2} + \frac{\epsy^3}{3p^2} + \frac{(K-1)^2 \epsy^3}{2 K^3 (1 - p - \epsy)^2}
\label{e:IG_g_counter}
\end{align}

To obtain the final bound \eqref{e:IG_counter}, we need to choose $\epsy$ small enough so that:
\begin{subequations}
\begin{align}
\frac{1}{(p + \epsy / K )(1 - p - \epsy / K)} 
& \leq \frac{2}{p(1-p)}
\label{e:first_term_bd_IG}
\\
\frac{\epsy^3}{3 K^2 (K-1) (1-p)^2} + \frac{\epsy^3}{3p^2} + \frac{(K-1)^2 \epsy^3}{2 K^3 (1 - p - \epsy)^2} & \leq  \frac{(K-1)\epsy^2}{K^2} \cdot \frac{1}{p (1-p)}
\label{e:second_term_bd_IG}
\end{align}
\end{subequations}
Applying \eqref{e:epsy_condition_2}-\eqref{e:epsy_condition_4}, we obtain \eqref{e:first_term_bd_IG} . And applying \eqref{e:epsy_condition_6}-\eqref{e:epsy_condition_8}, we obtain \eqref{e:second_term_bd_IG}.

Substituting \eqref{e:first_term_bd_IG} and \eqref{e:second_term_bd_IG} into \eqref{e:IG_g_counter} and \eqref{e:IG_counter_INT} gives us the final bound \eqref{e:IG_counter}.

We next show that \eqref{e:reg_counter} holds.  Using \eqref{e:Regret_Prop2_RVR} of Lemma~\ref{t:IG_Regret_Prop2_RVR}, 
$$
\begin{aligned}
\Delta_t(\pi) 
& = \sum_{a \in \clA} \Pr_t(A_* = a) \E_t \big[R_{t+1, a} | A_* \! = \! a \big] -  \sum_{a \in \clA}  \pi(a) \E_t \big[ R_{t+1,a} \big]
\\
& = p + \epsy -  \sum_{a \in \clA} \pi(a) \cdot \Big( p + \frac{\epsy}{K} \Big)
\\
& = \frac{K-1}{K} \cdot \epsy
\end{aligned}
$$
where we have used \eqref{e:Post_R_counter_1} and \eqref{e:Post_R_marginals_1} along with the fact that $\Pr_t(A_* = a) = 1/K$ for each $a \in \clA$ to obtain the second equality.

The final bound \eqref{e:IR_counter} follows from definition \eqref{e:IR}.

\end{proof}

\newpage

\section{Strengthening the Lower Bound of Section~\ref{s:lower_bound_proof}}
\label{s:lower_bound_proof_rand_pol}

\subsection{Definitions and Goals}

We first introduce some definitions that are useful for extending our results in the main draft.

For any potential $F$ that satisfies $\mathrm{diam}_F(\Delta^{K-1}) < \infty$, and $\alpha \in \Re^+$, we consider the following generalized definition of information ratio $\Gamma_t^F (\pi,\alpha)$ associated with policy $\pi$ at time-step $t$:
\begin{equation}
\Gamma_t^F (\pi,\alpha)
\eqdef \frac{\big( \Delta_t(\pi) - \alpha  \big)^2}{g_t^F(\pi)}
\label{e:IR_alpha}
\end{equation}
The above definition of the information ratio is the same as the one considered in \cite{latsze19}  (see for example, Corollary~4), but written in a different form.
The definition of information ratio in \eqref{e:IR} (and in Section~\ref{s:lower_bound_proof}) is a special case of \eqref{e:IR_alpha}, with $\alpha = 0$. 

For any potential $F$ that satisfies $\mathrm{diam}_F(\Delta^{K-1}) < \infty$, $T \geq 0$, policy $\pi$, $\alpha \in \Re^+$, and $\delta \in [0,1]$, we define
\begin{align}
{\mathrm{Regret}}^{\mathrm{F}}(T, \pi, \alpha,\delta) 
& \eqdef 
\sqrt{\gamma_F(T, \pi, \alpha, \delta) \cdot T \cdot {\mathrm{diam}_F(\Delta^{K-1})}}
+ \left( \alpha + \delta \cdot \mathrm{Reg}_{\max} \right) \cdot T
\label{e:Rprog_def}
\end{align}
where $\gamma_F(T,\pi,\alpha, \delta)$ is any deterministic constant that satisfies, for each $0 \leq t \leq T-1$,
\begin{align}
\Pr \Big( \Gamma_t^F (\alpha,\pi) \leq \gamma_F(T, \pi, \alpha, \delta) \Big) \geq 1  -  \delta
\label{e:IR_bound}
\end{align}
and
\begin{equation}
\mathrm{Reg}_{\max} = \max_{t \geq 0 \,, a \in \clA} \Delta_t(a)
\label{e:reg_max}
\end{equation}

Note that the upper bound $\gamma_F(\pi)$ defined in Section~\ref{s:IR_BR} (above Corollary~\ref{t:Rprog_uniform_bd}) is a special case of $\gamma_F(T,\pi,\alpha, \delta)$, with $\alpha = \delta = 0$, and $T = \infty$. 
And  the right hand side of \eqref{e:Rprog_uniform_bd} in Corollary~\ref{t:Rprog_uniform_bd} is a special case of ${\mathrm{Regret}}^{\mathrm{F}}(T, \pi, \alpha,\delta)$, with $\alpha = \delta = 0$.

The goal of this section is to show that,
\begin{romannum}
\item generalizing the definition of information in \eqref{e:IR} to the one in \eqref{e:IR_alpha}, and
\item replacing a uniform (in time) a.s. upper bound on the information ratio, such as the one used in Corollary~\ref{t:Rprog_uniform_bd}, with a uniform (in time) high probability upper bound as in \eqref{e:IR_bound}
\end{romannum}
both are insufficient to obtain an order $\sqrt{KT}$ bound for \emph{any} policy, using Shannon information ratio analysis. 

This highlights the need for analysis techniques that take into account the temporal nature of information ratio that was introduced in Section~\ref{s:time_var_IR}.

\subsection{An Upper Bound on ${\mathrm{Regret}}$ using ${\mathrm{Regret}}^{\mathrm{F}}$}

As a first step, the following result shows how ${\mathrm{Regret}}^{\mathrm{F}}$ can be used to upper bound the Bayesian regret of any policy $\pi$. The proof uses Lemma~\ref{t:div_bd} that is proved in Section~\ref{s:RprogExt_proof}.

\begin{theorem}
\label{t:True_Rprog}
For any policy $\pi$, $T \geq 1$, $K \geq 2$, $\alpha \in \Re^+$, $\delta \in [0,1]$,
\begin{equation}
{\mathrm{Regret}}(T, \pi) \leq {\mathrm{Regret}}^{\mathrm{F}}(T,\pi,\alpha,\delta)
\label{e:True_Rprog}
\end{equation}
where $F$ is convex, and satisfies $\mathrm{diam}_F(\Delta^{K-1}) < \infty$.
\end{theorem}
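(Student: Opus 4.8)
The plan is to follow the information-ratio regret argument behind Theorem~\ref{t:Rprog} and Theorem~\ref{t:RprogExt}, now carrying the slack parameter $\alpha$ and the fixed failure probability $\delta$ through every step. First I would rewrite the Bayesian regret as a sum of per-step expected regrets: since $\E_t[R_* - R_{t+1,A_t}] = \sum_{a}\pi(a)\Delta_t(a) = \Delta_t(\pi)$, the tower property gives $\mathrm{Regret}(T,\pi) = \sum_{t=0}^{T-1}\E[\Delta_t(\pi)]$. If $\gamma_F(T,\pi,\alpha,\delta)=\infty$ the claim is vacuous, so I assume it finite and abbreviate it $\gamma_F$.

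Next I would control $\Delta_t(\pi)$ on the good event $E_t \eqdef \{\Gamma_t^F(\pi,\alpha)\le\gamma_F\}$, which by \eqref{e:IR_bound} has $\Pr(E_t)\ge 1-\delta$ and is $H_t$-measurable. On $E_t$, the definition \eqref{e:IR_alpha} together with $g_t^F(\pi)\ge 0$ (nonnegativity of the Bregman divergence) gives $(\Delta_t(\pi)-\alpha)^2 \le \gamma_F\, g_t^F(\pi)$, hence $\Delta_t(\pi)\le \alpha+\sqrt{\gamma_F\,g_t^F(\pi)}$ when $\Delta_t(\pi)\ge\alpha$; and when $\Delta_t(\pi)<\alpha$ the same inequality holds trivially. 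On the complement I use $\Delta_t(\pi)\le\max_a\Delta_t(a)\le\mathrm{Reg}_{\max}$ from \eqref{e:reg_max}. Since all terms are nonnegative, the pointwise bound $\Delta_t(\pi)\le \alpha+\sqrt{\gamma_F\,g_t^F(\pi)}+\mathrm{Reg}_{\max}\ind_{E_t^c}$ holds for every outcome; taking expectations and using $\Pr(E_t^c)\le\delta$ yields $\E[\Delta_t(\pi)]\le \alpha+\sqrt{\gamma_F}\,\E[\sqrt{g_t^F(\pi)}]+\delta\,\mathrm{Reg}_{\max}$.

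Summing over $t$ gives $\mathrm{Regret}(T,\pi)\le (\alpha+\delta\,\mathrm{Reg}_{\max})T+\sqrt{\gamma_F}\sum_{t=0}^{T-1}\E[\sqrt{g_t^F(\pi)}]$, so it remains to bound the last sum. I would apply Jensen (concavity of $\sqrt{\cdot}$) to obtain $\E[\sqrt{g_t^F(\pi)}]\le\sqrt{\E[g_t^F(\pi)]}$, then Cauchy--Schwarz across the $T$ steps to get $\sum_t\sqrt{\E[g_t^F(\pi)]}\le\sqrt{T\sum_t\E[g_t^F(\pi)]}$. The final ingredient is the cumulative information-gain bound $\sum_{t=0}^{T-1}\E[g_t^F(\pi)]\le\mathrm{diam}_F(\Delta^{K-1})$ supplied by Lemma~\ref{t:div_bd}; substituting it produces $\sqrt{\gamma_F\,T\,\mathrm{diam}_F(\Delta^{K-1})}$, which together with the additive term is exactly $\mathrm{Regret}^{\mathrm{F}}(T,\pi,\alpha,\delta)$ of \eqref{e:Rprog_def}.

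The calculus (per-step decomposition, Jensen, Cauchy--Schwarz) is routine; the step I expect to be the main obstacle is the interaction between the high-probability control and the expectation. One cannot restrict Cauchy--Schwarz to the good events while still invoking the telescoping bound, which holds only for the unrestricted full-horizon sum $\sum_t\E[g_t^F(\pi)]$. The resolution is to peel off the bad-event contribution first, paying a deterministic $\delta\,\mathrm{Reg}_{\max}$ per step, and only afterward drop the indicator on the good event; this ordering is precisely what makes Lemma~\ref{t:div_bd} applicable. A secondary care point is the case analysis on whether $\Delta_t(\pi)\ge\alpha$ when extracting a square root from \eqref{e:IR_alpha}, which is necessary because $\Gamma_t^F(\pi,\alpha)$ squares a quantity that need not be nonnegative.
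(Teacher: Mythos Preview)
Your proposal is correct and follows essentially the same route as the paper's proof: rewrite the per-step regret via the information ratio, split on the high-probability event $\{\Gamma_t^F(\pi,\alpha)\le\gamma_F\}$ to pay $\delta\,\mathrm{Reg}_{\max}$ per step on the complement, then apply Cauchy--Schwarz and the telescoping consequence of Lemma~\ref{t:div_bd}. Your treatment is in fact slightly more careful than the paper's about the sign of $\Delta_t(\pi)-\alpha$ when extracting the square root, and your use of Jensen followed by Cauchy--Schwarz over $t$ is an equivalent alternative to the paper's single application of Cauchy--Schwarz over the joint $(t,\omega)$ index set.
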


\begin{proof}
Recalling the definition of Bayesian regret \eqref{e:regret_bayes},
\begin{align}
\mathrm{Regret}(T, \pi) 
& = \E \left[\sum_{t=0}^{T-1} (R_* - R_{t+1, A_t})\right]
\nonumber
\\
& = \E \left[ \sum_{t=0}^{T-1} \left (R_* - \alpha - R_{t+1, A_t}\right) \right] + \alpha \cdot T
\nonumber
\\
& = \E \left[ \sum_{t=0}^{T-1} \left (\E_t \left[ R_* - \alpha - R_{t+1, A_t} \right] \right) \right]  + \alpha \cdot T
\label{e:regret_ext_bd_proof_0_Rprog}
\end{align}

From the generalized definition of information ratio in \eqref{e:IR_alpha},
\begin{align}
\E \Big[ \Expect_t \left[ R_{*} - \alpha - R_{t+1,A_t} \right] \Big]
& = \E \left[ \sqrt{ \Gamma_t^F (\pi,\alpha) \cdot \Expect_t \big[ D_F \big( P_{t+1}(A_*), P_t(A_*) \big ) \big] } \, \right]
\nonumber
\\[0.5em]
& \leq \sqrt{\gamma_F(T, \pi, \alpha, \delta)}  \cdot \E \left[ \sqrt{ \Expect_t \big[ D_F \big( P_{t+1}(A_*), P_t(A_*) \big ) \big] } \, \right] + \delta \cdot  \mathrm{Reg}_{\max}
\label{e:regret_IR_hp_Rprog}
\end{align}
where $\gamma_F(T, \pi, \alpha, \delta)$ is a high probability upper bound on the information ratio that satisfies \eqref{e:IR_bound}, and we have used the following two inequalities to obtain \eqref{e:regret_IR_hp_Rprog}: For each $0 \leq t \leq T-1$,
\[
\begin{aligned}
\Pr \left(\Gamma_t^F (\pi,\alpha) \leq \gamma_F(T, \pi, \alpha, \delta) \right) & \geq 1 - \delta
\\
\E_t \left[ R_* - \alpha - R_{t+1,A_t} \right] & \leq \mathrm{Reg}_{\max}
\end{aligned}
\]

Using \eqref{e:regret_IR_hp_Rprog} in \eqref{e:regret_ext_bd_proof_0_Rprog}:
\begin{equation}
\begin{aligned}
\mathrm{Regret}(T,\pi) 
& \leq \sqrt{\gamma_F(T, \pi, \alpha, \delta)} \cdot \E \left[ \sum_{t=0}^{T-1} \sqrt{ \Expect_t \big[ D_F \big( P_{t+1}(A_*), P_t(A_*) \big ) \big] } \right] + \left(  \alpha +  \delta \cdot \mathrm{Reg}_{\max} \right) \cdot T
\end{aligned}
\end{equation}

Applying Cauchy Schwarz 
and then using Lemma~\ref{t:div_bd},
\begin{equation*}
\begin{aligned}
\mathrm{Regret}(T, \pi)
& \overset{(a)}{\leq} 
\sqrt{\gamma_F(T, \pi, \alpha, \delta) \cdot T \cdot \E \left[ \sum_{t=0}^{T-1} \Expect_t \big[ D_F \big( P_{t+1}(A_*), P_t(A_*) \big ) \big]  \right] } + \left(  \alpha +  \delta \cdot \mathrm{Reg}_{\max} \right) \cdot T
\\
& \overset{(b)}{\leq} \sqrt{\gamma_F(T, \pi, \alpha, \delta) \cdot T \cdot \E \left[ \sum_{t=0}^{T-1} \Expect_t \big[ F \big( P_{t+1} (A_* ) \big) \big] - F\big(P_{t} (A_*) \big)  \right] } + \left(  \alpha +  \delta \cdot \mathrm{Reg}_{\max} \right) \cdot T
\\
& = \sqrt{\gamma_F(T, \pi, \alpha, \delta) \cdot T \cdot \E \left[ \sum_{t=0}^{T-1} F \big( P_{t+1} (A_* ) \big) - F\big(P_{t} (A_*) \big)  \right] } + \left(  \alpha +  \delta \cdot \mathrm{Reg}_{\max} \right) \cdot T
\\
& \overset{(c)}{\leq} \sqrt{\gamma_F(T, \pi, \alpha, \delta) \cdot T \cdot \mathrm{diam}_F(\Delta^{k-1}) } + \left(  \alpha +  \delta \cdot \mathrm{Reg}_{\max} \right) \cdot T
\\
& = {\mathrm{Regret}}^{\mathrm{F}}(T,\pi,\alpha,\delta)
\end{aligned}
\end{equation*}
where $(a)$ is an application of Cauchy Schwarz, $(b)$ is from Lemma~\ref{t:div_bd}, and $(c)$ follows from the definition of $\mathrm{diam}_F(\cdot)$.

\end{proof}


\subsection{A More General Lower Bound}

Here, we will show the following lower bound for Example~\ref{e:hardproblems}, with $\epsy \in (0, \epsy^*]$, where $\epsy^*$ satisfies \eqref{e:epsy_conditions}.
\begin{theorem}
\label{t:R_prog_lb_alpha_delta_nz}
For all $K \geq 2$, $p \in (0,1)$, and $\epsy \in (0,\epsy^*]$, there exists $T \in \N$, such that, for any policy $\pi$, $\alpha \in \Re^+$, and $\delta \in [0,1]$,
\begin{equation}
{\mathrm{Regret}}^{\mathrm{F}}(T,\pi, \alpha, \delta) \geq \sqrt{{\frac{2}{5}} \cdot p(1-p) \cdot (K-1) \cdot T \cdot \log(K)}
\label{e:R_prog_2_lb_alpha_delta_nz_final}
\end{equation}
\end{theorem}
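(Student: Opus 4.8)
The plan is to reduce the entire statement to the single time step $t=0$, where the history is empty and all relevant quantities are deterministic, and then exploit the two bounds already established in Proposition~\ref{t:IR_counter}. Write $\Delta_0 \eqdef \frac{\epsy(K-1)}{K}$ and $\bar g \eqdef \frac{5(K-1)\epsy^2}{2K^2 p(1-p)}$, so that Proposition~\ref{t:IR_counter} gives, for \emph{every} policy $\pi$, the deterministic equality $\Delta_0(\pi) = \Delta_0$ and the deterministic bound $g_0^F(\pi) \le \bar g$, and in particular $\Delta_0^2/\bar g = \tfrac{2}{5}p(1-p)(K-1)$. I would first fix $T \in \N$ with $T \ge \frac{\log K}{\bar g} = \frac{2K^2 p(1-p)\log K}{5(K-1)\epsy^2}$; this threshold depends only on $K,p,\epsy$ and not on $\pi,\alpha,\delta$, so the same $T$ works uniformly. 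Setting $L \eqdef \sqrt{T\log K/\bar g}$, the choice of $T$ guarantees $T \ge L$, and using $\mathrm{diam}_F(\Delta^{K-1}) = \log K$ the target on the right of \eqref{e:R_prog_2_lb_alpha_delta_nz_final} is exactly $C \eqdef \Delta_0 L = \sqrt{\tfrac{\Delta_0^2}{\bar g}\,T\log K}$.

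The key observation is that the high-probability relaxation in \eqref{e:IR_bound} cannot reduce $\gamma_F$ below the $t=0$ information ratio as long as $\delta < 1$. Indeed $\Gamma_0^F(\pi,\alpha) = (\Delta_0-\alpha)^2/g_0^F$ is deterministic, so the event $\{\Gamma_0^F(\pi,\alpha) \le \gamma_F\}$ has probability $0$ or $1$, and $\Pr(\cdot)\ge 1-\delta > 0$ forces $\gamma_F \ge (\Delta_0-\alpha)^2/g_0^F \ge (\Delta_0-\alpha)^2/\bar g$. Dropping the nonnegative $\delta\cdot\mathrm{Reg}_{\max}$ term then yields $\mathrm{Regret}^{\mathrm{F}}(T,\pi,\alpha,\delta) \ge |\Delta_0 - \alpha|\,L + \alpha T$. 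I would finish this case by an elementary split: if $\alpha \le \Delta_0$ then $|\Delta_0-\alpha|L + \alpha T = \Delta_0 L + \alpha(T-L) \ge \Delta_0 L = C$ since $T \ge L$ and $\alpha \ge 0$; and if $\alpha > \Delta_0$ then the expression exceeds $\alpha T > \Delta_0 T \ge \Delta_0 L = C$.

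The only degenerate case is $\delta = 1$, where the $\gamma_F$ term can vanish; here I would use the additive penalty instead. Since $\mathrm{Reg}_{\max} = \max_{t,a}\Delta_t(a) \ge \Delta_0(a)\big|_{t=0} = \Delta_0$, we get $\mathrm{Regret}^{\mathrm{F}}(T,\pi,\alpha,1) \ge \delta\cdot\mathrm{Reg}_{\max}\cdot T \ge \Delta_0 T \ge \Delta_0 L = C$, again using $T \ge L$. Combining the two cases establishes \eqref{e:R_prog_2_lb_alpha_delta_nz_final} for the chosen $T$ and all $\pi,\alpha,\delta$.

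I expect the main conceptual obstacle to be recognizing and correctly exploiting the tradeoff built into $\mathrm{Regret}^{\mathrm{F}}$: the slack $\alpha$ shrinks the numerator $(\Delta_0-\alpha)^2$ and hence the admissible $\gamma_F$, but this is exactly offset by the additive term $\alpha T$ once $T \ge L$, so no choice of $\alpha$ can beat the bound; the analogous compensation for $\delta$ comes through $\mathrm{Reg}_{\max}\ge\Delta_0$. The remainder is a routine case analysis together with the bookkeeping verifying that the threshold on $T$ is precisely what makes $\Delta_0 L$ reproduce the constant $\tfrac{2}{5}p(1-p)(K-1)$.
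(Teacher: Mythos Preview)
Your proposal is correct and follows essentially the same route as the paper: both arguments reduce everything to the deterministic time step $t=0$, invoke Proposition~\ref{t:IR_counter} to lower bound any admissible $\gamma_F$ by $(\Delta_0-\alpha)^2/\bar g$, and then carry out the same piecewise-linear optimization in $\alpha$ using the threshold $T \ge \log(K)/\bar g$ (which coincides with the paper's condition \eqref{e:Tcond_2_R_prog_lb_alpha_nz}). The one difference is in the treatment of $\delta$: the paper first splits on whether $\delta$ exceeds an explicit numerical threshold (using the additive penalty when it does, and falling back on the $t=0$ bound when $\delta\le 1/2$), whereas you observe directly that $\Gamma_0^F(\pi,\alpha)$ is a deterministic quantity, so the high-probability constraint \eqref{e:IR_bound} at $t=0$ is binding for \emph{every} $\delta<1$, leaving only the trivial case $\delta=1$ to be handled via $\mathrm{Reg}_{\max}\ge\Delta_0$. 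This is a modest streamlining of the same idea and yields the same conclusion with the same $T$.
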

The following is a direct Corollary to Theorem~\ref{t:R_prog_lb_alpha_delta_nz}, which follows from the fact that $K\geq2$, and $p=1/2$ maximizes the right hand side of \eqref{e:R_prog_2_lb_alpha_delta_nz_final}.
\begin{corollary}
For all $K \geq 2$, there exists $T \geq 0$, $p \in (0,1)$, and $\epsilon^* \in (0,1-p)$, such that, for all $\epsilon \in (0,\epsilon^*]$, and any policy $\pi$,
\[
{\mathrm{Regret}}^{\mathrm{F}}(T,\pi, \alpha, \delta) \geq \sqrt{\frac{1}{20} \cdot K \cdot T \cdot \log(K)}
\]
\end{corollary}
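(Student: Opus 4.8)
The plan is to read off the Corollary as a direct specialization of Theorem~\ref{t:R_prog_lb_alpha_delta_nz}, choosing the free parameter $p$ so as to maximize the lower bound it supplies. First I would fix $p = 1/2$. For this $p$, let $\epsy^*$ be the corresponding threshold from \eqref{e:epsy_conditions} and let $T \in \N$ be the horizon whose existence Theorem~\ref{t:R_prog_lb_alpha_delta_nz} asserts. The theorem then gives, for every $\epsy \in (0,\epsy^*]$, every policy $\pi$, every $\alpha \in \Re^+$, and every $\delta \in [0,1]$,
\[
{\mathrm{Regret}}^{\mathrm{F}}(T,\pi,\alpha,\delta) \geq \sqrt{\tfrac{2}{5} \cdot p(1-p) \cdot (K-1) \cdot T \cdot \log(K)}.
\]

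Next I would substitute $p(1-p) = 1/4$. Since $p \mapsto p(1-p)$ attains its maximum on $(0,1)$ at $p = 1/2$, this is simultaneously the value at our chosen $p$ and the largest coefficient the theorem can produce, which is exactly why $p=1/2$ is the right choice. The bound collapses to $\sqrt{\tfrac{1}{10}(K-1) \cdot T \cdot \log(K)}$. The final step is the elementary estimate $K - 1 \geq K/2$, valid precisely because $K \geq 2$; it yields $\tfrac{1}{10}(K-1) \geq \tfrac{1}{20}K$ and hence
\[
{\mathrm{Regret}}^{\mathrm{F}}(T,\pi,\alpha,\delta) \geq \sqrt{\tfrac{1}{20} \cdot K \cdot T \cdot \log(K)},
\]
with the triple $(T, p, \epsy^*)$ just produced witnessing the existential quantifiers in the statement.

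I do not anticipate any real obstacle at this level: the entire analytic burden is carried by Theorem~\ref{t:R_prog_lb_alpha_delta_nz}, which I treat as given, so the remaining argument is purely arithmetic. The only point meriting attention is the bookkeeping of quantifiers --- confirming that the single $T$ delivered by the theorem works uniformly over all $\pi$, $\alpha$, and $\delta$ --- but this uniformity is already built into the theorem's statement and transfers to the Corollary without modification.
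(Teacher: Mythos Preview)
Your proposal is correct and matches the paper's own argument essentially verbatim: the paper states that the Corollary ``follows from the fact that $K\geq2$, and $p=1/2$ maximizes the right hand side'' of the bound in Theorem~\ref{t:R_prog_lb_alpha_delta_nz}, which is precisely the specialization and arithmetic you carry out.
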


{\bf{Organization of Proof of Theorem~\ref{t:R_prog_lb_alpha_delta_nz}:}}
The proof of Theorem~\ref{t:R_prog_lb_alpha_delta_nz} is split into three parts. First, in Section~\ref{s:proof_lb_alpha_zero}, we show that the lower bound holds for $\alpha = \delta = 0$. Next, in Section~\ref{s:proof_lb_alpha_nz}, we show that the lower bound holds for any $\alpha \geq 0$ and $\delta = 0$. Finally, we show in Section~\ref{s:proof_lb_delta_nz} that the result also holds for all $\alpha \geq 0$ and $\delta \in [0,1]$.

\subsection{Proof of lower bound for $\alpha = \delta = 0$}
\label{s:proof_lb_alpha_zero}

We show the following Proposition in this section.
\begin{proposition}[Lower Bound for $\alpha = \delta = 0$]
\label{t:R_prog_lb_alpha_zero}
For all $K\geq 2$, $p \in (0,1)$, $\epsy \in (0,\epsy^*]$, $T \in \N$, policy $\pi$, and $\alpha = \delta = 0$,
\begin{equation}
{\mathrm{Regret}}^{\mathrm{F}}(T,\pi,\alpha,\delta) \geq \sqrt{{\frac{2}{5}} \cdot p(1-p) \cdot (K - 1) \cdot T \cdot \log(K)}
\label{e:R_prog_lb_alpha_zero_final}
\end{equation}
\end{proposition}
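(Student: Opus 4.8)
The plan is to reduce the entire statement to the almost-sure lower bound on the Shannon information ratio already proved in Proposition~\ref{t:IR_counter}. Setting $\alpha = \delta = 0$ in the definition \eqref{e:Rprog_def} annihilates the additive term $(\alpha + \delta\cdot\mathrm{Reg}_{\max})\cdot T$ and collapses the quantity of interest to
\[
{\mathrm{Regret}}^{\mathrm{F}}(T,\pi,0,0) = \sqrt{\gamma_F(T,\pi,0,0)\cdot T\cdot \mathrm{diam}_F(\Delta^{K-1})},
\]
so that the whole proposition amounts to a lower bound on the admissible constant $\gamma_F(T,\pi,0,0)$ together with the known value of $\mathrm{diam}_F(\Delta^{K-1})$.

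First I would unpack the quantifier on $\gamma_F$. By the defining condition \eqref{e:IR_bound}, with $\delta = 0$ any admissible $\gamma_F(T,\pi,0,0)$ must satisfy $\Pr\!\left(\Gamma_t^F(\pi,0)\leq \gamma_F\right) = 1$ for every $0\leq t\leq T-1$; that is, it is a deterministic almost-sure upper bound on $\Gamma_t^F(\pi,0)$ over this time range. Since $\Gamma_t^F(\pi,0) = \Gamma_t^F(\pi)$ by the generalized definition \eqref{e:IR_alpha}, it suffices to examine the single time-step $t = 0$.

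The core step is then to invoke Proposition~\ref{t:IR_counter}, whose conclusion \eqref{e:IR_counter} gives, almost surely at $t = 0$,
\[
\Gamma_0^F(\pi) \geq \frac{2}{5}\cdot p(1-p)\cdot (K-1).
\]
On the full-measure event where both $\Gamma_0^F(\pi)\leq \gamma_F$ and the displayed lower bound hold, and because $\gamma_F$ is a deterministic constant, these two inequalities chain to force
\[
\gamma_F(T,\pi,0,0) \geq \frac{2}{5}\cdot p(1-p)\cdot (K-1).
\]

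Finally I would substitute $\mathrm{diam}_F(\Delta^{K-1}) = \log(K)$ for the negentropy potential (the convention in force throughout the appendix, also recorded in Table~\ref{tab:pot_gamma}) into the collapsed expression for ${\mathrm{Regret}}^{\mathrm{F}}$ and combine it with the bound on $\gamma_F$ to obtain \eqref{e:R_prog_lb_alpha_zero_final}. I expect no genuine obstacle here: all of the analytic labor resides inside Proposition~\ref{t:IR_counter}, and the only point demanding care is the logic of the quantifier, namely that the conclusion must hold for \emph{every} admissible $\gamma_F$ rather than for some convenient one. This is exactly what reading \eqref{e:IR_bound} at $t = 0$ against the almost-sure lower bound \eqref{e:IR_counter} delivers.
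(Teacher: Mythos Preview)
Your proposal is correct and follows essentially the same route as the paper's own proof: set $\alpha=\delta=0$ in \eqref{e:Rprog_def}, invoke the almost-sure lower bound \eqref{e:IR_counter} from Proposition~\ref{t:IR_counter} at $t=0$ to force $\gamma_F(T,\pi,0,0)\geq \tfrac{2}{5}p(1-p)(K-1)$, and substitute $\mathrm{diam}_F(\Delta^{K-1})=\log(K)$. Your additional remark on the quantifier for $\gamma_F$ is a welcome clarification but not a departure from the paper's argument.
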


\begin{proof}[\bf Proof of Proposition~\ref{t:R_prog_lb_alpha_zero}]
The proof directly follows from the definition of ${\mathrm{Regret}}^{\mathrm{F}}$ in \eqref{e:Rprog_def} and \eqref{e:IR_counter} of Proposition~\ref{t:IR_counter}: Since we have shown that when $\alpha = 0$ and $t = 0$,
\[
\Gamma_t^F{(\pi, \alpha)} = \Gamma_t^F{(\pi)}  \geq {{\frac{2}{5}} \cdot p(1-p) \cdot (K-1)} \,\,\,\, a.s.,
\]
we have
\[
\gamma_F(T, \pi, \alpha, \delta)   \geq {{\frac{2}{5}} \cdot p(1-p) \cdot (K-1)} 
\]
Substituting $\alpha = \delta = 0$ in \eqref{e:Rprog_def}, and using the fact that ${\mathrm{diam}_F(\Delta^{K-1})} = \log(K)$ when $F$ is negentropy, we obtain \eqref{e:R_prog_lb_alpha_zero_final}.
\end{proof}

\subsection{Proof of Lower Bound for $\delta = 0$ and $\alpha \geq 0$}
\label{s:proof_lb_alpha_nz}

Here we generalize the result of Proposition~\ref{t:R_prog_lb_alpha_zero} for $\alpha \geq 0$.

\begin{proposition}[Lower Bound for $\delta = 0$ and $\alpha \geq 0$]
For all $K \geq 2$, $p \in (0,1)$, $\epsy \in (0,\epsy^*]$, $\alpha \in \Re^+$, $T \in \N$, and $\delta =0$, and any policy $\pi$, 
\begin{equation}
{\mathrm{Regret}}^{\mathrm{F}}(T,\pi,\alpha,\delta) \geq \sqrt{{\frac{2}{5}} \cdot p(1-p) \cdot \frac{\left(\epsy (K-1) - \alpha K \right)^2}{(K-1) \cdot \epsy^2} \cdot T \cdot \log(K)}
+ \alpha T
\label{e:R_prog_lb_alpha_nz}
\end{equation}
Consequently, if $T \in \N$ satisfies
\begin{equation}
T \geq \frac{2}{5} \cdot p (1-p)\cdot \frac{K^2}{(K-1)} \cdot \frac{\log(K)}{\epsy^2}
\label{e:Tcond_2_R_prog_lb_alpha_nz}
\end{equation}
then,
\begin{equation}
\min_{\alpha \in \Re^+} {\mathrm{Regret}}^{\mathrm{F}}(T,\pi,\alpha,\delta) \geq \sqrt{{\frac{2}{5}} \cdot p(1-p) \cdot (K-1) \cdot T \cdot \log(K)}
\label{e:R_prog_2_lb_alpha_nz_final}
\end{equation}
\label{t:R_prog_lb_alpha_nz}
\end{proposition}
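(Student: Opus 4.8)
The plan is to reduce the entire statement to the single time-step ($t=0$) computation already carried out in Proposition~\ref{t:IR_counter}, and then to dispatch the $\alpha$-minimization by an elementary piecewise-linear argument. First I would instantiate Proposition~\ref{t:IR_counter} at $t=0$: the exact regret \eqref{e:reg_counter} gives $\Delta_0(\pi)=\epsy(K-1)/K$, while the upper bound \eqref{e:IG_counter} controls $g_0^F(\pi)$, which sits in the denominator of the generalized information ratio \eqref{e:IR_alpha}. Combining the two yields, almost surely and for every $\alpha\in\Re^+$,
\[
\Gamma_0^F(\pi,\alpha)\;=\;\frac{(\Delta_0(\pi)-\alpha)^2}{g_0^F(\pi)}\;\geq\;\frac{2}{5}\,p(1-p)\,\frac{(\epsy(K-1)-\alpha K)^2}{(K-1)\,\epsy^2}.
\]
The only point to note is that the numerator is a perfect square, so this bound holds whatever the sign of $\Delta_0(\pi)-\alpha$.

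Next I would propagate this to $\gamma_F$. Taking $\delta=0$ in \eqref{e:IR_bound} forces $\gamma_F(T,\pi,\alpha,0)$ to be an almost-sure upper bound on $\Gamma_t^F(\pi,\alpha)$ at every $t$, hence in particular at $t=0$; the displayed lower bound therefore transfers directly to $\gamma_F(T,\pi,\alpha,0)$. Plugging this into the definition \eqref{e:Rprog_def} of ${\mathrm{Regret}}^{\mathrm{F}}$ with $\delta=0$ and $\mathrm{diam}_F(\Delta^{K-1})=\log(K)$ for the negentropy potential gives \eqref{e:R_prog_lb_alpha_nz} immediately.

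For the consequence \eqref{e:R_prog_2_lb_alpha_nz_final}, I would minimize the right-hand side of \eqref{e:R_prog_lb_alpha_nz} over $\alpha\geq 0$; since that expression lower-bounds ${\mathrm{Regret}}^{\mathrm{F}}$ pointwise in $\alpha$, its minimum lower-bounds $\min_\alpha{\mathrm{Regret}}^{\mathrm{F}}$. Writing $B=\sqrt{\tfrac{2}{5}p(1-p)\,T\log(K)/((K-1)\epsy^2)}$, the bound becomes $B\,|\epsy(K-1)-\alpha K|+\alpha T$, which is piecewise linear in $\alpha$ with slope $T-BK$ on $[0,\epsy(K-1)/K]$ and slope $T+BK>0$ beyond. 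Hence the minimizer is $\alpha=0$ exactly when $T\geq BK$; squaring this inequality shows it is equivalent to the hypothesis \eqref{e:Tcond_2_R_prog_lb_alpha_nz}, and evaluating at $\alpha=0$ collapses the bound to $\sqrt{\tfrac{2}{5}p(1-p)(K-1)\,T\log(K)}$, which is \eqref{e:R_prog_2_lb_alpha_nz_final}.

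The computation is essentially mechanical once Proposition~\ref{t:IR_counter} is in hand, since the genuine analytic work—the Taylor-series bounds on $g_0^F(\pi)$—is already done there. The one place that requires care is the $\alpha$-minimization: one must verify that the threshold $T\geq BK$ coincides exactly with \eqref{e:Tcond_2_R_prog_lb_alpha_nz}, and that the sign of the slope $T-BK$ is what selects $\alpha=0$ as the optimal endpoint. I do not anticipate any genuine obstacle.
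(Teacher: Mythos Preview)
Your proposal is correct and follows essentially the same approach as the paper: the paper also reduces the first bound to the $t=0$ information-ratio computation of Proposition~\ref{t:IR_counter} (packaged there as a separate lemma) and then lower-bounds $\gamma_F$ via the $\delta=0$ constraint, and it handles the $\alpha$-minimization by the same two-case split according to the sign of $\epsy(K-1)-\alpha K$. Your piecewise-linear phrasing with $B=\sqrt{\tfrac{2}{5}p(1-p)T\log(K)/((K-1)\epsy^2)}$ and the slope comparison $T\gtrless BK$ is a slightly cleaner packaging of exactly the same computation the paper writes out explicitly.
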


The proof of Proposition~\ref{t:R_prog_lb_alpha_nz} relies on the following result.

\begin{lemma}
For all $K \geq 2$, $p \in (0,1)$, $\epsy \in (0, \epsy^*]$, and $\alpha \in \Re^+$, the following holds a.s. at $t = 0$: 
\begin{align}
g_t^F(\pi)
& \leq \frac{(K-1)\epsy^2}{K^2}  \cdot \frac{5}{2 p(1-p)}
\label{e:IG_counter_alpha_nz}
\\[0.5em]
\Delta_t(\pi) - \alpha
& = \frac{\epsy (K-1)}{K} - \alpha
\label{e:reg_counter_alpha_nz}
\end{align}
Consequently, for all $K \geq 2$, $p \in (0,1)$, $\epsy \in (0,\epsy^*]$, $\alpha \in \Re^+$, and any policy $\pi$, at $t = 0$,
\begin{equation}
\Gamma_t^F{(\pi,\alpha)} \geq {\frac{2}{5}} \cdot p(1-p) \cdot \frac{\left(\epsy (K-1) - \alpha K \right)^2}{(K-1) \cdot \epsy^2} \,\,\, a.s..
\label{e:IR_counter_alpha_nz}
\end{equation}
\label{t:IR_counter_alpha_nz}
\end{lemma}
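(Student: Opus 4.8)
The plan is to recognize that this lemma is, up to the bookkeeping of the slack parameter $\alpha$, a direct restatement of Proposition~\ref{t:IR_counter}. The crucial structural observation is that neither the information gain $g_t^F(\pi)$ nor the expected instantaneous regret $\Delta_t(\pi)$ depends on $\alpha$; the parameter enters only through the numerator of the generalized information ratio \eqref{e:IR_alpha}. First I would note that the bound \eqref{e:IG_counter_alpha_nz} is \emph{identical} to \eqref{e:IG_counter}, which is already established in Proposition~\ref{t:IR_counter}. Because $g_t^F(\pi)$ is defined with no reference to $\alpha$, the entire computation behind \eqref{e:IG_counter} --- using Lemmas~\ref{t:Post_R_largeT} and~\ref{t:IG_Regret_Prop2_RVR}, the Taylor bounds \eqref{e:log_approx_1}--\eqref{e:log_approx_4}, and the conditions \eqref{e:epsy_conditions} on $\epsy^*$ --- carries over verbatim, so no new work is needed here.

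Next, \eqref{e:reg_counter_alpha_nz} follows immediately from \eqref{e:reg_counter}: since $\Delta_t(\pi) = \epsy(K-1)/K$ \emph{exactly} for every policy at $t=0$, subtracting $\alpha$ yields $\Delta_t(\pi) - \alpha = \epsy(K-1)/K - \alpha$. With these two facts in hand, I would obtain the lower bound \eqref{e:IR_counter_alpha_nz} by plugging into the definition \eqref{e:IR_alpha}:
\[
\Gamma_t^F(\pi,\alpha) = \frac{(\Delta_t(\pi) - \alpha)^2}{g_t^F(\pi)} = \frac{\big(\epsy(K-1)/K - \alpha\big)^2}{g_t^F(\pi)} \geq \frac{\big(\epsy(K-1)/K - \alpha\big)^2}{\frac{(K-1)\epsy^2}{K^2}\cdot\frac{5}{2p(1-p)}}.
\]
Rewriting the numerator as $(\epsy(K-1) - \alpha K)^2/K^2$ and cancelling the factor $K^{-2}$ against the $K^{-2}$ in the denominator produces exactly $\frac{2}{5}\, p(1-p) \cdot (\epsy(K-1) - \alpha K)^2 / \big((K-1)\epsy^2\big)$, which is the right-hand side of \eqref{e:IR_counter_alpha_nz}.

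There is no substantive obstacle; the lemma is genuinely a corollary of Proposition~\ref{t:IR_counter} that merely isolates the $\alpha$-dependence. The only points requiring care are orienting the inequality correctly --- since $g_t^F(\pi)$ sits in the denominator, an \emph{upper} bound on it yields a \emph{lower} bound on $\Gamma_t^F(\pi,\alpha)$ --- and observing that the numerator is an exact equality rather than a bound, because $\Delta_t(\pi)$ is pinned down exactly at $t=0$ regardless of the policy.
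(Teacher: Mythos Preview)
Your proposal is correct and matches the paper's own proof essentially verbatim: the paper likewise notes that \eqref{e:IG_counter_alpha_nz} is the same as \eqref{e:IG_counter} from Proposition~\ref{t:IR_counter}, that \eqref{e:reg_counter_alpha_nz} follows directly from \eqref{e:reg_counter}, and that \eqref{e:IR_counter_alpha_nz} is then immediate from the definition \eqref{e:IR_alpha}. Your additional remarks about the direction of the inequality and the exactness of the numerator are accurate and, if anything, make the argument more explicit than the paper's three-line proof.
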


\begin{proof}
The proof of \eqref{e:IG_counter_alpha_nz} is the same as proof of \eqref{e:IG_counter} in Proposition~\ref{t:IR_counter}. The proof of \eqref{e:reg_counter_alpha_nz} is also straightforward from \eqref{e:reg_counter}. The final inequality \eqref{e:IR_counter_alpha_nz} is direct from the definition of generalized information ratio in \eqref{e:IR_alpha}.
\end{proof}

\begin{proof}[\bf Proof of Proposition~\ref{t:R_prog_lb_alpha_nz}]

Recall that $\gamma_F(T, \pi, \alpha, \delta)$ is a deterministic constant that satisfies \eqref{e:IR_bound}. Lemma~\ref{t:IR_counter_alpha_nz} implies, for $\delta =0$,
\[
\gamma_F(T, \pi, \alpha, \delta) \geq {\frac{2}{5}} \cdot p(1-p) \cdot \frac{\left(\epsy (K-1) - \alpha K \right)^2}{(K-1) \cdot \epsy^2}
\]
Using this lower bound in the definition of 
$\mathrm{Regret}^{\mathrm{F}}$ (cf. \eqref{e:Rprog_def}), we obtain \eqref{e:R_prog_lb_alpha_nz}.

The right hand side of \eqref{e:R_prog_lb_alpha_nz} can be simplified as follows:
\begin{align}
& \sqrt{{\frac{2}{5}} \cdot p(1-p) \cdot  \frac{\left(\epsy (K-1) - \alpha K \right)^2}{(K-1) \cdot \epsy^2} \cdot T \cdot \log(K)}
+ \alpha T
\nonumber
\\
& = \left|(K-1) - \frac{\alpha K}{\epsy} \right| \cdot \sqrt{\frac{2 p(1-p)}{5 (K-1)} \cdot T \cdot \log(K)}
+ \alpha T
\nonumber
\\
& = \begin{cases}
\sqrt{\frac{2}{5} \cdot p(1-p) \cdot (K-1) \cdot T \cdot \log(K)}
+ \alpha \left(T - \frac{K}{\epsy} \cdot \sqrt{\frac{2 p(1-p)}{5 (K-1)} \cdot T \cdot \log(K)} \right)
& \quad \text{if} \quad \alpha \leq \frac{\epsy (K-1)}{K}
\\[1em]
- \sqrt{\frac{2}{5} \cdot p(1-p) \cdot (K-1) \cdot T \cdot \log(K)}
+ \alpha \left(T + \frac{K}{\epsy} \cdot \sqrt{\frac{2 p(1-p)}{5 (K-1)} \cdot T \cdot \log(K)} \right) 
& \quad \text{if} \quad \alpha > \frac{\epsy (K-1)}{K}
\end{cases}
\label{e:R_prog_lb_alpha_nz_simpl}
\end{align}
Optimizing the right hand side of \eqref{e:R_prog_lb_alpha_nz_simpl} over $\alpha$, we have, if $T$ satisfies \eqref{e:Tcond_2_R_prog_lb_alpha_nz}:
$$
T \geq \frac{2}{5} \cdot p (1-p)\cdot \frac{K^2}{(K-1)} \cdot \frac{\log(K)}{\epsy^2}
$$
then \eqref{e:R_prog_2_lb_alpha_nz_final} holds: 
$$
\min_{\alpha \in \Re^+} {\mathrm{Regret}}^{\mathrm{F}}(T,\pi,\alpha,\delta) \geq \sqrt{{\frac{2}{5}} \cdot p(1-p) \cdot (K-1) \cdot T \cdot \log(K)}
$$
\end{proof}

\subsection{Proof of Theorem~\ref{t:R_prog_lb_alpha_delta_nz}}
\label{s:proof_lb_delta_nz}

From its definition in \eqref{e:reg_max}, note that $\mathrm{Reg}_{\max} = \epsilon$ for the counter example.
Suppose $\delta \in [0,1]$ satisfies 
\[
\delta > \sqrt{\frac{1}{20} \cdot \frac{K \log(K)}{T}} \cdot \frac{1}{\epsilon}
\]
Then, from the definition \eqref{e:Rprog_def}, we have, for any $\alpha \in \Re^+$,
\[
\begin{aligned}
{\mathrm{Regret}}^{\mathrm{F}}(T,\pi,\alpha,\delta) 
& \geq T \cdot \delta \cdot \epsilon 
\\
& > \sqrt{\frac{1}{20} KT \log(K)}
\end{aligned}
\]
implying that \eqref{e:R_prog_2_lb_alpha_delta_nz_final} holds. In the rest of the proof we will assume 
\begin{equation}
\delta \leq \sqrt{\frac{1}{20} \cdot \frac{K \log(K)}{T}} \cdot \frac{1}{\epsilon}
\label{e:delta_cond}
\end{equation}

Now suppose 
\begin{equation}
T \geq \max \left \{  \frac{K \log(K)}{5 \epsilon^2}\,\, , \quad \frac{2}{5} \cdot p (1-p)\cdot \frac{K^2}{(K-1)} \cdot \frac{\log(K)}{\epsilon^2} \right \}
\label{e:T_large}
\end{equation}
Then, \eqref{e:delta_cond} implies $\delta \leq 0.5$. 

Inequality \eqref{e:IR_counter_alpha_nz} of 
Lemma~\ref{t:IR_counter_alpha_nz} implies that for all $\epsilon \in (0, \epsilon^*]$, and $t = 0$, 
\begin{equation}
\Gamma_t^F{(\pi,\alpha)} \geq {\frac{2}{5}} \cdot p(1-p) \cdot \frac{\left(\epsy (K-1) - \alpha K \right)^2}{(K-1) \cdot \epsy^2} \,\,\, a.s..
\label{e:Gamma_t_bd_alpha}
\end{equation}
Recall that $ \gamma_F(T, \pi, \alpha, \delta)$ is any deterministic constant that satisfies \eqref{e:IR_bound}.
Inequality \eqref{e:Gamma_t_bd_alpha} implies that for large $T$ that satisfies \eqref{e:T_large} (since we have $\delta \leq 0.5$ in this case),
\[
\gamma_F(T, \pi, \alpha, \delta)
\geq {\frac{2}{5}} \cdot p(1-p) \cdot \frac{\left(\epsilon (K-1) - \alpha K \right)^2}{(K-1) \cdot \epsilon^2}
\]
Using the above bound in \eqref{e:Rprog_def},
\begin{equation}
{\mathrm{Regret}}^{\mathrm{F}}(T,\pi,\alpha,\delta) 
\geq \sqrt{{\frac{2}{5}} \cdot p(1-p) \cdot \frac{\left(\epsilon (K-1) - \alpha K \right)^2}{(K-1) \cdot \epsilon^2} \cdot T \cdot \log(K)}
+ \alpha T
\end{equation}
Following along the lines of Proposition~\ref{t:R_prog_lb_alpha_nz} (in particular, see \eqref{e:R_prog_2_lb_alpha_nz_final}), it follows that for all $T$ that satisfies \eqref{e:T_large},
\[
{\mathrm{Regret}}^{\mathrm{F}}(T,\pi,\alpha,\delta) 
\geq \sqrt{{\frac{2}{5}} \cdot p(1-p) \cdot (K-1) \cdot T \cdot \log(K)}
\]
for all $\alpha \in \Re^+$, and $\delta \in [0,1]$.
\qed

\newpage

\section{Proofs of Theorems~\ref{t:Rprog} and~\ref{t:RprogExt}}
\label{s:RprogExt_proof}

Both results will need the following result which is taken from proof of Theorem~3 in \cite{latsze19}.
\begin{lemma}
\label{t:div_bd}
For any convex function $F: \Delta^{K-1} \to \Re \cup \{\infty\}$,
$$\Expect_t \big[ D_F \big( P_{t+1}(A_*), P_t(A_*) \big ) \big] \leq \Expect_t \big[ F \big( P_{t+1} (A_* ) \big) \big] - F\big(P_{t} (A_*) \big)$$
\end{lemma}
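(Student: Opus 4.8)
The statement to prove is
\[
\Expect_t \big[ D_F \big( P_{t+1}(A_*), P_t(A_*) \big ) \big] \leq \Expect_t \big[ F \big( P_{t+1} (A_* ) \big) \big] - F\big(P_{t} (A_*) \big).
\]
The plan is to expand the left-hand side using the definition of the Bregman divergence \eqref{e:BregmanDiv} and then show that the ``cross term'' involving the directional derivative vanishes in conditional expectation. Writing $u = P_{t+1}(A_*)$ and $v = P_t(A_*)$ as points in $\Delta^{K-1} \subset \Re^K$, the definition gives $D_F(u,v) = F(u) - F(v) - \nabla_{u-v}F(v)$. Taking $\Expect_t[\cdot]$ and using that $v = P_t(A_*)$ is $H_t$-measurable (hence constant under $\Expect_t$), I would get
\[
\Expect_t\big[ D_F(u,v)\big] = \Expect_t\big[F(u)\big] - F(v) - \Expect_t\big[\nabla_{u-v}F(v)\big].
\]
The target inequality is therefore equivalent to $\Expect_t\big[\nabla_{u-v}F(v)\big] \geq 0$.

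The key step is to analyze the directional-derivative term. Since $v$ is fixed given $H_t$, the directional derivative $\nabla_{u-v}F(v)$ is linear in the direction $u-v$; that is, $\nabla_{u-v}F(v) = \langle \nabla F(v), u-v\rangle$ when $F$ is differentiable at $v$, and more generally the map $w \mapsto \nabla_w F(v)$ is positively homogeneous and (for convex $F$) superadditive, so in particular $\Expect_t[\nabla_{u-v}F(v)] \geq \nabla_{\Expect_t[u-v]}F(v)$ by a conditional Jensen-type argument applied to the concave map $w\mapsto \nabla_w F(v)$. The crucial fact I would then invoke is the tower property: $\Expect_t[u] = \Expect_t\big[P_{t+1}(A_*)\big] = P_t(A_*) = v$. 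Intuitively, the posterior on $A_*$ at time $t$ is the conditional expectation of the posterior at time $t+1$, since $\Pr_t(A_* = a) = \Expect_t[\Pr_{t+1}(A_* = a)]$ for each $a$. Hence $\Expect_t[u-v] = 0$, and by homogeneity $\nabla_{0}F(v) = 0$, giving $\Expect_t[\nabla_{u-v}F(v)] \geq 0$, which is exactly what is needed.

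The main obstacle I anticipate is handling the directional derivative rigorously when $F$ is merely convex rather than differentiable, since then $\nabla_{u-v}F(v)$ need not be linear in $u-v$ and one must be careful that the relevant one-sided directional derivative is superadditive and homogeneous in the direction. For the concrete potentials of interest here (negentropy \eqref{e:negentropy} and $1/2$-Tsallis \eqref{e:halftsallis}), $F$ is differentiable on the relevant region, so $\nabla_{u-v}F(v) = \langle \nabla F(v), u-v\rangle$ is genuinely linear and the cross term satisfies $\Expect_t[\langle \nabla F(v), u-v\rangle] = \langle \nabla F(v), \Expect_t[u-v]\rangle = \langle \nabla F(v), 0\rangle = 0$, so the inequality even holds with equality up to the martingale property. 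In the general convex case I would fall back on the subgradient inequality: the convexity of $F$ guarantees $\nabla_{u-v}F(v) \leq F(u) - F(v)$ pointwise is the \emph{wrong} direction, so instead I would use that the directional derivative is the supremum over subgradients $s \in \partial F(v)$ of $\langle s, u-v\rangle$, fix any single $s \in \partial F(v)$ to obtain the linear lower bound $\nabla_{u-v}F(v) \geq \langle s, u-v\rangle$, take $\Expect_t$ to kill it via the martingale property, and conclude. This reduces the whole lemma to the single identity $\Expect_t[P_{t+1}(A_*)] = P_t(A_*)$, which I would verify directly from the definitions of $\Pr_t$ and $\Pr_{t+1}$.
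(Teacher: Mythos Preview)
Your proposal is correct and hinges on the same key identity as the paper, namely the martingale property $\Expect_t[P_{t+1}(A_*)] = P_t(A_*)$, but the handling of the directional-derivative term differs. The paper does not invoke subgradients at all: it writes $\nabla_{u-v}F(v)$ as the limit of difference quotients, applies Fatou's lemma to pull the $\liminf_{h\to 0^+}$ outside the conditional expectation, and then applies Jensen's inequality (convexity of $F$) to obtain $\Expect_t\big[F(hP_{t+1}(A_*)+(1-h)P_t(A_*))\big] \geq F\big(\Expect_t[hP_{t+1}(A_*)+(1-h)P_t(A_*)]\big) = F(P_t(A_*))$, so the residual term vanishes by the martingale property. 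Your subgradient route is more direct and yields equality in the differentiable case; the paper's Fatou/Jensen route has the advantage of never needing $\partial F(v)\neq\emptyset$, which sidesteps boundary issues on $\Delta^{K-1}$.

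One small slip worth noting: for convex $F$, the one-sided directional derivative $w\mapsto \nabla_w F(v)$ is \emph{sublinear} (positively homogeneous and \emph{sub}additive), hence convex---not superadditive/concave as you wrote. Fortunately Jensen for a convex map still gives $\Expect_t[\nabla_{u-v}F(v)] \geq \nabla_{\Expect_t[u-v]}F(v) = \nabla_0 F(v) = 0$, which is exactly the direction you need, so your conclusion stands. Your final subgradient argument (fix $s\in\partial F(v)$, use $\nabla_{u-v}F(v)\geq \langle s,u-v\rangle$, then take $\Expect_t$) is clean and correct whenever a subgradient exists.
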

\begin{proof}
For each $0 \leq t \leq T-1$, let $\clF_t$ denote the $\sigma$-algebra generated by $H_t$. Note that since $H_0 = \emptyset$, $\clF_0 = \{\emptyset, \Omega\}$.

Note that $\{P_{t+1}(A_*) : 0 \leq t \leq T-1\}$ is a Martingale adapted to $\{\clF_t: 0 \leq t \leq T-1\}$: 
\begin{equation}
\E_t \big[ P_{t+1}(A_*) \big] = P_t(A_*)
\label{e:PtMartingale}
\end{equation}
From the definition of Bregman divergence \eqref{e:BregmanDiv},
\begin{align}
& \E_t \left[ D_F(P_{t+1}(A_*) , P_{t}(A_*)) \right] 
\nonumber
\\
&\hspace{0.4in} = \E_t \left[ \left( F(P_{t+1}(A_*)) \!-\! F(P_t(A_*)) 
- \frac{ F\left ( h P_{t+1}(A_*) \!+\! (1\!-\!h) P_t(A_*) \right) - F\left(P_t(A_*)\right)}{h}
\right) \right]
\nonumber
\\
& \hspace{0.4in} \overset{(a)}{\leq}  \liminf_{h \to 0+}  \left( \E_t \left[ F(P_{t+1}(A_*)) \!-\! F(P_t(A_*)) 
- \frac{ F\left ( h P_{t+1}(A_*) \!+\! (1\!-\!h) P_t(A_*) \right) - F\left(P_t(A_*)\right)}{h}
\right] \right)
\nonumber
\\
& \hspace{0.4in} = \E_t \left[ F(P_{t+1}(A_*)) \right] - F(P_t(A_*))  +   \liminf_{h \to 0+}  
\frac{ F\left(P_t(A_*)\right) - \E_t \left[ F\left ( h P_{t+1}(A_*) + (1\!-\!h) P_t(A_*)  \right) \right] }{h}
\nonumber
\\
& \hspace{0.4in} \overset{(b)}{\leq} \E_t \left[ F(P_{t+1}(A_*)) \right] - F(P_t(A_*))  +   \liminf_{h \to 0+}  
\frac{ F\left(P_t(A_*)\right) - F\left ( \E_t \left[  h P_{t+1}(A_*) + (1\!-\!h) P_t(A_*)  \right]  \right) }{h}
\nonumber
\\
& \hspace{0.4in} \overset{(c)}{=} \E_t \left[ F(P_{t+1}(A_*)) \right] - F(P_t(A_*)) 
\nonumber
\end{align}
where $(a)$ follows from Fatou's lemma, $(b)$ follows from convexity of $F$, and $(c)$ from \eqref{e:PtMartingale}.

\end{proof}

\begin{proof}[\bf Proof of Theorem~\ref{t:Rprog}]

Recalling the definition of Bayesian regret \eqref{e:regret_bayes}, we have
\begin{align}
\mathrm{Regret}(T,\pi) 
& = \E \left[\sum_{t=0}^{T-1} (R_* - R_{t+1, A_t})\right]
\nonumber
\\
& = \E \left[ \sum_{t=0}^{T-1} \left (\E_t \left[ R_* - R_{t+1, A_t} \right] \right) \right]
\nonumber
\\
& \overset{(a)}{=} \E \left[  \sum_{t=0}^{T-1} \sqrt{ \Gamma_t^F (\pi)}  \cdot \sqrt{ \Expect_t \big[ D_F \big( P_{t+1}(A_*), P_t(A_*) \big ) \big] } \, \right]
\nonumber
\\
& \overset{(b)}{\leq}  \sqrt{ \E \left[  \sum_{t=0}^{T-1}  \Gamma_t^F (\pi) \right]}  \cdot \sqrt{ \E \left[  \sum_{t=0}^{T-1}   \Expect_t \big[ D_F \big( P_{t+1}(A_*), P_t(A_*) \big ) \big] \right] } \, 
\nonumber
\\
& \overset{(c)}{\leq}  \sqrt{\barGamma^F_T(\pi)  \cdot T}  \cdot \sqrt{ \E \left[  \sum_{t=0}^{T-1}    \Expect_t \big[ F \big( P_{t+1} (A_* ) \big) \big] - F\big(P_{t} (A_*) \big) \right] } \, 
\nonumber
\\
& \overset{(d)}{=}  \sqrt{\barGamma^F_T(\pi)  \cdot T}  \cdot \sqrt{ \E \left[   F \big( P_{T} (A_* ) \big) - F\big(P_{0} (A_*) \big) \right] } \, 
\nonumber
\\
& \overset{(e)}{\leq} \sqrt{ \barGamma^F_T(\pi) \cdot {\mathrm{diam}_F(\Delta^{K-1})} \cdot T  }
\nonumber
\end{align}
where $(a)$ follows from the definition of information ratio in \eqref{e:IR}, $(b)$ follows from H\"{o}lder's inequality, $(c)$ follows from the definition of $\barGamma^F_T(\pi)$ in \eqref{e:IR_avg}, and from Lemma~\ref{t:div_bd}, $(d)$ follows from the fact that the summation in $(c)$ is telescoping, and finally, $(e)$ follows from the definition of ${\mathrm{diam}_F(\Delta^{K-1})}$ in \eqref{e:diam_F}.

\end{proof}

\begin{proof}[\bf Proof of Theorem~\ref{t:RprogExt}]

Recalling the definition of Bayesian regret \eqref{e:regret_bayes}, we have
\begin{align}
\mathrm{Regret}(T, P_*,\pi) 
& = \E \left[\sum_{t=0}^{T-1} (R_* - R_{t+1, A_t})\right]
\nonumber
\\
& = \E \left[ \sum_{t=0}^{T-1} \left (\E_t \left[ R_* - R_{t+1, A_t} \right] \right) \right]
\label{e:regret_ext_bd_proof_0}
\end{align}

From the definition of information ratio in \eqref{e:IR},
\begin{align}
\E \Big[ \Expect_t \left[ R_{*} - R_{t+1,A_t} \right] \Big]
& = \E \left[ \sqrt{ \Gamma_t^F (\pi) \cdot \Expect_t \big[ D_F \big( P_{t+1}(A_*), P_t(A_*) \big ) \big] } \, \right]
\nonumber
\\[0.5em]
& \leq \sqrt{\gamma_{FX}}  \cdot \E \left[ \sqrt{ \Expect_t \big[ D_F \big( P_{t+1}(A_*), P_t(A_*) \big ) \big] } \, \right] + \delta_t \cdot  \epsy
\label{e:regret_IR_hp_spl}
\end{align}
where we have used the following two inequalities to obtain \eqref{e:regret_IR_hp_spl}: For each $0 \leq t \leq T-1$,
\[
\begin{aligned}
\Pr \left( \Gamma_t^F (\pi)  \leq \gamma_{FX} \right) & \geq 1 - \delta_t
\\[0.75em]
\E_t\left[ R_* - R_{t+1,A_t} \right] & \leq \epsy
\end{aligned}
\]

Using \eqref{e:regret_IR_hp_spl} in \eqref{e:regret_ext_bd_proof_0}:
\begin{equation}
\begin{aligned}
\mathrm{Regret}(T,\pi) 
& \leq \sqrt{\gamma_{FX}} \cdot \E \left[ \sum_{t=0}^{T-1} \sqrt{ \Expect_t \big[ D_F \big( P_{t+1}(A_*), P_t(A_*) \big ) \big] } \right] + \epsy \cdot \sum_{t=0}^{T-1} \delta_t
\end{aligned}
\end{equation}

Applying Cauchy Schwarz 
and then applying Lemma~\ref{t:div_bd},
\begin{equation*}
\begin{aligned}
\mathrm{Regret}(T,\pi)
& \overset{(a)}{\leq} 
\sqrt{\gamma_{FX} \cdot T \cdot \E \left[ \sum_{t=0}^{T-1} \Expect_t \big[ D_F \big( P_{t+1}(A_*), P_t(A_*) \big ) \big]  \right] } + \epsy \cdot \sum_{t=0}^{T-1} \delta_t
\\
& \overset{(b)}{\leq} \sqrt{\gamma_{FX} \cdot T \cdot \E \left[ \sum_{t=0}^{T-1} \Expect_t \big[ F \big( P_{t+1} (A_* ) \big) \big] - F\big(P_{t} (A_*) \big)  \right] } +  \epsy \cdot \sum_{t=0}^{T-1} \delta_t
\\
& = \sqrt{\gamma_{FX} \cdot T \cdot \E \left[ \sum_{t=0}^{T-1} F \big( P_{t+1} (A_* ) \big) - F\big(P_{t} (A_*) \big)  \right] } +  \epsy \cdot \sum_{t=0}^{T-1} \delta_t
\\
& \overset{(c)}{\leq} \sqrt{ \gamma_{FX} \cdot T \cdot \mathrm{diam}_F(\Delta^{k-1}) } +  \epsy \cdot \sum_{t=0}^{T-1} \delta_t
\end{aligned}
\end{equation*}
where $(a)$ is an application of Cauchy Schwarz, $(b)$ follows from Lemma~\ref{t:div_bd}, and $(c)$ follows from the definition of $\mathrm{diam}_F(\cdot)$ in \eqref{e:diam_F}.

\end{proof}

\newpage

\subsection{\large Generalizing Theorem~\ref{t:RprogExt} beyond Example~\ref{ex:counter}}

Theorem~\ref{t:RprogExt} was specific to example~\ref{ex:counter}. Here, we generalize this result to propose a new template for analysis that can be used to upper bound the Bayesian regret of any policy $\pi$. Contrary to the template proposed in Section~\ref{s:lower_bound_proof_rand_pol} (and the one that is commonly used in literature), the template we propose here accounts for the temporal nature of the information ratio.

As in Section~\ref{s:lower_bound_proof_rand_pol}, we will consider the generalized definition of information ratio defined in \eqref{e:IR_alpha}:  For any potential $F$ that satisfies $\mathrm{diam}_F(\Delta^{K-1}) < \infty$, and $\alpha \in \Re^+$, the information ratio $\Gamma_t^F (\pi,\alpha)$ associated with policy $\pi$ at time-step $t$ is
\begin{equation*}
\Gamma_t^F (\pi,\alpha)
\eqdef \frac{\big( \Delta_t(\pi) - \alpha  \big)^2}{g_t^F(\pi)}
\end{equation*}

For any potential $F$ that satisfies $\mathrm{diam}_F(\Delta^{K-1}) < \infty$, $T \geq 0$, policy $\pi$, $\bfmalpha = \{\alpha_t: \alpha_t \in \Re^+\,, 0 \leq t \leq T - 1\}$, and $\bfmdelta = \{\delta_t: \delta_t \in [0,1] \,, 0 \leq t \leq T - 1 \}$, define,
\begin{align}
{\mathrm{Regret}}^{\mathrm{F X}}
(T,\pi,\bfmalpha, \bfmdelta) \eqdef
\sqrt{\gamma_{FX}(T,\pi,\bfmalpha, \bfmdelta) \cdot T \cdot  {\mathrm{diam}_F(\Delta^{K-1})}}
+ 
\sum_{t=0}^{T-1} \left( \alpha_t + \delta_t \cdot \mathrm{Reg}_{\max} \right)
\label{e:Rprog_Ext_hp_def}
\end{align}
where $\gamma_{FX}(T,\pi,\bfmalpha, \bfmdelta)$ is any deterministic constant that satisfies, for each $0 \leq t \leq T-1$,
\begin{align}
\Pr \Big( \Gamma_t^F (\alpha_t,\pi) \leq \gamma_{FX}(T,\pi,\bfmalpha, \bfmdelta) \Big) \geq 1  -  \delta_t
\label{e:IR_Ext_hp}
\end{align}
and $\mathrm{Reg}_{\max}$ is defined in \eqref{e:reg_max}:
\[
\mathrm{Reg}_{\max} = \max_{t \geq 0 \,, a \in \clA} \Delta_t(a)
\]

The proof of Theorem~\ref{t:RprogExtGen} follows along exactly the same lines as the proof of Theorem~\ref{t:Rprog} and is thus omitted.	
\begin{theorem}
\label{t:RprogExtGen}
For any policy $\pi$, $T \in \N$, $K\geq2$, $\bfmalpha = \{\alpha_t: \alpha_t \in \Re^+\,, 0 \leq t \leq T - 1\}$ and $\bfmdelta = \{\delta_t: \delta_t \in [0,1] \,, 0 \leq t \leq T - 1\}$,
\begin{align}
{\mathrm{Regret}}(T, \pi) \leq {\mathrm{Regret}}^{\mathrm{F X}}
(T,\pi,\bfmalpha, \bfmdelta)
\nonumber
\end{align}
where $F$ is convex, and satisfies $\mathrm{diam}_F(\Delta^{K-1}) < \infty$.
\end{theorem}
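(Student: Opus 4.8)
The plan is to follow the proof of Theorem~\ref{t:True_Rprog} (equivalently, the time-indexed version of Theorem~\ref{t:RprogExt}) essentially verbatim, the only change being that the slack and the confidence level are now allowed to vary with the time-step. This costs nothing structurally, because every estimate in that argument is made time-step by time-step \emph{before} any summation is performed; the net effect is merely that the constant products $\alpha\cdot T$ and $\delta\cdot\mathrm{Reg}_{\max}\cdot T$ are replaced by the sums $\sum_t\alpha_t$ and $\mathrm{Reg}_{\max}\sum_t\delta_t$. Concretely, I would first insert a per-step slack into the Bayesian regret and use the tower property to write
\[
\mathrm{Regret}(T,\pi) = \E\Big[\sum_{t=0}^{T-1}\E_t\big[R_*-\alpha_t-R_{t+1,A_t}\big]\Big] + \sum_{t=0}^{T-1}\alpha_t ,
\]
which holds for any $\bfmalpha$ since the inserted terms cancel in expectation.

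Next I would control the inner conditional expectation at each $t$ through the generalized information ratio \eqref{e:IR_alpha}. Since $\E_t[R_*-\alpha_t-R_{t+1,A_t}]=\Delta_t(\pi)-\alpha_t$ and $g_t^F(\pi)=\E_t[D_F(P_{t+1}(A_*),P_t(A_*))]$, the definition yields $\E_t[R_*-\alpha_t-R_{t+1,A_t}]\le\sqrt{\Gamma_t^F(\pi,\alpha_t)\cdot\E_t[D_F(P_{t+1}(A_*),P_t(A_*))]}$. I would then split the outer expectation according to whether the event $\{\Gamma_t^F(\pi,\alpha_t)\le\gamma_{FX}\}$ holds. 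On this event, whose probability is at least $1-\delta_t$ by \eqref{e:IR_Ext_hp}, the factor $\sqrt{\Gamma_t^F(\pi,\alpha_t)}$ is bounded by $\sqrt{\gamma_{FX}}$; on the complementary event, of probability at most $\delta_t$, I would instead apply the pathwise bound $\E_t[R_*-\alpha_t-R_{t+1,A_t}]\le\mathrm{Reg}_{\max}$. Combining the two pieces gives, for each $t$,
\[
\E\big[\E_t[R_*-\alpha_t-R_{t+1,A_t}]\big]\le\sqrt{\gamma_{FX}}\,\E\Big[\sqrt{\E_t[D_F(P_{t+1}(A_*),P_t(A_*))]}\Big]+\delta_t\cdot\mathrm{Reg}_{\max}.
\]

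Finally I would sum over $t$ and apply the Cauchy--Schwarz inequality across the $T$ time-steps to pull the square root outside the sum, producing the factor $\sqrt{T\cdot\E[\sum_t\E_t[D_F(\cdots)]]}$; Lemma~\ref{t:div_bd} then replaces each Bregman increment by $\E_t[F(P_{t+1}(A_*))]-F(P_t(A_*))$, whose sum telescopes to $\E[F(P_T(A_*))-F(P_0(A_*))]\le\mathrm{diam}_F(\Delta^{K-1})$. Reassembling with the additive remainder $\sum_t\alpha_t+\mathrm{Reg}_{\max}\sum_t\delta_t$ reproduces exactly ${\mathrm{Regret}}^{\mathrm{F X}}(T,\pi,\bfmalpha,\bfmdelta)$ from \eqref{e:Rprog_Ext_hp_def}. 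There is no serious obstacle; the only point deserving care is the low-probability event, where one must verify that $\alpha_t\ge0$ makes $R_*-\alpha_t-R_{t+1,A_t}\le\mathrm{Reg}_{\max}$ hold pathwise, so that the tail contribution is dominated uniformly by $\mathrm{Reg}_{\max}$ rather than by the (possibly large) information ratio on that event.
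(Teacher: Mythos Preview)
Your proposal is correct and matches the paper's approach: the paper omits the proof, stating that it follows along exactly the same lines as the earlier regret-via-information-ratio arguments (Theorems~\ref{t:Rprog}, \ref{t:RprogExt}, and \ref{t:True_Rprog}), and your write-up is precisely that time-indexed adaptation. One minor remark: in your final sentence you speak of a \emph{pathwise} bound $R_*-\alpha_t-R_{t+1,A_t}\le\mathrm{Reg}_{\max}$, but what you actually need (and what you correctly use in the display just above) is the bound on the \emph{conditional expectation}, $\E_t[R_*-\alpha_t-R_{t+1,A_t}]=\Delta_t(\pi)-\alpha_t\le\mathrm{Reg}_{\max}$, since $\mathrm{Reg}_{\max}$ is defined via $\Delta_t(a)$ rather than via realized rewards.
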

\qed

{\bf Comments on Theorem~\ref{t:RprogExtGen}}

Note that ${\mathrm{Regret}}^{\mathrm{F X}}$ and $\gamma_{FX}$ in \eqref{e:Rprog_Ext_hp_def} and \eqref{e:IR_Ext_hp} are strict generalizations of ${\mathrm{Regret}}^{\mathrm{F}}$ and $\gamma_F$ defined in \eqref{e:Rprog_def} and \eqref{e:IR_bound} of Section~\ref{s:lower_bound_proof_rand_pol}. That is, by letting $\delta_t \equiv \delta$ and $\alpha_t \equiv \alpha$ for each $0 \leq t \leq T-1$, the definitions coincide. 

Crucially, the analysis introduced in this section accounts for the time-varying nature of the information ratio, since, a uniform high probability upper bound such as the one in \eqref{e:IR_bound} may be too strict for analysis, but a bound such as \eqref{e:IR_Ext_hp} allows for flexibility. Therefore, an algorithm that achieves, for example, an order $\sqrt{KT}$ bound using an application of Theorem~\ref{t:RprogExtGen} need not achieve a similar bound using an application of, for example, Corollary~\ref{t:Rprog_uniform_bd} or Theorem~\ref{t:True_Rprog} that use uniform (over time) upper bounds on the information ratio in the analysis. 

This was precisely the case for Thompson sampling applied to Example~\ref{ex:counter}, for which we proved a lower bound of order $\sqrt{KT\log(K)}$ for ${\mathrm{Regret}}^{\mathrm{F}}$ in Section~\ref{s:lower_bound_proof_rand_pol}, and an upper bound of $\sqrt{KT}$ in Proposition~\ref{t:sqrtKTbd_hp} (that is an application of Theorem~\ref{t:RprogExtGen}) with $F$ the negentropy potential.

\newpage

\section{Proof of  Proposition~\ref{t:sqrtKTbd_hp}}
\label{s:sqrtKTbd_hp_proof}

To prove Proposition~\ref{t:sqrtKTbd_hp}, we only need to show \eqref{e:HprobBd_TS}, which is formalized in the following Proposition~\ref{t:hp_gamma_t}. The final bound \eqref{e:R_prog_hp_ExT_ub} then follows from Theorem~\ref{t:RprogExt}.

\begin{proposition}
For all $t \geq 1$, $K\geq 2$, $p \in (0,1)$, and $\epsy \in (0,1-p)$,
\begin{align}
\Pr \left(\Gamma_t^F (\pi^{\mathrm{TS}}) \leq {8} \right) 
& \geq 1 - \frac{1}{\epsy} \cdot \sqrt{\frac{8 K}{t}}
\label{e:Prob_Gamma_t}
\end{align}
\qed
\label{t:hp_gamma_t}
\end{proposition}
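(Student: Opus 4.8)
The plan is to express both the numerator and denominator of $\Gamma_t^F(\pi^{\mathrm{TS}})$ in terms of the posterior weights $q_a \eqdef \Pr_t(A_* = a)$, for which $\pi^{\mathrm{TS}}(a) = q_a$. First I would invoke Lemma~\ref{t:IG_Regret_Prop2_RVR} together with the marginals of Lemma~\ref{t:Post_R_largeT} (namely $\E_t[R_{t+1,a}\mid A_*=a] = p+\epsy$ and $\E_t[R_{t+1,a}] = p + \epsy q_a$) to collapse \eqref{e:Regret_Prop2_RVR} into $\Delta_t(\pi^{\mathrm{TS}}) = \epsy\,(1 - \sum_{a\in\clA} q_a^2)$. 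For the information gain, I would lower bound each Bernoulli relative entropy in \eqref{e:IG_Prop2_RVR} by Pinsker's inequality, $D_{\mathrm{KL}}(\mathrm{Ber}(x)\,\|\,\mathrm{Ber}(y)) \ge 2(x-y)^2$; summing the $a_* = a$ contribution ($2\epsy^2(1-q_a)^2$) and the $a_* \ne a$ contributions ($2\epsy^2 q_a^2$) gives $g_t^F(\pi^{\mathrm{TS}}) \ge 2\epsy^2 \sum_{a} q_a^2(1-q_a)$. Writing $S_k = \sum_a q_a^k$, this produces the key deterministic bound $\Gamma_t^F(\pi^{\mathrm{TS}}) \le (1-S_2)^2 / (2(S_2 - S_3))$.

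Next I would prove a purely algebraic implication: if $m \eqdef \max_{a} q_a \ge 1/3$ then $\Gamma_t^F(\pi^{\mathrm{TS}}) \le 8$. Keeping only the maximal coordinate gives $S_2 \ge m^2$ and $S_2 - S_3 \ge m^2(1-m)$, whence the bound above is at most $(1-m^2)^2/(2m^2(1-m)) = (1-m)(1+m)^2/(2m^2)$. This last expression is strictly decreasing in $m$ and equals $16/3 < 8$ at $m = 1/3$, so it stays below $8$ for all $m \ge 1/3$.

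Finally comes the high-probability step. On the complementary event $\{m < 1/3\}$ one has $S_2 \le m < 1/3$, so $\Delta_t(\pi^{\mathrm{TS}}) = \epsy(1 - S_2) > \tfrac23\epsy$; hence $\Pr(\Gamma_t^F(\pi^{\mathrm{TS}}) > 8) \le \Pr(\Delta_t(\pi^{\mathrm{TS}}) > \tfrac23\epsy)$. Since $\Delta_t \ge 0$, Markov's inequality bounds the right side by $\tfrac{3}{2\epsy}\,\E[\Delta_t(\pi^{\mathrm{TS}})]$. To control $\E[\Delta_t]$ I would use that each $\{q_a\}$ is a bounded martingale (see \eqref{e:PtMartingale}), so $\E[q_a^2]$ is non-decreasing in $t$ and therefore $\E[\Delta_t] = \epsy(1 - \E[S_2])$ is non-increasing; consequently $t\,\E[\Delta_t] \le \sum_{s=0}^{t-1}\E[\Delta_s] = \mathrm{Regret}(t,\pi^{\mathrm{TS}}) \le \sqrt{2Kt}$ by \eqref{e:TS_reg_bd}, giving $\E[\Delta_t] \le \sqrt{2K/t}$. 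Chaining these yields $\Pr(\Gamma_t^F(\pi^{\mathrm{TS}}) > 8) \le \tfrac{3}{2\epsy}\sqrt{2K/t} = \tfrac{1}{\epsy}\sqrt{9K/(2t)} \le \tfrac{1}{\epsy}\sqrt{8K/t}$, which is exactly \eqref{e:Prob_Gamma_t}.

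The hard part will be pinning down the constants so that the two threshold inequalities align: Pinsker must be tight enough that the cutoff $\max_a q_a \ge 1/3$ already forces the ratio below $8$ (with the comfortable slack $16/3 < 8$ above, a slightly larger cutoff would also suffice), and the monotonicity of $\E[\Delta_t]$ must be established rigorously from the martingale property before the cumulative-regret bound \eqref{e:TS_reg_bd} can be applied term by term. Everything else reduces to elementary algebra and a single application of Markov's inequality.
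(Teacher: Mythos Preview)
Your argument is correct and structurally parallel to the paper's: both compute $\Delta_t$ and lower bound $g_t^F$ via Pinsker in terms of the posterior weights, deduce a deterministic bound on $\Gamma_t^F$ in terms of $m=\max_a q_a$, and then control $\Pr(m<\text{const})$ by Markov's inequality together with the regret bound \eqref{e:TS_reg_bd}.

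The differences are in the details. First, the paper keeps only the single term $a=a_*=\argmax$ in the Pinsker bound, obtaining $g_t^F\ge 2\epsy^2 m^2(1-m)^2$ and hence $\Gamma_t^F\le (1+m)^2/(2m^2)$; you keep all terms to get $\Gamma_t^F\le (1-m)(1+m)^2/(2m^2)$, a strictly sharper bound that lets you use the threshold $m\ge 1/3$ rather than the paper's $m>1/2$. Second, and more substantively, the high-probability step is executed differently. The paper applies Markov to $\min_a\Delta_t(a)=\epsy(1-m)$ and bounds $\E[\min_a\Delta_t(a)]\le\mathrm{Regret}(t,\pi^{\mathrm{TS}})/t$ by invoking an external result (Proposition~8 of \cite{rusvan18l}). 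You instead apply Markov to $\Delta_t(\pi^{\mathrm{TS}})=\epsy(1-S_2)$ and use the submartingale property of $q_a^2$ (a direct consequence of \eqref{e:PtMartingale} and Jensen) to obtain monotonicity of $\E[\Delta_t]$, so that $t\,\E[\Delta_t]\le\sum_{s<t}\E[\Delta_s]=\mathrm{Regret}(t,\pi^{\mathrm{TS}})$. Your route is more self-contained and yields a slightly better constant ($\sqrt{9/2}$ vs.\ $\sqrt{8}$), at the cost of needing the monotonicity observation; the paper's route is shorter once the cited lemma is in hand.
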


\begin{proof}[\bf Proof of Proposition~\ref{t:sqrtKTbd_hp}]

Letting $\gamma_{FX} = 8$, and $\delta_t = \frac{1}{\epsy} \cdot \sqrt{\frac{8K}{t}}$ for each $t \geq 0$, and $\pi = \pi^{\mathrm{TS}}$, it follows from Theorem~\ref{t:RprogExt} and Proposition~\ref{t:hp_gamma_t} that
\begin{align}
{\mathrm{Regret}}(T, \pi^{\mathrm{TS}}) 
& \leq \sqrt{ \gamma_{FX} \cdot {\mathrm{diam}_F(\Delta^{K-1})} \cdot T } 
+ \epsy \cdot \sum_{t=0}^{T-1} \delta_t
\nonumber
\\
& = \sqrt{ 8 \cdot \log(K) \cdot T }
+ \sum_{t=0}^{T-1} \sqrt{\frac{8K}{t}}
\nonumber
\\
& \leq \sqrt{ 8 \cdot \log(K) \cdot T }
+ 2 \sqrt{ 8 KT}
\nonumber
\\
& \leq 3 \sqrt{ 8 KT}
\nonumber
\end{align}
\end{proof}

The rest of the section is dedicated to the proof of Proposition~\ref{t:hp_gamma_t}.

Denote:
\begin{equation}
\begin{aligned}
q_t \eqdef \max_a \, \pi^{\mathrm{TS}}(a) = \max_a \Pr_t(A_* = a) 
\label{e:posterior_large_T_eq}
\end{aligned}
\end{equation}

We will first show the following result that upper bounds $\Gamma_t^F(\pi^{\mathrm{TS}})$ as a function of $q_t$.
\begin{proposition}
For each $t \geq 0$, the following holds a.s.:
\begin{subequations}
\begin{align}
\Expect_t \left[ R_{*} - R_{t+1,A_t}   \right] 
& \leq \epsy \cdot (1-q_t^2)
\label{e:reg_bd_qt}
\\[0.5em]
g_t^F(\pi^{\mathrm{TS}})
& \geq 2 \cdot \epsy^2 \cdot q_t^2 \cdot (1-q_t)^2
\label{e:IG_bd_qt}
\\[0.5em]
\Gamma_t^F(\pi^{\mathrm{TS}}) & \leq
\frac{(1 + q_t)^2}{2 \cdot q_t^2}
\label{e:IR_bd_qt}
\end{align}
\end{subequations}
Consequently, if $\frac{1}{2} < q_t \leq 1$ for each $t$, then,
\begin{equation}
\Gamma_{t}^F (\pi^{\mathrm{TS}}) \leq 8
\end{equation}
\label{t:IR_bd_qt}
\end{proposition}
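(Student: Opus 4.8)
The plan is to reduce everything to closed-form quantities for Thompson sampling and then bound them in terms of $q_t$. Write $p_a \eqdef \Pr_t(A_* = a)$, so that $\pi^{\mathrm{TS}}(a) = p_a$, $\sum_a p_a = 1$, and $q_t = \max_a p_a$. For the regret bound \eqref{e:reg_bd_qt} I would start from \eqref{e:Regret_Prop2_RVR} of Lemma~\ref{t:IG_Regret_Prop2_RVR}. By Lemma~\ref{t:Post_R_largeT} we have $\E_t[R_{t+1,a} \mid A_* = a] = p + \epsy$ for every $a$, and $\E_t[R_{t+1,a}] = p + \epsy p_a$, so the two sums collapse to $\Delta_t(\pi^{\mathrm{TS}}) = (p + \epsy) - (p + \epsy \sum_a p_a^2) = \epsy(1 - \sum_a p_a^2)$. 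The bound \eqref{e:reg_bd_qt} then follows from the elementary fact $\sum_a p_a^2 \geq (\max_a p_a)^2 = q_t^2$.

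For the information-gain bound \eqref{e:IG_bd_qt} I would use \eqref{e:IG_Prop2_RVR}, which expresses $g_t^F(\pi^{\mathrm{TS}})$ as a nonnegative sum over pairs $(a_*,a)$. The idea is to discard all terms except the single diagonal term $a = a_* = a^\star$, where $a^\star \in \argmax_a p_a$; this term equals $q_t^2 \cdot D_{\mathrm{KL}}(\mathrm{Ber}(p+\epsy) \,\|\, \mathrm{Ber}(p + \epsy q_t))$, using $\Pr_t(R_{t+1,a^\star} = 1 \mid A_* = a^\star) = p+\epsy$ and $\Pr_t(R_{t+1,a^\star}=1) = p + \epsy q_t$ from Lemma~\ref{t:Post_R_largeT}. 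Applying Pinsker's inequality in its Bernoulli form, $D_{\mathrm{KL}}(\mathrm{Ber}(\mu)\,\|\,\mathrm{Ber}(\nu)) \geq 2(\mu - \nu)^2$, with $\mu - \nu = \epsy(1 - q_t)$, gives exactly $g_t^F(\pi^{\mathrm{TS}}) \geq 2\epsy^2 q_t^2 (1-q_t)^2$.

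Inequality \eqref{e:IR_bd_qt} is then immediate from the definition $\Gamma_t^F(\pi^{\mathrm{TS}}) = [\Delta_t(\pi^{\mathrm{TS}})]^2 / g_t^F(\pi^{\mathrm{TS}})$: factoring $1 - q_t^2 = (1-q_t)(1+q_t)$ in the numerator, the factors $\epsy^2$ and $(1-q_t)^2$ cancel against the denominator, leaving $(1+q_t)^2/(2 q_t^2)$. For the final claim, note that $q \mapsto (1+q)^2/(2q^2) = \tfrac12(1 + 1/q)^2$ is decreasing on $(0,1]$, so on the range $\tfrac12 < q_t \leq 1$ it is at most its value at $q_t = \tfrac12$, namely $\tfrac92 \leq 8$. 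I do not expect a genuine obstacle here; the only real choice is how to lower bound the Bernoulli KL term, and the point worth flagging is that keeping a single diagonal term together with Pinsker is already tight enough to yield a constant information ratio — the $(1-q_t)^2$ produced by Pinsker is precisely what cancels the squared regret factor, so no cruder estimate of $g_t^F$ would give an $O(1)$ bound.
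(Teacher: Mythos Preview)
Your proposal is correct and matches the paper's proof essentially step for step: the paper computes $\Delta_t(\pi^{\mathrm{TS}}) = \epsy(1-\sum_a p_a^2)$ via Lemmas~\ref{t:IG_Regret_Prop2_RVR} and~\ref{t:Post_R_largeT} and bounds $\sum_a p_a^2 \geq q_t^2$, then lower-bounds $g_t^F$ by applying the Pinsker-type inequality $D_{\mathrm{KL}}\geq 2(\text{mean gap})^2$ to each term of \eqref{e:IG_Prop2_RVR} and retaining only the single diagonal term $a=a_*=\argmax_a p_a$. The only cosmetic difference is that the paper applies Pinsker before dropping terms whereas you drop first and then apply Pinsker; the resulting bound and the cancellation leading to \eqref{e:IR_bd_qt} are identical.
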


\begin{proof}

Applying Lemma~\ref{t:IG_Regret_Prop2_RVR} (for Thompson sampling, we let $\pi(a) = \Pr_t(A_*=a)$ in the Lemma) and Lemma~\ref{t:Post_R_largeT}:
\begin{align}
\Expect_t \big[ R_{*} - R_{t+1,A_t}   \big]
& \overset{(a)}{=} \sum_{a \in \clA} \Pr_t(A_*\! = \!a) \left(\E_t \big[R_{t+1, a} | A_* \! = \! a \big] - \E_t \big[ R_{t+1,a} \big] \right)
\nonumber
\\
& \overset{(b)}{=} \sum_{a \in \clA} \Pr_t(A_*\! = \!a) \left(\epsy - \epsy \cdot \Pr_t(A_*\! = \!a)  \right)
\nonumber
\\
& = \epsy \cdot \left(1  - \sum_{a \in \clA} \Pr_t^2 (A_*\! = \!a)  \right)
\nonumber
\\
& \leq \epsy \cdot \left(1  - \left[ \max_a \, \Pr_t (A_*\! = \!a) \right]^2 \right)
\nonumber
\\
& = \epsy  \cdot \left(1  -  q_t^2 \right) 
\nonumber
\end{align}
where $(a)$ follows from Lemma~\ref{t:IG_Regret_Prop2_RVR} and $(b)$ follows from Lemma~\ref{t:Post_R_largeT}. We have shown \eqref{e:reg_bd_qt}.

To prove \eqref{e:IG_bd_qt}, we will use the following inequality (which is a consequence of Pinsker's) from \cite{rusvan14} (see Fact~9 and the inequality that follows immediately below on page 15):
\begin{equation}
\Expect_t \big[ R_{t+1,a} | A_* = a_* \big] - \Expect_t \big[ R_{t+1,a} \big] \leq \sqrt{\frac{1}{2} D_{{\mathrm{KL}}} \Big(P_t \big( R_{t+1, a} | A_* \! = \! a_* \big) || P_t \big( R_{t+1,a} \big) \Big)}
\label{e:fact9}
\end{equation}
Combing \eqref{e:IG_Prop2_RVR} of Lemma~\ref{t:IG_Regret_Prop2_RVR} with \eqref{e:fact9}:
\begin{align}
g_t^F(\pi^{\mathrm{TS}})
& = \sum_{a_*, a \in \clA} \Pr_t(A_*\! = \!a) \Pr_t(A_*\! = \!a_*) \Big[ D_{{\mathrm{KL}}} \Big(P_t \big( R_{t+1, a} | A_* \! = \! a_* \big) || P_t \big( R_{t+1,a} \big) \Big) \Big]
\nonumber
\\
& \geq 2 \cdot \sum_{a_*, a \in \clA} \Pr_t(A_*\! = \!a) \Pr_t(A_*\! = \!a_*) \cdot \left( \Expect_t \big[ R_{t+1,a} | A_* = a_* \big] - \Expect_t \big[ R_{t+1,a} \big]
\right)^2
\nonumber
\\
& \overset{(a)}{\geq} 2 \cdot \epsy^2 \cdot \left(  \max_a \Pr_t (A_*\! = a) \right)^2 \cdot \left( 1 - \max_a \Pr_t (A_*\! = a)
\right)^2
\nonumber
\\
& = 2 \cdot \epsy^2 \cdot q_t^2 \cdot \left( 1 - q_t \right)^2 
\nonumber
\end{align}
Where $(a)$ follows from Lemma~\ref{t:Post_R_largeT}: we have ignored all terms in the summation except the one corresponding to $a = a_*$, and $a$ being the maximizer of $\Pr_t(A_* = a)$. We have now shown \eqref{e:IG_bd_qt}.

The final inequality \eqref{e:IR_bd_qt} follows from the definition \eqref{e:IR}.

\end{proof}

Next, in the following proposition we show that for large $t$, the posterior $\Pr_t(A_* = \cdot)$ is concentrated.
\begin{proposition}
For all $t \geq 1$, $K\geq 2$, $p \in (0,1)$, and $\epsy \in (0,1-p)$,
\begin{align}
\Pr \left( q_t > \frac{1}{2} \right) 
& \geq 1 - \frac{1}{\epsy} \cdot \sqrt{\frac{8 K}{t}}
\label{e:Pr_t_lb}
\end{align}
\qed
\label{t:hp_ub_q_t}
\end{proposition}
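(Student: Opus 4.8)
The plan is to reduce the posterior-concentration statement to a bound on the expected per-step Bayesian regret of Thompson sampling, and then to control that per-step regret by combining a monotonicity argument with the cumulative bound \eqref{e:TS_reg_bd}. First I would recall the exact identity established in the proof of Proposition~\ref{t:IR_bd_qt} (the identity used to derive \eqref{e:reg_bd_qt}), namely that for Example~\ref{ex:counter},
\[
\Delta_t(\pi^{\mathrm{TS}}) = \Expect_t\!\left[ R_* - R_{t+1,A_t}\right] = \epsy\Big(1 - \sum_{a\in\clA}\Pr_t(A_*=a)^2\Big)\quad a.s.
\]
On the event $\{q_t \le 1/2\}$ one has $\sum_a \Pr_t(A_*=a)^2 \le q_t \sum_a \Pr_t(A_*=a) = q_t \le 1/2$, so that $\Delta_t(\pi^{\mathrm{TS}}) \ge \epsy/2$ there. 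Taking expectations gives $\E[\Delta_t(\pi^{\mathrm{TS}})] \ge \tfrac{\epsy}{2}\Pr(q_t \le 1/2)$, i.e.
\[
\Pr\Big(q_t \le \tfrac12\Big) \le \frac{2}{\epsy}\,\E[\Delta_t(\pi^{\mathrm{TS}})].
\]
It therefore suffices to prove $\E[\Delta_t(\pi^{\mathrm{TS}})] \le \sqrt{2K/t}$.

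The key structural observation is that, for each fixed $a$, the sequence $\{\Pr_t(A_*=a)\}_{t\ge0}$ is a martingale with respect to $\{\clF_t\}$ — this is exactly \eqref{e:PtMartingale}. Since $x\mapsto x^2$ is convex, conditional Jensen yields $\Expect_t[\Pr_{t+1}(A_*=a)^2] \ge \Pr_t(A_*=a)^2$, and summing over $a$ and taking total expectation shows that the collision probability $\E[\sum_a \Pr_t(A_*=a)^2]$ is non-decreasing in $t$. Consequently $\E[\Delta_t(\pi^{\mathrm{TS}})] = \epsy\big(1 - \E[\sum_a \Pr_t(A_*=a)^2]\big)$ is non-increasing in $t$.

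Finally I would combine monotonicity with the cumulative bound. Because Example~\ref{ex:counter} is an instance of the general formulation, \eqref{e:TS_reg_bd} gives $\sum_{s=0}^{t-1}\E[\Delta_s(\pi^{\mathrm{TS}})] = \mathrm{Regret}(t,\pi^{\mathrm{TS}}) \le \sqrt{2Kt}$, while monotonicity gives $\sum_{s=0}^{t-1}\E[\Delta_s(\pi^{\mathrm{TS}})] \ge t\,\E[\Delta_t(\pi^{\mathrm{TS}})]$; hence $\E[\Delta_t(\pi^{\mathrm{TS}})] \le \sqrt{2K/t}$. Substituting into the reduction above yields $\Pr(q_t \le 1/2) \le \frac{2}{\epsy}\sqrt{2K/t} = \frac1\epsy\sqrt{8K/t}$, which is \eqref{e:Pr_t_lb}. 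The only genuinely delicate point is the monotonicity step; once the martingale property \eqref{e:PtMartingale} is paired with convexity of the square to obtain a non-decreasing collision probability, everything else is a short chain of inequalities, and the off-by-one in the summation index only strengthens the conclusion.
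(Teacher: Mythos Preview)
Your proof is correct but follows a genuinely different route from the paper's.  The paper works with $\min_a \Delta_t(a)$ rather than $\Delta_t(\pi^{\mathrm{TS}})$: it invokes Lemma~\ref{t:max_a_Pt} to get the exact identity $\min_a \Delta_t(a)=\epsy(1-q_t)$, then quotes Proposition~8 of \cite{rusvan18l} (stated here as Lemma~\ref{t:best_arm_russo_van_roy_18}) to obtain $\E[\min_a\Delta_t(a)]\le \mathrm{Regret}(t,\pi^{\mathrm{TS}})/t$, and finishes with Markov's inequality on $1-q_t$.  You instead bound $\Pr(q_t\le 1/2)$ by $\tfrac{2}{\epsy}\E[\Delta_t(\pi^{\mathrm{TS}})]$ via the collision-probability inequality $\sum_a\Pr_t(A_*=a)^2\le q_t$, and then control $\E[\Delta_t(\pi^{\mathrm{TS}})]$ by proving it is non-increasing in $t$ (martingale \eqref{e:PtMartingale} plus convexity of the square) and combining with the cumulative bound \eqref{e:TS_reg_bd}.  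The trade-off is that the paper's argument is slightly shorter once the external lemma is granted, whereas yours is entirely self-contained --- the monotonicity step you isolate is precisely what replaces the black-box citation of \cite{rusvan18l}, and it is arguably the cleaner way to see why the per-step regret is $O(\sqrt{K/t})$ in this example.
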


\subsection*{Proof of Proposition~\ref{t:hp_ub_q_t}}

Recall the definition of $\Delta_t$: for each $a \in \clA$, and $t \geq 0$,
$$
\Delta_t(a) = \E_t \left[ R_* - R_{t+1, a} \right]
$$
The following result is a special case of Proposition~8 of \cite{rusvan18l}:
\begin{lemma}[Proposition~8 of \cite{rusvan18l}]
For each $T\geq 1$, suppose the actions $\{A_t: 0 \leq t \leq T-1\}$ are selected according to $\Pr_t(A_t = a) = \Pr_t(A_* = a)$, then, 
\[
\Expect \left[ \min_a \Delta_T(a) \right] \leq \frac{{\mathrm{Regret}}(T,\pi^{\mathrm{TS}})}{T}
\]
\label{t:best_arm_russo_van_roy_18}
\end{lemma}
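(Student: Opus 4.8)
The plan is to exploit two structural facts about Thompson sampling: that the probability-matched per-step regret dominates the minimal gap, and that a suitable expected quantity is monotone in time. First I would decompose the cumulative regret via the tower property. Writing $\Delta_t(\pi^{\mathrm{TS}}) = \E_t[R_* - R_{t+1,A_t}]$ and using that, under the policy, $A_t$ is drawn from $\pi^{\mathrm{TS}}$ independently of $R_{t+1}$ given $H_t$, one gets
\[
\frac{\mathrm{Regret}(T,\pi^{\mathrm{TS}})}{T} = \frac{1}{T}\sum_{t=0}^{T-1} \E\big[\Delta_t(\pi^{\mathrm{TS}})\big].
\]
Since a weighted average never falls below the minimum, $\min_a \Delta_t(a) \le \Delta_t(\pi^{\mathrm{TS}})$ holds pointwise, so $\E[\min_a \Delta_t(a)] \le \E[\Delta_t(\pi^{\mathrm{TS}})]$ for each $t$. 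It therefore suffices to prove that $m(t) := \E[\min_a \Delta_t(a)]$ is non-increasing in $t$: granting this, each term satisfies $\E[\Delta_t(\pi^{\mathrm{TS}})] \ge m(t) \ge m(T) = \E[\min_a \Delta_T(a)]$ for $t \le T-1$, and averaging over $t$ yields the claimed bound.

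The heart of the argument is the monotonicity of $m(t)$. Here I would introduce $\mu_a := \int r_a\, p_*(dr)$, so that $\E_t[R_{t+1,a}] = \E_t[\mu_a]$ (rewards are conditionally i.i.d.\ given $p_*$, by the setup in Section~\ref{s:setup}) and $R_* = \max_a \mu_a$. Consequently $\Delta_t(a) = \E_t[\max_b \mu_b] - \E_t[\mu_a]$, and since the leading term is independent of $a$,
\[
\min_a \Delta_t(a) = \E_t\big[\max_a \mu_a\big] - \max_a \E_t[\mu_a].
\]
Taking expectations and applying the tower property to the first term,
\[
m(t) = \E\big[\max_a \mu_a\big] - \E\big[\max_a \E_t[\mu_a]\big].
\]

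The first term is constant in $t$, so it remains only to show that $\E[\max_a \E_t[\mu_a]]$ is non-decreasing. For each fixed $a$, the conditional mean $\E_t[\mu_a] = \E[\mu_a \mid H_t]$ is a Doob martingale, and $x \mapsto \max_a x_a$ is convex; by conditional Jensen, $\max_a \E_t[\mu_a]$ is a submartingale, whose expectation is therefore non-decreasing in $t$. This establishes that $m(t)$ is non-increasing and completes the proof. The main obstacle is identifying the correct monotone object: the per-step regret $\E[\Delta_t(\pi^{\mathrm{TS}})]$ is awkward to analyze directly, whereas $m(t)$ collapses, through the Jensen-gap representation above, to the submartingale $\max_a \E_t[\mu_a]$, after which monotonicity is immediate. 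The only steps needing care are the conditional-independence justifications ($\E_t[R_{t+1,a}] = \E_t[\mu_a]$ and the tower-property decomposition of the regret), both of which follow directly from the i.i.d.-given-$p_*$ structure.
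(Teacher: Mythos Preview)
Your argument is correct. The paper does not give its own proof of this lemma; it simply cites it as Proposition~8 of \cite{rusvan18l} and uses the conclusion directly. So there is nothing to compare against in the paper proper.

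That said, it is worth recording that your route is essentially the standard one for this kind of statement and matches what the cited reference does: decompose the regret as a time average of $\E[\Delta_t(\pi^{\mathrm{TS}})]$, lower-bound each term by $\E[\min_a \Delta_t(a)]$, and then show this latter quantity is non-increasing in $t$. Your key observation---writing
\[
\min_a \Delta_t(a) \;=\; \E_t\big[\max_a \mu_a\big] - \max_a \E_t[\mu_a]
\]
and recognizing the second term as a submartingale (convex function of the Doob martingales $\E_t[\mu_a]$)---is exactly the right mechanism. The only places that require care, as you flag, are the tower-property steps and the fact that $\E_t[R_{t+1,a}] = \E_t[\mu_a]$; both follow from the conditional i.i.d.\ structure of the rewards given $p_*$ and are unproblematic here.
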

First, note that the quantity on the left hand side depends on the particular policy (in this case, Thompson sampling), since the posterior that affects $\Delta_T(a)$ is a function of the past actions that are chosen according to the policy. Lemma~\ref{t:best_arm_russo_van_roy_18} says that, if we can bound the worst case regret (over all priors) for Thompson sampling, we can bound $\Expect \left[ \min_a \Delta_T(a) \right]$. And since we know from \cite{latsze19} that for any distribution $P_*$ on $\clP$,
\begin{equation*}
{\mathrm{Regret}}(T, \pi^{\mathrm{TS}}) \leq \sqrt{2 K T}
\end{equation*}
we have:
\begin{equation}
\Expect \left[ \min_a \Delta_T(a) \right] \leq \sqrt{\frac{2K}{T}}
\label{e:best_arm_bd_TS}
\end{equation}

Lemma~\ref{t:max_a_Pt} just follows from definitions.
\begin{lemma} For each $t \geq 0$,
\begin{equation}
\min_a \Delta_t(a) = \epsy \cdot \left(1 - \max_a \Pr_t(A_* = a) \right)
\end{equation}
\label{t:max_a_Pt}
\end{lemma}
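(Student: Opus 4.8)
The plan is to compute $\Delta_t(a)$ in closed form for the counter example of Example~\ref{ex:counter} and then minimize over $a$. The first step I would take is to observe that the optimal mean reward is the same under \emph{every} environment in the support of $P_*$: under $p_{*,i}$ the best arm (arm $i$) has mean $p+\epsy$, so $R_* = p + \epsy$ is a deterministic constant. Consequently $\E_t[R_*] = p + \epsy$ almost surely, regardless of the history $H_t$. I expect this to be the only point worth stating carefully; it is really the crux, since once $R_*$ is known to be constant the rest is bookkeeping.

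Next I would invoke the marginal computation already established in Lemma~\ref{t:Post_R_largeT}, in particular \eqref{e:Post_R_marginals_1}, which gives $\E_t[R_{t+1,a}] = \Pr_t(R_{t+1,a}=1) = p + \epsy\cdot\Pr_t(A_*=a)$ (using that the reward is Bernoulli, so its conditional mean equals its probability of being $1$). Subtracting the two expressions yields, for every $a \in \clA$,
\[
\Delta_t(a) = \E_t[R_* - R_{t+1,a}] = (p+\epsy) - \bigl(p + \epsy\cdot\Pr_t(A_*=a)\bigr) = \epsy\,\bigl(1 - \Pr_t(A_*=a)\bigr).
\]

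Finally, since $\epsy > 0$, the map $x \mapsto \epsy(1-x)$ is strictly decreasing in $x$, so minimizing $\Delta_t(a)$ over $a$ is equivalent to maximizing $\Pr_t(A_*=a)$ over $a$. This gives
\[
\min_a \Delta_t(a) = \epsy\,\bigl(1 - \max_a \Pr_t(A_*=a)\bigr),
\]
which is exactly the claimed identity. As anticipated, there is no genuine obstacle here: the statement follows directly from the definition of $\Delta_t$ together with Lemma~\ref{t:Post_R_largeT}, and the only substantive observation is that $R_*$ is deterministic in this family of problems.
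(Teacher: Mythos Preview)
Your proposal is correct and follows essentially the same argument as the paper: compute $\Delta_t(a)=\epsy\bigl(1-\Pr_t(A_*=a)\bigr)$ using Lemma~\ref{t:Post_R_largeT} (specifically \eqref{e:Post_R_marginals_1}) together with the fact that $R_*=p+\epsy$ is deterministic, and then minimize over $a$. The paper's proof is terser but identical in substance; your explicit remarks that $R_*$ is constant and that $\epsy>0$ makes the minimization equivalent to maximizing the posterior are reasonable clarifications of steps the paper leaves implicit.
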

\begin{proof}
We just need to observe that
\[
\begin{aligned}
\Delta_t(a) 
& = \E_t \left[ R_* - R_{t+1, a} \right]
\\
& = p + \epsy - p - \epsy \cdot \Pr_t(A_* = a)
\\
& = \epsy \cdot \left(1 - \Pr_t(A_* = a) \right)
\end{aligned}
\]
where the second equality follows from Lemma~\ref{t:Post_R_largeT}.
\end{proof}

We are now ready to give the proof of Proposition~\ref{t:hp_ub_q_t}.
\begin{proof}[\bf Proof of Proposition~\ref{t:hp_ub_q_t}]
Note that:
\begin{equation}
\begin{aligned}
\min_a \Pr_t(A_* \neq a) 
& = 1 - \max_a \Pr_t(A_* = a)
\\
& = \frac{1}{\epsy} \left[ \min_a  \Delta_t(a) \right]
\end{aligned}
\end{equation}
where the second equality follows from Lemma~\ref{t:max_a_Pt}.

Taking expectations on both sides, and then applying \eqref{e:best_arm_bd_TS},
\begin{align}
\Expect \left[ \min_a \Pr_t(A_* \neq a) \right] 
& = \frac{1}{\epsy} \cdot \Expect \left[ \min_a \Delta_t(a) \right]
\nonumber
\\
& \leq \frac{1}{\epsy} \cdot \sqrt{\frac{2K}{t}}
\end{align}
Applying Markov's inequality, we have:
\begin{align}
\Pr \left( \min_a \Pr_t(A_* \neq a) \geq \frac{1}{2} \right) \leq \frac{1}{\epsy} \sqrt{\frac{8K}{t}}
\end{align}
This implies the required bound \eqref{e:Pr_t_lb}:
\begin{align}
\Pr \left( \min_a \Pr_t(A_* \neq a) \geq \frac{1}{2} \right) 
& \leq \frac{1}{\epsy} \sqrt{\frac{8K}{t}}
\nonumber
\\
\implies \Pr \left( \max_a \Pr_t(A_* = a) \leq \frac{1}{2} \right) 
& \leq \frac{1}{\epsy} \sqrt{\frac{8K}{t}}
\nonumber
\\
\implies \Pr \left( \max_a \Pr_t(A_* = a) > \frac{1}{2} \right) 
& \geq 1 - \frac{1}{\epsy} \sqrt{\frac{8K}{t}}
\end{align}

\end{proof}

\begin{proof}[\bf Proof of Proposition~\ref{t:hp_gamma_t}]
It follows directly by combining Proposition~\ref{t:IR_bd_qt} and Proposition~\ref{t:hp_ub_q_t}.
\end{proof}

\newpage

\section{Details of Experimental Results and Additional Experimental Results}
\label{s:exp_details}

\subsection{Implementation Details for Numerical Results in Section~\ref{s:sim_counter}}

Here we give details on implementation of Thompson sampling, Shannon-IDS and Tsallis-IDS applied to Example~\ref{ex:counter}.

For this example, for each $ t \geq 0$, given $H_t = \{A_s, R_{s+1, A_s}:s = 0,1,2,\ldots,t-1\}$, we can compute the posterior distribution on the optimal action:
\begin{equation}
\Pr_t(A_* = a) = \frac{(1+\epsilon/p)^{s_a(t)} (1-\epsilon/(1-p))^{f_a(t)} }{\sum_{a' \in \clA} (1+\epsilon/p)^{s_{a'}(t)} (1-\epsilon/(1-p))^{f_{a'}(t)}} 
\label{e:Pt_counter}
\end{equation}
where,
\[
\begin{aligned}
s_a(t) & = \sum_{s=1}^{t-1} R_{s+1,A_s} \cdot \ind\{A_s = a\} 
\\
f_a(t) & = \sum_{s=1}^{t-1} (1-R_{s+1,A_s}) \cdot \ind\{A_s = a\}
\end{aligned}
\]
are the total number of $1$'s and $0$'s observed from arm $a$ at time $t-1$.
From \eqref{e:Pt_counter}, we can implement each of the algorithms as described below.
\subsubsection{Thompson Sampling}

At each iteration $t\geq0$, Thompson sampling simply chooses action $a$ with probability $\pi^{\mathrm{TS}}(a)$, where $
\pi^{\mathrm{TS}}(a) = \Pr_t(A_* = a)
$.

\subsubsection{Shannon-IDS}

It follows from Lemma~\ref{t:IG_Regret_Prop2_RVR} that at each time-step, the numerator and denominator of the information ratio $\Gamma_t^F(\pi)$ defined in \eqref{e:IR} can be computed using:
\begin{align}
g_t^F(\pi)
& = \sum_{a_*, a \in \clA} \pi(a) \Pr_t(A_* = a_*) \Big[ D_{{\mathrm{KL}}} \Big(P_t \big( R_{t+1, a} | A_* =  a_* \big) || P_t \big( R_{t+1,a} \big) \Big) \Big]
\label{e:IG_Prop2_RVR_counter}
\\
\Delta_t(\pi)
& = \sum_{a \in \clA} \Pr_t(A_* = a) \E_t \big[R_{t+1, a} | A_* \! = \! a \big] -  \sum_{a \in \clA}  \pi(a) \E_t \big[ R_{t+1,a} \big]
\label{e:Regret_Prop2_RVR_counter}
\end{align}

In \eqref{e:IG_Prop2_RVR_counter}, the KL divergence has the following closed form (see \eqref{e:KL_counter}, and the proof of Proposition~\ref{t:IR_counter} for the derivation):
\begin{align}
& D_{{\mathrm{KL}}} \Big(P_t \big( R_{t+1, a} | A_* \! = \! a_* \big) || P_t \big( R_{t+1,a} \big) \Big)
\nonumber
\\
& \hspace{0.8in} \! = \!
\begin{cases}
\displaystyle (1 \! - \! p \! - \! \epsy) \log \left(\frac{1 - p - \epsy}{1 - p - \epsy \cdot \Pr_t(A_* = a)} \right) + (p \! + \! \epsy) \log \left( \frac{p + \epsy}{p + \epsy \cdot \Pr_t(A_* = a)} \right) & \,\,\, \text{If} \,\, a = a_*
\\[1.4em]
\displaystyle (1 \! - p) \log \left(\frac{1 - p}{1 - p - \epsy \cdot \Pr_t(A_* = a)} \right) + p \log \left(\frac{p}{p + \epsy \cdot \Pr_t(A_* = a)}\right) & \,\,\, \text{If} \,\, a \neq a_*
\end{cases}
\label{e:KL_counter_counter}
\end{align}

Similarly, the right hand side of \eqref{e:Regret_Prop2_RVR_counter} can be evaluated for the counter example as (once again, see proof of Proposition~\ref{t:IR_counter} for the derivation):
\[
\Delta_t(\pi) = \epsy -  \sum_{a \in \clA} \pi(a) \cdot \epsy \cdot \Pr_t(A_* = a)
\] 

From the above calculations, at each iteration $t \geq 0$, the Shannon-IDS agent chooses action $a$ with probability $\pi^{\mathrm{NDS}}$:
\begin{equation}
\pi^{\mathrm{NDS}} \in \argmin_{\pi} \,\,  \Gamma_t^F(\pi) \equiv  \argmin_{\pi} \,\,  \frac{ \big[ \Delta_t(\pi)  \big] ^2}{g_t^F(\pi)}
\label{e:Shannon_IDS_Ex}
\end{equation}
In our implementation, we use the fact that it is sufficient to search over all two-action support policies to solve \eqref{e:Shannon_IDS_Ex} (see \cite{rusvan18l}, in particular Algorithm~3).

\subsubsection{Tsallis-IDS}

The Tsallis information gain can be computed using the following expressions: $g_t^F(\pi) = \sum_a \pi(a) g_t^F(a)$, where
\begin{align}
& g_t^F(a)
= \sum_{a_* \in \clA} \sqrt{\Pr_t(A_* = a_*)} \Bigg( \left(\sqrt{\Pr_t \big( R_{t+1, a} = 0  \big)} -  \sqrt{\Pr_t \big( R_{t+1, a} = 0 | A_* =  a_* \big) } \right)^2 
\nonumber
\\
& \qquad  \qquad \qquad \qquad \qquad \qquad \qquad \qquad \qquad \qquad + \left(\sqrt{\Pr_t \big( R_{t+1, a} = 1  \big)} -  \sqrt{\Pr_t \big( R_{t+1, a} = 1 | A_* =  a_* \big) } \right)^2 \Bigg)
\label{e:IG_Tsallis_counter}
\\
&\hspace{-0.25in} =
\begin{cases}
\displaystyle \sum_{a_* \in \clA} \sqrt{\Pr_t(A_* = a_*)} \Big( \left(\sqrt{1 \!-\! p \!-\! \epsy \Pr_t(A_* = a)} -  \sqrt{1\!-\!p\!-\!\epsy} \right)^2  + \left(\sqrt{p \!+\! \epsy \Pr_t(A_* = a)} -  \sqrt{p \!+\! \epsy} \right)^2 \Big) &  a = a_*
\label{e:IG_Tsallis_counter_simplified}
\\
\displaystyle \sum_{a_* \in \clA} \sqrt{\Pr_t(A_* = a_*)} \Big( \left(\sqrt{1 \!-\! p \!-\! \epsy \Pr_t(A_* = a)} -  \sqrt{1\!-\!p} \right)^2  + \left(\sqrt{p \!+\! \epsy \Pr_t(A_* = a)} -  \sqrt{p} \right)^2 \Big) &  a \neq a_*
\end{cases}
\end{align}
The derivation of \eqref{e:IG_Tsallis_counter} follows from the definition of $g_t^F$, and can be found in \cite{latsze19} (see proof of Theorem~7 in Appendix B). The right hand side of \eqref{e:IG_Tsallis_counter} can be simplified to \eqref{e:IG_Tsallis_counter_simplified} using Lemma~\ref{t:Post_R_largeT}. At each iteration $t \geq 0$, computing $g_t^F(\pi)$ from \eqref{e:IG_Tsallis_counter_simplified} is straightforward using \eqref{e:Pt_counter}.

From the above calculations, at each iteration $t \geq 0$, the Tsallis-IDS agent chooses action $a$ with probability $\pi^{\mathrm{TDS}}$:
\begin{equation}
\pi^{\mathrm{TDS}} \in \argmin_{\pi} \,\,  \frac{ \big[ \Delta_t(\pi)  \big] ^2}{g_t^F(\pi)}
\label{e:Tsallis_IDS_Ex}
\end{equation}

Once again, it is sufficient to search over all two-action support policies to solve \eqref{e:Tsallis_IDS_Ex}.

\subsection{Implementation Details for Beta Bernoulli Bandits in Section~\ref{s:sim_beta_ber}}

Here we give details of the implementation for the results in Section~\ref{s:sim_beta_ber}. 

In the Beta-Bernoulli setting, at time $0$, the mean $\theta_a$ of each arm $a \in \clA$ is assumed to be independent and beta-distributed with prior parameters $(\beta_{0,a}^1, \beta_{0,a}^2)$. In our experiments, we let $(\beta_{0,a}^1, \beta_{0,a}^2) = (1,1)$. 

For $t\geq 0$, after taking action $A_t$ and observing $R_{t+1,A_t} \in \{0,1\}$, the posterior parameters $(\beta_{t+1,a}^1, \beta_{t+1,a}^2)$ can be computed using
\[
\begin{aligned}
\beta_{t+1,a}^1 & = \beta_{t,a}^1 + R_{t+1,A_t} \cdot \ind_{A_t = a}
\\
\beta_{t+1,a}^2 & = \beta_{t,a}^2 + (1 - R_{t+1,A_t}) \cdot  \ind_{A_t = a}
\end{aligned}
\] 

\subsubsection{Thompson Sampling}

At time-step $t$, the Thompson sampling agent samples $\theta_a' \sim (\beta_{t,a}^1, \beta_{t,a}^2)$ and chooses action $A_t = a_*$ where:
$$
a_* = \argmax_a \,\, \theta_a'.
$$

\subsubsection{Shannon-IDS}

At each time-step, the Shannon-IDS agent computes the information ratio according to the definition in \eqref{e:IR}. For the Beta-Bernoulli problem, Algorithm~2 of \cite{rusvan18l} (on page 12) can be used to compute the numerator and denominator of the information ratio. The algorithm takes as input the current beta parameters $(\beta_{t,a}^1, \beta_{t,a}^2)$.

As discussed in \cite{rusvan18l}, the algorithm can not readily be implemented on a computer because several steps of the algorithm involves computing integrals of continuous functions. In our implementation, we approximate the integrals using summations via discretization. Once the information ratio is computed, Algorithm~3 of \cite{rusvan18l} can be used to obtain the policy that minimizes the information ratio.

\subsubsection{Tsallis-IDS}

The Tsallis-IDS algorithm follows along the same lines as the Shannon-IDS algorithm, except for a modification of the information gain computation step. Specifically, to compute the information ratio, we use Algorithm~2 of \cite{rusvan18l}, by replacing line 10 of the algorithm with:
\[
\vec{g}_a \leftarrow \sum_{a'} \sqrt{p^*(a')} \cdot \left( \left(\sqrt{\frac{\beta_a^1}{\beta_a^1 + \beta_a^2}} - \sqrt{M_{a|a'}} \right)^2 + \left(\sqrt{\frac{\beta_a^2}{\beta_a^1 + \beta_a^2}} - \sqrt{1 - M_{a|a'}} \right)^2 \right)
\]
The rest of the steps are identical to Shannon-IDS.

\subsection{Additional Experimental Results for Example~\ref{ex:counter}}

For a given potential $F$, and a policy $\pi$, define 
\begin{equation}
\widehat{\Gamma}^F_t (\pi)
\eqdef \frac{\Big( \Expect \big[ \Delta_t(\pi) \big] \Big)^2}{\E[g_t^F(\pi)]}
\label{e:ExpIR}
\end{equation}
Contrary to $\Gamma_t^F$ defined in \eqref{e:IR}, $\widehat{\Gamma}^F_t$ is \emph{not} a random variable, since we are taking expectation over all possible histories in both the numerator and denominator. By a simple modification of the proof of Theorem~\ref{t:Rprog}, it is not difficult to show that (see for example \cite{rusvan18s,donvan18} that consider the special case of Shannon information ratio), for any policy $\pi$,
\begin{equation}
\begin{aligned}
{\mathrm{Regret}}(T,\pi) \! & \leq \! \sqrt{ \widehat{\barGamma}_T(\pi)  \! \cdot \! T \! \cdot \! {\mathrm{diam}_F(\!\Delta^{K-1}\!)}}
\\
\widehat{\barGamma}_T(\pi) & = \frac{1}{T}\sum_{t=0}^{T-1} \widehat{\Gamma}^F_t (\pi) 
\label{e:RegExpIR}
\end{aligned}
\end{equation}

\begin{figure}[ht]
\centering
\includegraphics[trim={2.5cm 6.1cm 2.5cm 6.77cm}, width=0.8\textwidth]{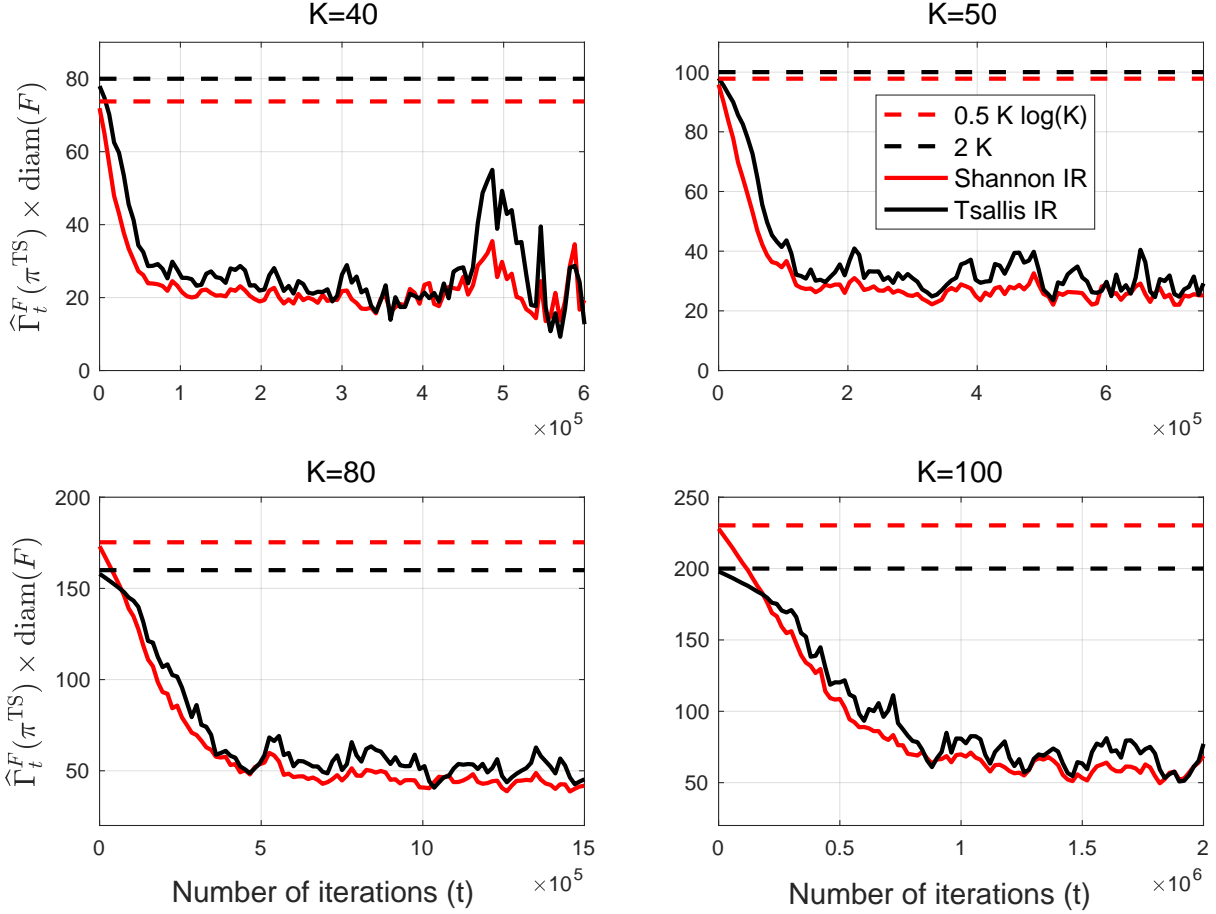}
\vspace{-0.1in}
\caption{Estimate of $\widehat{\Gamma}^F_t$ plotted as a function of time for Thompson sampling applied to the counter example of Section~\ref{s:counter}.}
\label{fig:counter_nrv_info_ratio_TS}
\vspace{-0.1in}
\end{figure}

\begin{figure}[ht]
\centering
\begin{center}
\vspace{-0.1in}
\hbox{\includegraphics[trim={0cm 6.1cm 2.5cm 6cm}, width=0.8\textwidth]{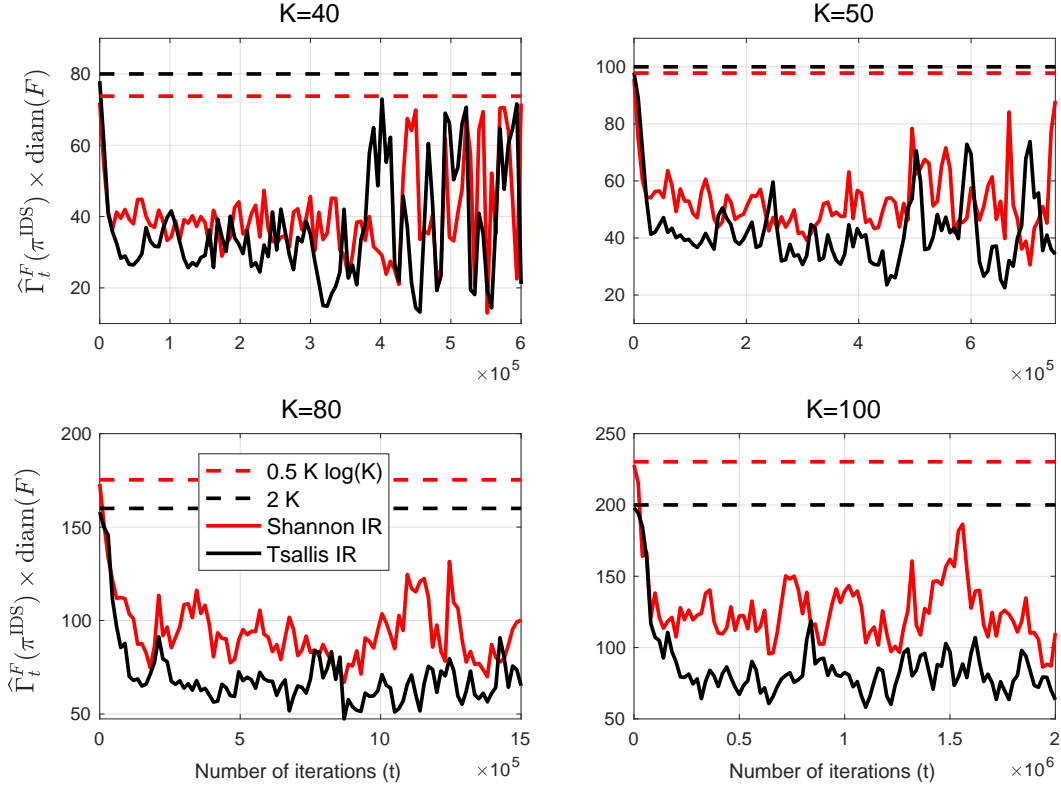}
}
\vspace{-0.1in}
\caption{Estimate of $\widehat{\Gamma}^F_t(\pi^{\mathrm{IDS}})$ plotted as a function of time for Shannon-IDS and Tsallis-IDS applied to the counter example~\ref{ex:counter}.}
\label{fig:counter_nrv_info_ratio_IDS}
\vspace{-0.1in}
\end{center}
\end{figure}

In Figure~\ref{fig:counter_nrv_info_ratio_TS} we plot the estimate of the scaled information ratio $\widehat{\Gamma}^F_t (\pi^{\mathrm{TS}}) \times \mathrm{diam}_F(\Delta^{K-1})$ as a function $t$, for each of the two potential functions: negentropy and $1/2$-Tsallis entropy. The exact expectations in both the numerator and denominator of $\widehat{\Gamma}^F_t (\pi)$ in \eqref{e:ExpIR} was replaced by the empirical averages obtained using the $N$ sample paths. 

It is interesting to see that as $t$ gets large, the information ratio corresponding to both potentials quickly converge, even though they have a noticeable difference at $t=0$. This is especially true when $K$ is large, in which case it is known that the initial difference between the scaled information ratios is large. The dashed lines indicate the worst case bounds on $\widehat{\Gamma}^F_t (\pi) \times \mathrm{diam}_F(\Delta^{K-1})$ for the two potentials.

In Figure~\ref{fig:counter_nrv_info_ratio_IDS} we plot estimate of $\widehat{\Gamma}^F_t (\pi^{\mathrm{IDS}}) \times \mathrm{diam}_F(\Delta^{K-1})$ for the two IDS algorithms. Different from Thompson sampling, we notice that the information ratio decreases more drastically, and after reaching a certain threshold, it seems to stabilize. For $K=40$, we notice that for $t \geq 4 \times 10^5$, there's a lot of chattering of the information ratios. We conjecture that this is the region where the algorithms have identified the optimal arm, and the information gain and the instantaneous regret, both are near zero.

\subsection{Additional Experimental Results for Beta Bernoulli}

\begin{figure}[htbp]
\centering
\begin{center}
\includegraphics[trim={2.5cm 9cm 2.5cm 9.4cm}, width=0.9\textwidth]{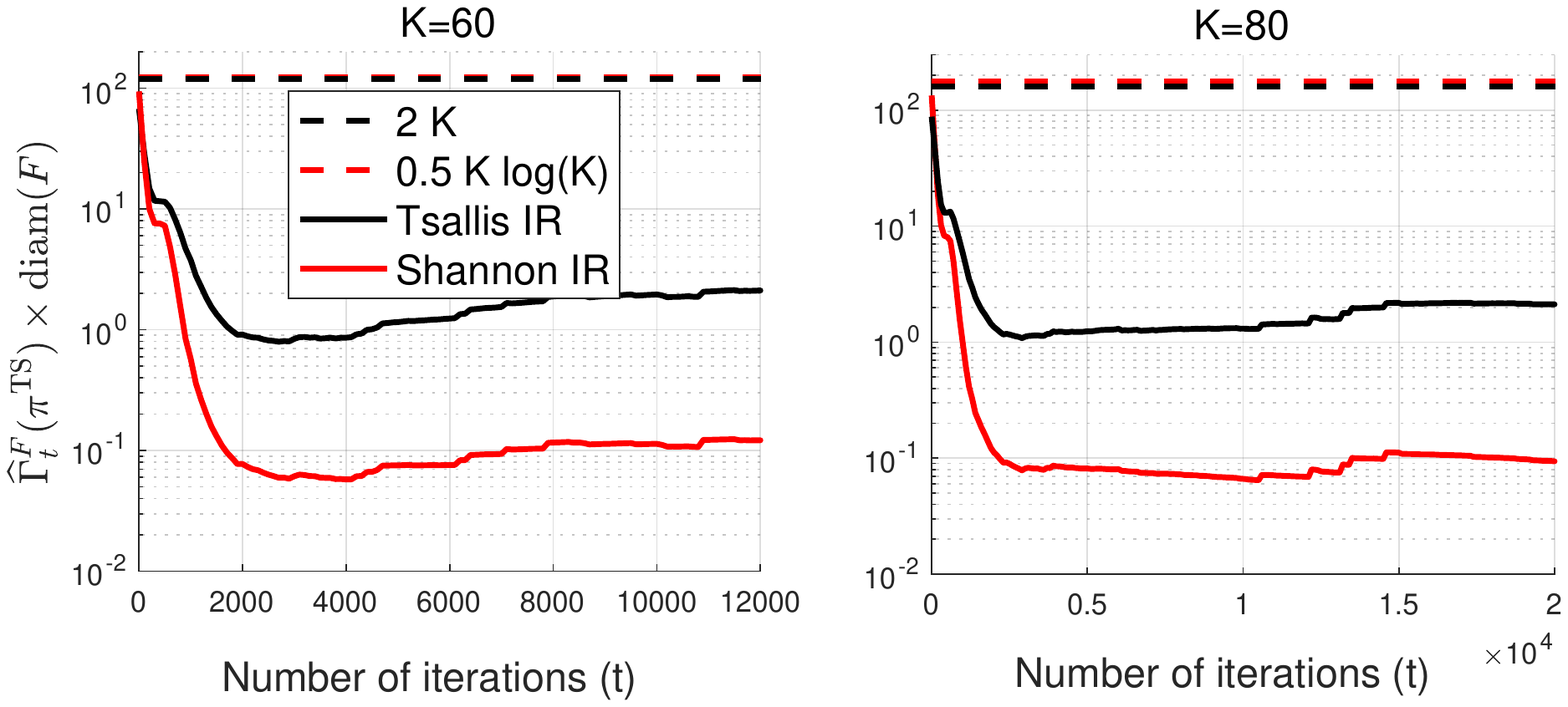}
\caption{Estimate of $\widehat{\Gamma}_t^F(\pi^{\mathrm{TS}})$ as a function of time, for $K=60$ and $K=80$ Beta-Bernoulli.}
\label{fig:beta_bernoulli_non_rv_info_ratio_TS}
\end{center}
\end{figure}

\begin{figure}[htbp]
\centering
\begin{center}
\vspace{-0.1in}
{\includegraphics[trim={2.5cm 9cm 2.5cm 9.4cm}, width=0.9\textwidth]{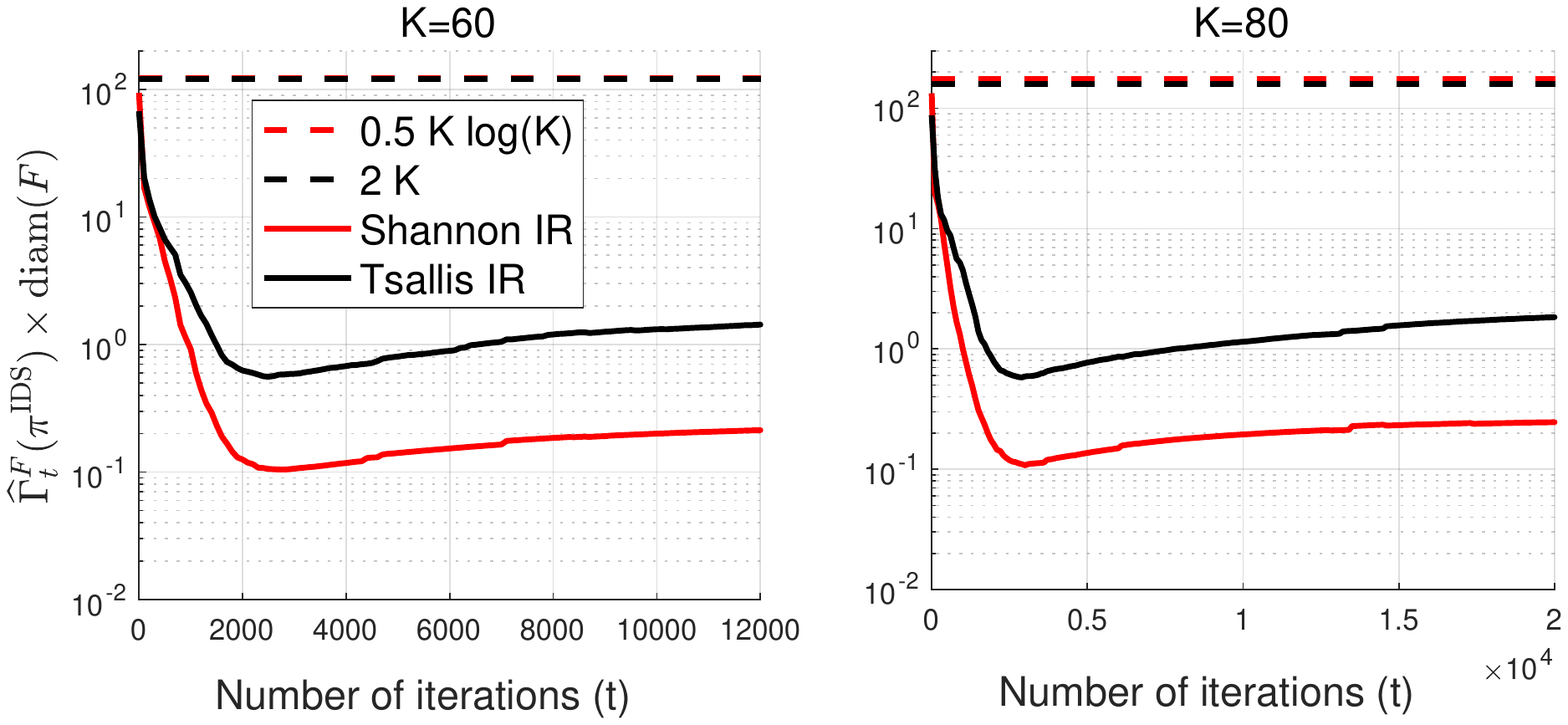}
}
\vspace{-0.1in}
\caption{Empirical average and $2 \sigma$ confidence intervals of $\widehat{\Gamma}_t^F(\pi^{\mathrm{IDS}})$ as a function of time, for $K=60$ and $K=80$ Beta-Bernoulli.}
\label{fig:beta_bernoulli_non_rv_info_ratio_IDS}
\end{center}
\end{figure}

In Figures~\ref{fig:beta_bernoulli_non_rv_info_ratio_TS} and~\ref{fig:beta_bernoulli_non_rv_info_ratio_IDS} we plot scaled $\widehat{\Gamma}^F_t$ for Thompson sampling and IDS algorithms as  a function of time. We observe that the information ratios decrease much more quickly in this experiment, compared to the counter example of Section~\ref{s:counter}. More importantly, we notice that the scaled information ratio for negentropy is consistently lower than the scaled information ratio for the $1/2$-Tsallis entropy, despite the worst case bound being larger for both $K=60$ and $K=80$.

\end{document}